\def\eqref#1{equation~\ref{#1}}
\def\1{\bm{1}}
\DeclareMathAlphabet{\mathsfit}{\encodingdefault}{\sfdefault}{m}{sl}
\SetMathAlphabet{\mathsfit}{bold}{\encodingdefault}{\sfdefault}{bx}{n}
\DeclareMathOperator*{\argmax}{arg\,max}
\newtheorem*{theorem*}{Theorem}
\newtheorem{definition}{Definition}
\newtheorem{proposition}{Proposition}
\newtheorem*{proposition*}{Proposition}
\newtheorem*{Lemma*}{Lemma}
\newcommand{\one}{{\bf 1}}
\newcommand{\subject}{subject\xspace}
\newcommand{\subjects}{subjects\xspace}
\newcommand{\activity}{activity\xspace}
\newcommand{\tm}{t_\text{m}}
\newcommand{\tmax}{t_\text{max}}
\newcommand{\timesincelast}{\chi}
\newcommand{\eventmarks}{M}
\newcommand{\rateparameter}{\xi}
\newcommand{\method}{DeepCLife\xspace}
\newcommand{\semisup}{SSC-Bair\xspace}
\newcommand{\ssc}{SSC-Gaynor\xspace}
\newcommand{\kuiperub}{\method-KuiperUB\xspace}
\newcommand{\mmd}{\method-MMD\xspace}
\newcommand{\deephitgmm}{DeepHit{+}GMM\xspace}
\newcommand{\cD}{\mathcal{D}}
\newcommand{\cK}{\mathcal{K}}
	\newcommand{\todo}[1]{{\color{red} TODO: #1}}
	\newcommand{\todo}[1]{}
\title{Deep Lifetime Clustering}
\author{%
	S Chandra Mouli \\
	Department of Computer Science\\
	Purdue University \\
	\texttt{chandr@purdue.edu} \\
	\And
	Leonardo Teixeira \\
	Department of Computer Science\\
	Purdue University \\
	\texttt{lteixeir@purdue.edu}
	\AND
	Jennifer Neville \\
	Department of Computer Science \\
	Purdue University \\
	\texttt{neville@cs.purdue.edu}\\
	\hspace{8.25cm}
	\And
	Bruno Ribeiro \\
	Department of Computer Science \\
	Purdue University \\
	\texttt{ribeiro@cs.purdue.edu}
}
\begin{document}

\maketitle
\begin{abstract}
The goal of lifetime clustering is to develop an inductive model that maps \subjects into $K$ clusters according to their underlying (unobserved) lifetime distribution.
We introduce a neural-network based lifetime clustering model that can find cluster assignments by directly maximizing the divergence between the empirical lifetime distributions of the clusters. Accordingly, we define a novel clustering loss function over the lifetime distributions (of entire clusters) based on a tight upper bound of the two-sample Kuiper test p-value. The resultant model is robust to the modeling issues associated with the unobservability of termination signals, and does not assume proportional hazards.
Our results in real and synthetic datasets show significantly better lifetime clusters (as evaluated by C-index, Brier Score, Logrank score and adjusted Rand index) as compared to competing approaches.
\end{abstract}

%
\newcommand{\termination}{termination\xspace}

\section{Introduction}\label{sec:intro}
Survival analysis is widely used to model the relationship between \subject covariates and the time until a particular \textit{terminal} event of interest (e.g., death, or quitting of social media) that marks the end of all activities (or measurements) of that \subject (known as the \textit{lifetime} of the subject). For instance, a \subject's logins to a social network are her activities and the time until she quits the social network permanently is her lifetime.

The lifetime of a \subject can be unobserved for two possible reasons: (a) the \textit{terminal} event was right-censored, i.e., the \subject did not have a {terminal} event within the finite data-collection period, or (b) the terminal events are inherently unobservable. 
Right-censoring happens for instance when a patient is still alive at the time of data-collection. Unobservability happens for instance, in social networks, when a \subject simply stops using the service but does not provide a clear \textit{termination signal }by deleting her account. In such a scenario, the terminal events remain unobserved for most if not all \subjects, even if the \subjects quit the service within the data-collection period.

Numerous survival methods have been proposed~\citep{predictsurvival1,predictsurvival3,ishwaran2008random} to predict the lifetime of a \subject given her covariates and her activities/measurements for a brief initial period of time, while also accounting for right-censoring. 
More recent deep learning models for lifetime prediction ~\citep{lee2018deephit,ren2018deep,chapfuwa2018adversarial} have achieved much success due to their flexibility to model complex relationships, and by avoiding limiting assumptions like parametric lifetime distributions~\citep{ranganath2016deep} and proportional hazards~\citep{katzman2018deepsurv}. 
In scenarios where terminal events are never observed (unobservability), it is a standard practice to introduce artificial termination signals through a predefined ``timeout'' for the period of inactivity, i.e., a social network user inactive for $m$ months has her last observed activity declared a terminal event. 
Such a specification is typically arbitrary and can adversely affect the analysis depending on the ``timeout'' value used. 

Notwithstanding the fact that lifetimes are hard to predict without termination signals in the training data, we are generally interested in clustering \subjects based on their underlying lifetime distribution to improve decision-making. 
Applications include identifying disease subtypes~\citep{gan2018identification}, understanding the implications of distinct manufacturing processes on machine parts, and qualitatively analyzing different survival groups in a social network. Although accurately predicting time-to-terminal-event for an individual is important in a variety of applications, lifetime clustering plays a complementary role and provides a more holistic picture.

Lifetime clustering remains a relatively unexplored topic despite being an important tool.
Although traditional unsupervised clustering methods such as $k$-means and hierarchical clustering are popular for this task \citep{bhattacharjee2001classification,sorlie2001gene,bullinger2004use}, they may produce clusters that are entirely uncorrelated with lifetimes~\citep{ssc}. Semi-supervised clustering~\citep{bair} and supervised sparse clustering~\citep{sparseclustering} employ a two-stage lifetime clustering process: (i) identify covariates associated with lifetime using Cox scores~\citep{coxproportionalhazardsmodel}, and (ii) treat these covariates differently while performing $k$-means clustering. They assume proportional hazards (i.e., constant hazard ratios over time) and require the presence of termination signals. Furthermore, a decoupled two-stage process such as the above is not guaranteed to obtain clusters with maximally distinct lifetime distributions; rather, we require an end-to-end learning framework that prescribes a loss function specifically over the lifetime distributions of different clusters.

In this work we tackle the important task of inductive {\em lifetime clustering} without assuming proportional hazards, while also smoothly handling the unobservability of \textit{\termination} signals. 

{\bf Contributions.} 
	\textbf{(i)} We introduce \textit{\method}, an inductive neural-network based lifetime clustering model that finds cluster assignments by maximizing the divergence between empirical (non-parametric) lifetime distributions of different clusters. 
	Whereas the \subjects of different clusters have distinct lifetime distributions, within a cluster all \subjects share the same lifetime distribution even if they have different lifetimes. 
	Our model is robust to the modeling issues associated with the unobservability of termination signals and does not assume proportional hazards.
	\\
	\textbf{(ii)} We define a novel clustering loss function over empirical lifetime distributions (of entire clusters) based on the Kuiper two-sample test. We provide a tight upper bound of the Kuiper p-value with easy-to-compute gradients facilitating its use as a loss function, which until now was prohibitively expensive due to the test's infinite sum. 
	\\
	\textbf{(iii)} Finally, our results on real and synthetic datasets show that the proposed lifetime clustering approach produces significantly better clusters with distinct lifetime distributions (as evaluated by Logrank score, C-index, Brier Score and Rand index) as compared to competing approaches.


\section{{Formal Framework}} \label{sec:framework}
In this section, we formally define the statistical framework underlying the clustering approach introduced later in the paper. 
{\em Notation remark}: We use superscript $(u)$ to refer to variables indexed by a \subject $u$ and use subscript $k$ to refer to random variables indexed by a random \subject of cluster $k$.
See \cref{tab:notations} in the supplementary material for a complete list of all the variables and their meanings.

We assume that there are distinct underlying clusters with different event processes describing the \activity events of a random subject (e.g., logins to a social network, measurements of a patient) in the respective clusters.  To make these notions more formal, we introduce our event process.

\begin{definition}[Abstract Event Process] \label{def:rmpp}
Consider the $k$-th cluster.
The Random Marked Point Process (RMPP) for the \activity events is $\Phi_k = \left\{X_k, \{(A_{k,i}, \eventmarks_{k, i}, Y_{k, i})\}_{i \in \mathbb{N}}, \Theta_k\right\}$ over discrete times $t=0,1,\ldots$,
where 
$X_k$ is the random variable representing the covariates of a random \subject in cluster $k$,
$\Theta_k$ is the time to the zeroth \activity event (joining),
$\eventmarks_{k, i}$ represent covariates of event $i$,
$Y_{k,i}$ is the inter-event time between the 
 $i$-th and the $(i-1)$-st \activity events (e.g., logins), and 
 $A_{k,i} = 1$ indicates an event with a termination signal (death), otherwise $A_{k,i} = 0$. All these variables may be arbitrarily dependent.
This definition is model-free, i.e., we will not prescribe a model for $\Phi_k$.
\end{definition}

We observe the RMPP over a time window $[0, \tm]$. Using the above formalism, we define the true lifetime of a \subject in cluster $k$ as the sum of all inter-event times until termination signal $A_{k, i} = 1$ is observed.
\begin{definition}[True lifetime] \label{def:truelifetime}
	The random variable that defines the true lifetime until the terminal event of a \subject in cluster $k$ is
	$
	T_k \coloneqq \max_i \left( \sum_{i' \leq i}  Y_{k,i'} ~~\prod_{i'' < i} (1 - A_{k,i''})  \right) .
	$
	%
\end{definition}

The true lifetime $T_k$ is unobserved if: (a) the terminal event $A_{k, i} = 1$ does not occur within the observation time period $[0, \tm]$, or (b) the termination signals are inherently unobservable.
Now, the true lifetime distribution of a \subject in cluster $k$ is defined as the probability that the \subject has at least one more \activity event after time $t$, and is given by
$S_k(t) := P[T_k > t] = 1 - F_k(t),$ $t \in \mathbb{N} \cup \{0\}$,
where $F_k(t)$ is the underlying cumulative distribution function (CDF) of $T_k$.

\begin{figure*}
	\vspace{-20pt}
	\includegraphics[scale=0.45]{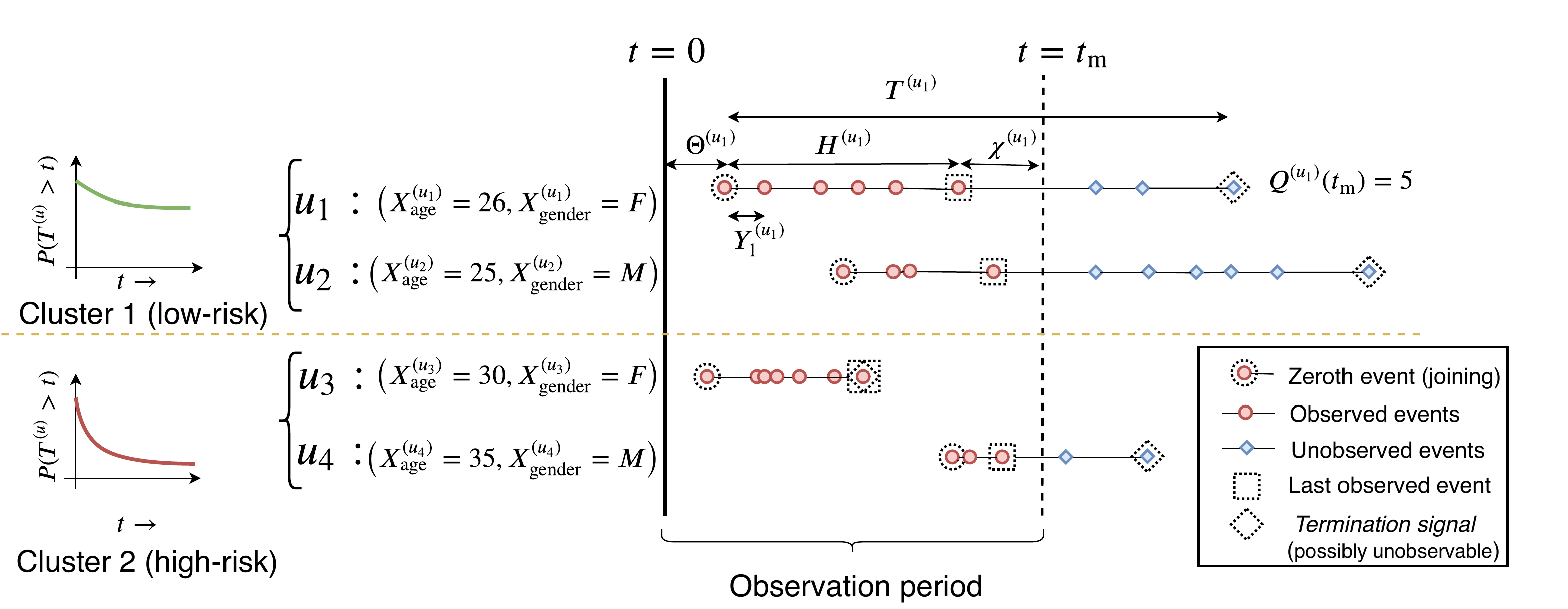}
	\caption{\footnotesize Depicting two clusters (low-risk and high-risk; as shown by the true lifetime distributions) following different RMPPs, each with two \subjects.
		$T^{(u)}$ is the true lifetime of \subject $u$ (may be unobserved due to right censoring or due to unobservability of termination signals). $H^{(u)}$ is her \textit{observed} lifetime, the period between the first and the last observed event.  $\timesincelast^{(u)}$ is the time between the last observed event and $\tm$, and $Q^{(u)}(\tm)$ is the number of events of $u$ after her joining and before $\tm$. 
		\label{fig:survival_example}
	}
\vspace{-12pt}
\end{figure*}

\vspace{-4pt}
\paragraph{Training data.} Our training data $\mathcal{D}$ consists of \subjects from $K$ underlying (hidden) clusters with distinct lifetime distributions $S_k$ for $k \in \{1, \ldots, K\}$. For a \subject $u \in \mathcal{D}$, we observe the following quantities 
$\{X^{(u)}, \{(\eventmarks^{(u)}_i, Y^{(u)}_{i})\}_{i=1}^{Q^{(u)}(\tm)}, \Theta^{(u)}\}$, 
where $X^{(u)}, Y^{(u)}_i, M^{(u)}_i$ and $\Theta^{(u)}$ are analogously defined as in \cref{def:rmpp}, but for a given \subject $u$. $Q^{(u)}(t)$ is the number of observed \activity events of $u$ after her joining and before time $t$.
The training data may or may not contain the termination signals, $\{A^{(u)}_i\}_{i=1}^{Q^{(u)}(\tm)}$, where $A^{(u)}_i = 1$ indicates that event $i$ was a terminal event for \subject $u$ (e.g., death).
Termination signals are typically available in healthcare applications, whereas in the case of social networks, we usually do not observe the termination signal (i.e., account deletion) for \emph{any} \subject.

We define the \textit{observed} lifetime of a \subject $u$ as 
$H^{(u)} := \sum_{i=1}^{Q^{(u)}(\tm)} Y_i^{(u)}$, i.e., the sum of all the inter-event times within the observation period $[0, \tm]$.
In the absence of termination signals, we additionally define \textit{inactive period} as the time elapsed since the last observed event of \subject $u$, given by $\timesincelast^{(u)} := \tm - \Theta^{(u)} - H^{(u)}$. 
\cref{fig:survival_example} shows the true lifetime, the observed lifetime, and the inactive period for four users of two different clusters. The events are observed (denoted by solid circles) only till the time of measurement $\tm$, whereas the rest of the events are right-censored (denoted by solid diamonds). 
The termination signal (denoted by dotted diamonds) may be unobserved even if it occurs before $\tm$ (eg., $u_3$).

Lastly, we formally define our clustering problem. 
\vspace{-4pt}
\begin{definition}[Clustering problem] \label{def:clustering}
Consider a dataset $\cD$ with $N$ \subjects.
Our goal is to find a mapping $\kappa: \left(X^{(u)},\{(\eventmarks^{(u)}_i, Y^{(u)}_{ i})\}_{i=1}^{Q^	{(u)}(\Theta^{(u)} + \tau)}\right)  \to \{1,\ldots,K\}$, that inductively maps \subject covariates and observed \activity events for a brief initial period of time $\tau$ into clusters, such that the divergence $\Delta$ between the empirical lifetime distributions of these clusters is maximized, i.e., 

\begin{equation}\label{eq:kappa}
\kappa^\star = \argmax_{\kappa \in \cK} \min_{\substack{i, j \in \{1 \ldots K\}, \\ i\neq j}} \Delta(\hat{S}_i(\kappa),  \hat{S}_j(\kappa)) \:,
\end{equation}
where $\cK$ is a set of all mappings, $\hat{S}_k(\kappa)$ is the empirical lifetime distribution of \subjects in 
$\cD$ mapped to cluster $k$ through $\kappa$,
and $\Delta$ is an empirical distribution divergence measure. 
\end{definition}
\vspace{-5pt}
$\kappa^*$ optimized in this fashion guarantees that \subjects in different clusters have different lifetime distributions.
For a new unseen \subject in the test data, $\kappa^*$ would be able to inductively assign a cluster within $\tau$ time of her joining. 

\newcommand{\dname}{lifetime }
\newcommand{\samplefk}{\hat{f_k}}

\section{The \method model} \label{sec:method}
\vspace{-10pt}
\begin{figure}[t]
	\vspace{-20pt}
	\begin{subfigure}[t]{0.45\textwidth}
		\centering
		\includegraphics[height=2.4in,width=3.2in]{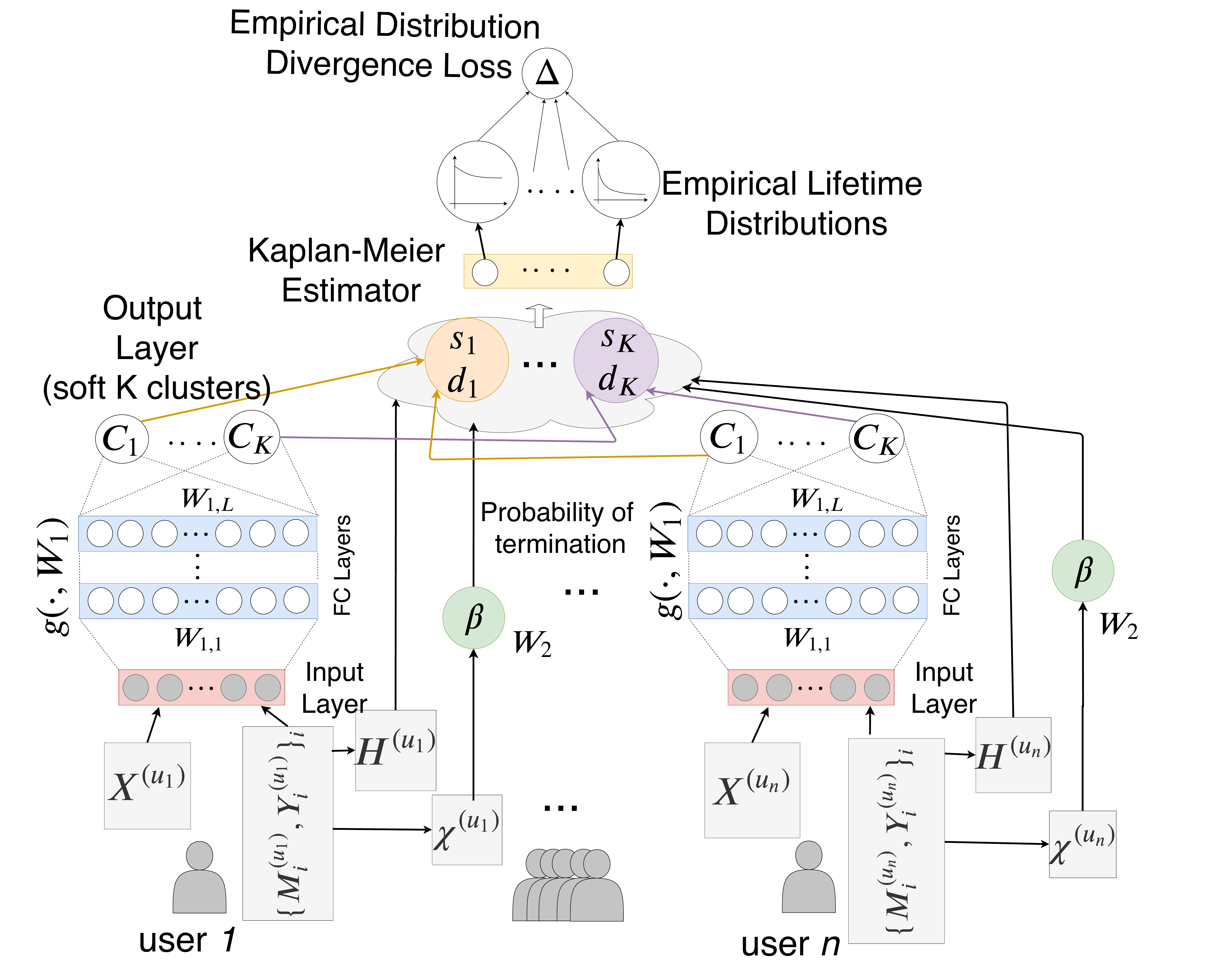}
		\caption{\footnotesize Model architecture \label{fig:neuralnetwork}}
	\end{subfigure}
	~~~~~~~~~~~~~~~~~~
	\begin{subfigure}[b]{0.44\textwidth}
		\begin{subfigure}[t]{0.45\textwidth}
			\centering
			\includegraphics[scale=0.11]{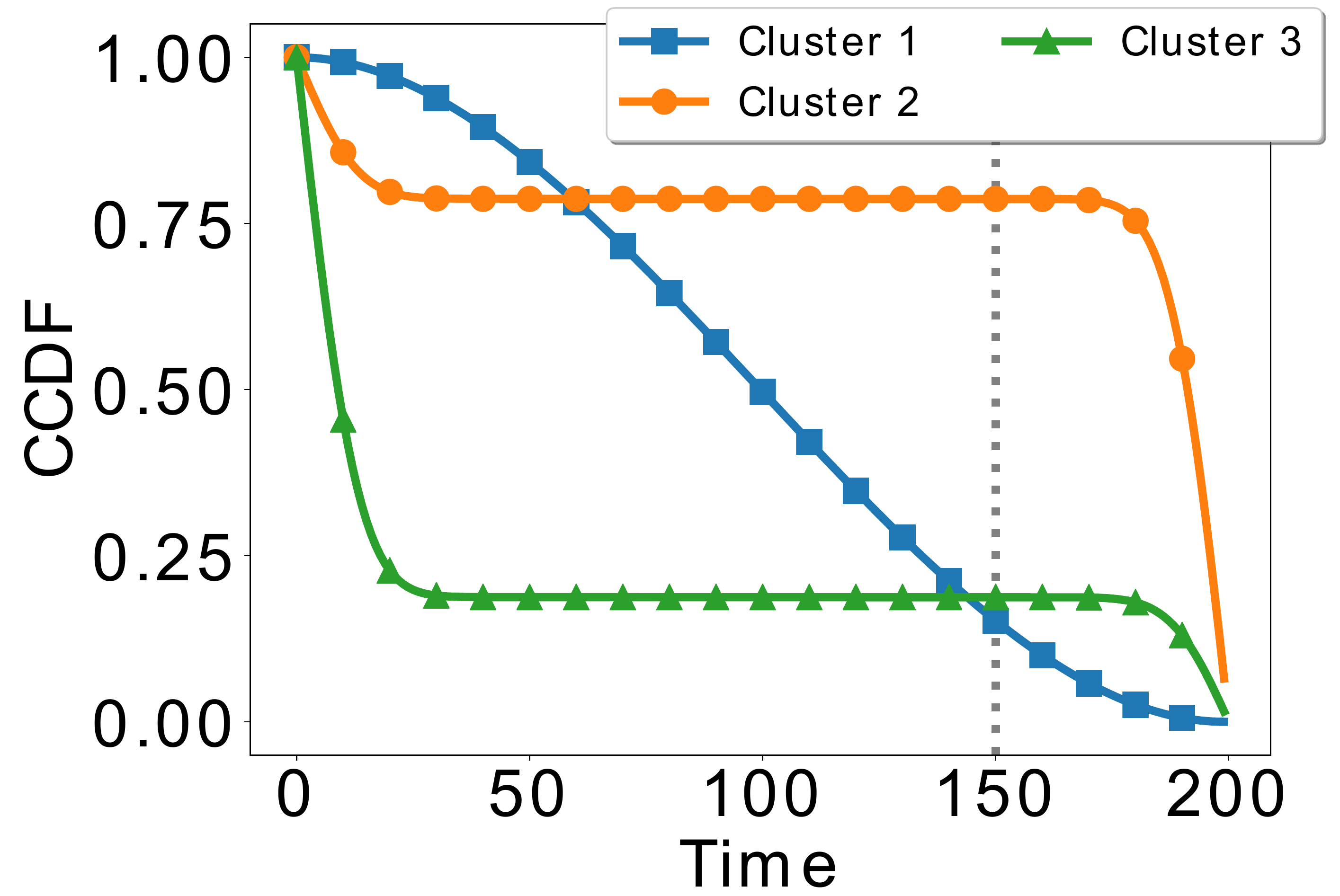}
			\caption{\footnotesize Overlapping lifetime distributions invalidates proportional hazard assumption \label{fig:examplecurves1}}
		\end{subfigure}
		~~~~~~~
		\begin{subfigure}[t]{0.45\textwidth}
			\centering
			\includegraphics[height=0.9in]{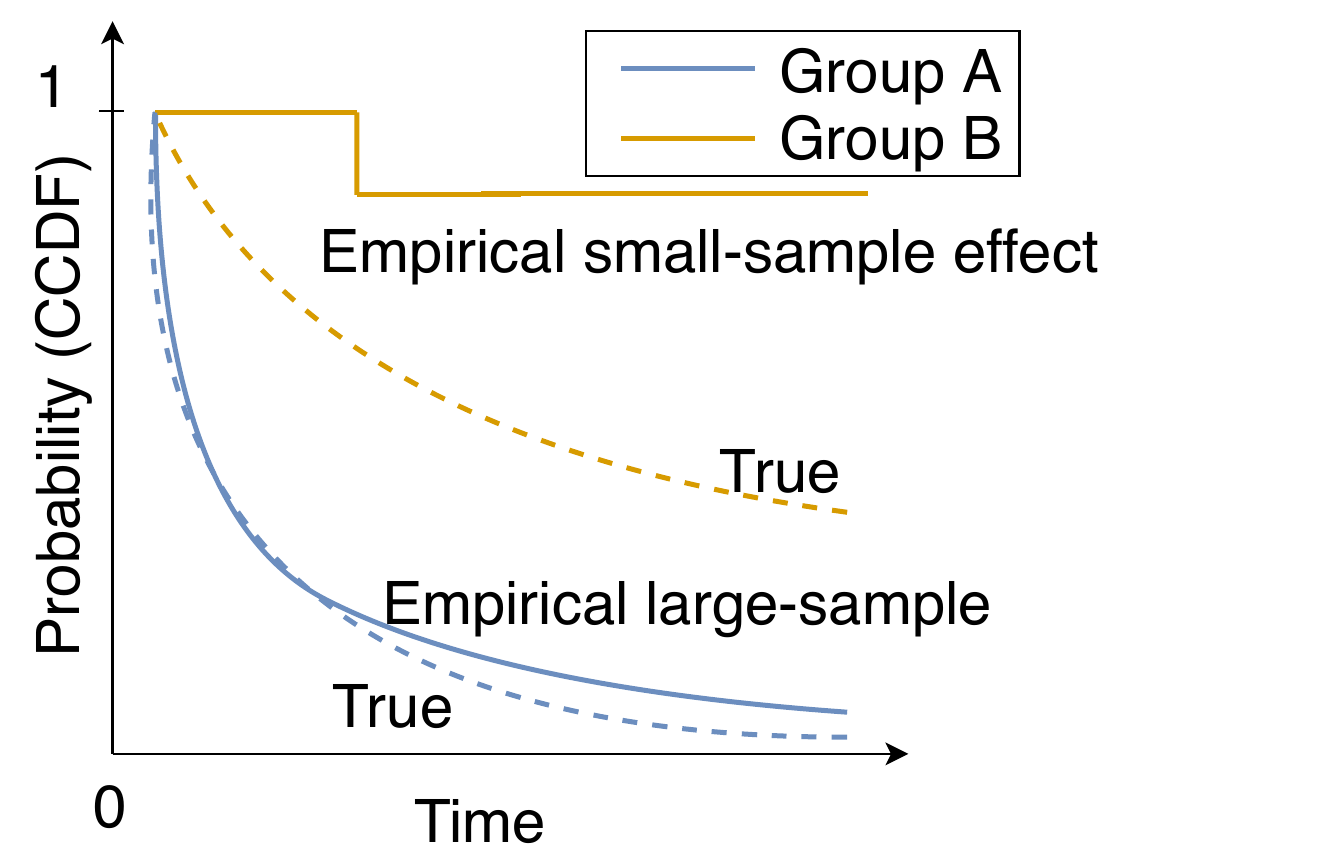}
			\caption{\footnotesize True lifetime distributions (dashed) vs empirical distributions (solid) 
				\label{fig:examplecurves2}
			}
		\end{subfigure}
		~~~~~~
		\begin{subfigure}[t]{0.45\textwidth}
			\centering
			\includegraphics[height=0.9in]{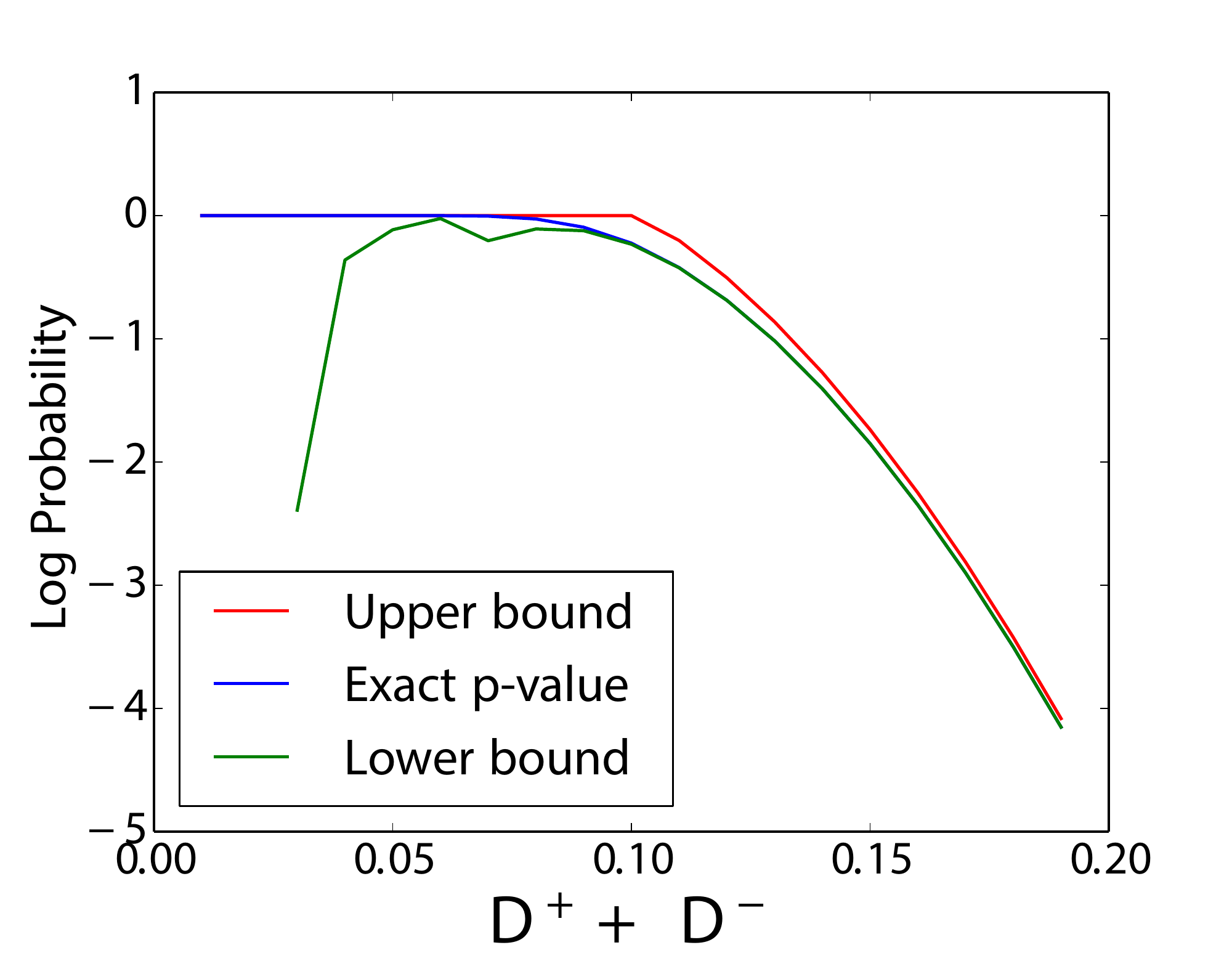}
			\caption{\footnotesize Samples = 100\label{fig:bounds100}}
		\end{subfigure}
		~~~~~~
		\begin{subfigure}[t]{0.45\textwidth}
			\centering
			\includegraphics[height=0.9in]{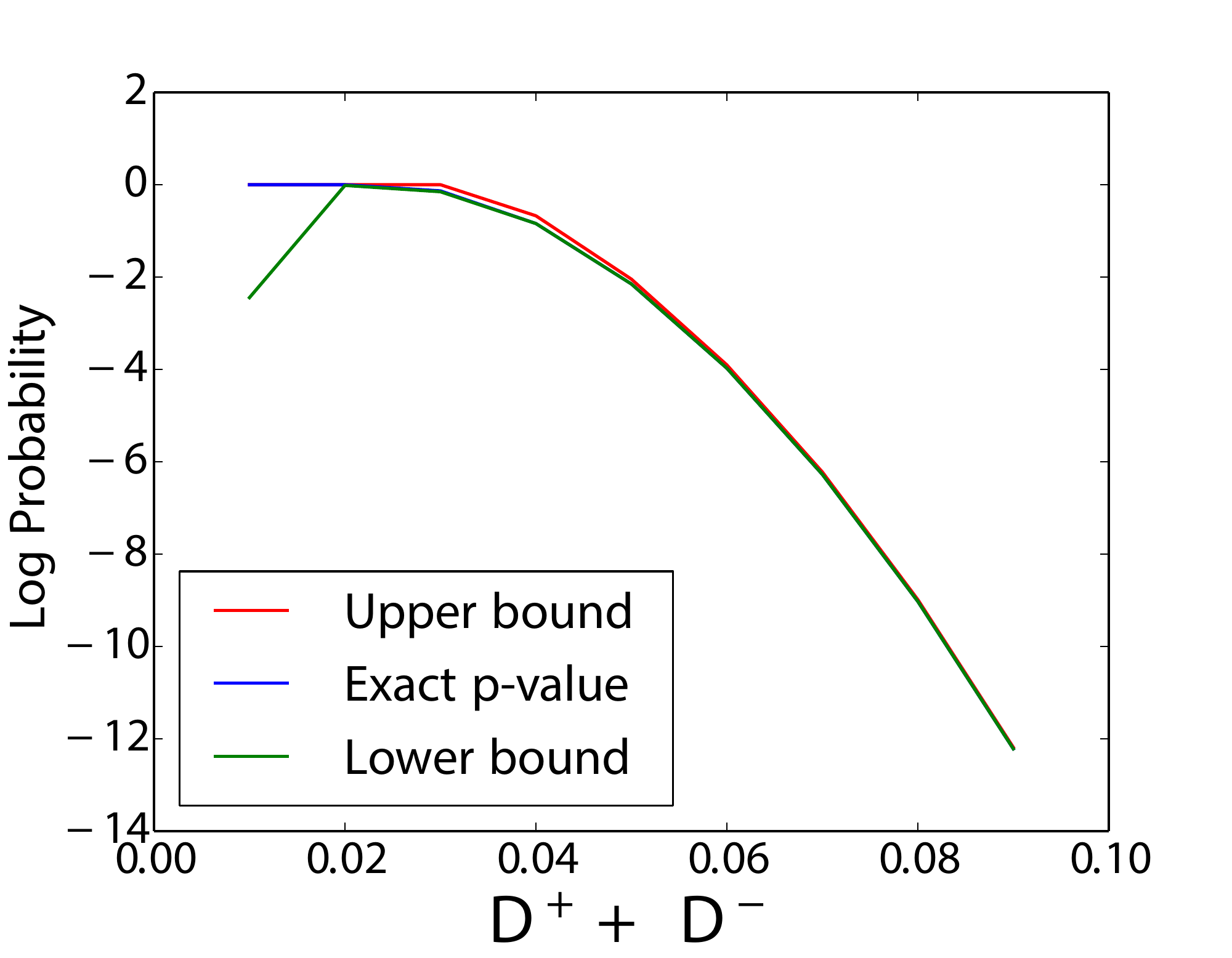}
			\caption{\footnotesize Samples = 1000\label{fig:bounds1000}}
		\end{subfigure}
	\end{subfigure}
	
	\caption{\footnotesize (a) Feedforward neural network $g(; W_1)$ outputs the cluster assignments for a batch of users. The cluster assignments along with the probability of termination is used to obtain lifetime distributions of each cluster using Kaplan-Meier estimator. Finally, logarithm of Kuiper p-value upper bound is used as the divergence loss $\Delta$.
		(b) Lifetime distributions can have different shapes and can cross each other, violating proportional hazards assumptions. 
		(c) Divergence metric must account for the uncertainty in the distributions, otherwise divergence maximization leads to imbalanced clusters. (d-e) Upper and lower bounds of the logarithm of Kuiper p-value when varying the Kuiper statistic $D^+ {+} D^-$. }
\end{figure}
In this section, we propose a practical lifetime clustering approach using neural networks that optimizes \cref{eq:kappa}.
Let $\cD$ be the training data as defined in \cref{sec:framework}. Since we want to maximize divergence between empirical lifetime distributions, we assume discrete times (relative to \subject joining), $t \in \{0,1,\ldots, \tmax\}$, where $t_\text{max} = \max_{u \in \cD} H^{(u)}$ is the maximum observed lifetime of any \subject $u \in \cD$.  Note that it is sufficient to define the empirical distribution till $\tmax$, since we have not observed any \subject with lifetime greater than $\tmax$.

\subsection{Cluster assignments : $\alpha^{(u)}_k(W_1)$} \label{sec:clusterassignments}
We define a neural network $g$ that 
takes user covariates and the event data for a \subject $u$ during a brief initial period $\tau$ after her joining as input,
and outputs her cluster assignment probabilities, $\alpha^{(u)}_k(W_1)$ for all $k \in \{1, \ldots K\}$,

\begin{align}
\vec{\alpha}^{(u)}(W_1) = g\left(X^{(u)}, \{\eventmarks^{(u)}_i, Y_i^{(u)}\}_{i=1}^{Q^{(u)}(\Theta^{(u)} + \tau)}; W_1\right) \:,
\end{align}

where $W_1$ are the weights of the neural network. The final layer of $g$ is a softmax layer with $K$ units. \cref{fig:neuralnetwork} depicts $g$ as a feedforward neural network with $L-1$ hidden layers, although our model is not restricted to a feedforward architecture. In our experiments, we compute summary statistics over the observed events $\{\eventmarks^{(u)}_i, Y_i^{(u)}\}_i$ in order to make it compatible with the feedforward architecture.
\vspace{-5pt}
\subsection{Probability of termination: $\beta^{(u)}(W_2)$} \label{sec:probabilityofendoflife}
\vspace{-5pt}
During the training of our model, we require termination signals or a probabilistic estimation of the termination signals in order to write the likelihood of the model. Given the model parameter $W_2$, we define $\beta^{(u)}(W_2)$ as the probability that the last observed event of $u$ was terminal, i.e., $u$ will have no future activity events after the last observed event. 

\textit{If the termination signals $A^{(u)}_i$ are observed in the training data $\cD$ (e.g., healthcare), clearly $\beta^{(u)}(W_2) \coloneqq A^{(u)}_{Q^{(u)}(\tm)}\:$ (i.e., $W_2$ is ignored).}

When such termination signals are unobservable (e.g., social network), existing survival methods commonly use a timeout window of predefined size $W_\text{fixed}$ over the inactive period $\timesincelast^{(u)}$ to specify the probability of termination as $\beta^{(u)}(W_\text{fixed}) \coloneqq \one[\timesincelast^{(u)}{>}W_\text{fixed}]$. 
However, such specification is arbitrary and precludes any learning of the window size parameter $W_\text{fixed}$.
Instead, we model the latent termination probabilities $\beta^{(u)}(W_2)$
using a smooth non-decreasing function of $\timesincelast^{(u)}$, i.e., higher the period of inactivity, higher the probability that the last observed event was terminal.

\textit{If the termination signals $A^{(u)}_i$ are unobservable in the training data $\cD$ (e.g., social network), we use $\beta^{(u)}(W_2) \coloneqq 1 - e^{-\rateparameter^{(u)} \cdot \timesincelast^{(u)}}$ with a shared rate parameter $\rateparameter^{(u)}{=}W_2>0$.} 
Practitioners can use a more flexible model by using a neural network parameterized by $W_2$ to describe the rate parameter $\rateparameter^{(u)}$.


\vspace{-4pt}
\subsection{Empirical lifetime distribution of cluster $k$ : $\hat{S}_k(W_1, W_2; \cD)$}
\vspace{-5pt}
Given the training data $\cD$ and model parameters $W_1$ and $W_2$, we can obtain the soft cluster assignments $\alpha_k^{(u)}(W_1)$ and the probability of termination $\beta^{(u)}(W_2)$ for all \subjects $u \in \mathcal{\cD}$ and clusters $k \in \{1 \ldots K \}$ as shown in \cref{sec:clusterassignments} and \cref{sec:probabilityofendoflife}. In this subsection, we obtain the empirical lifetime distribution of all the clusters $k = 1 \ldots K$, using the Kaplan-Meier estimates \citep{kaplanmeier}. We do not assume a parametric form for the lifetime distribution, and rather use empirical distributions in our optimization to allow lifetime curves of any shape (\cref{fig:examplecurves1}). Kaplan-Meier estimates are a maximum likelihood estimate of the lifetime distribution of a set of \subjects assuming (a) hard memberships (each \subject entirely belongs to the set) and (b) the presence of termination signals. We modify the estimates to account for partial memberships and probability of termination instead. 

\begin{proposition} \label{lemma:mykm}
Given the training data $\mathcal{D}$, 
a cluster $k$, 
the cluster assignment probabilities $\{\alpha^{(u)}_k(W_1)\}_{u \in \mathcal{\cD}}$,
and the probabilities of termination $\{\beta^{(u)}(W_2)\}_{u \in \mathcal{\cD}}$, 
the maximum likelihood estimate of the empirical lifetime distribution of cluster $k$ is given by,

\begin{align}\label{eq:kmEstimates}
\hat{S}_k(W_1, W_2; \cD)[t] = \prod_{j = 0}^t \frac{s_{k}(W_1; \cD)[j] - d_{k}(W_1, W_2; \cD)[j]}{s_{k}(W_1; \cD)[j]} \:,
\vspace{-5pt}
\end{align}

for all $t \in \{0, 1, \ldots, t_\text{max}\}$, where, 
$
s_{k}(W_1; \cD)[j] = \sum_{u \in \cD} \one[H^{(u)} \geq j]  \cdot \alpha^{(u)}_{k}(W_1),
$
is the expected number of \subjects in cluster $k$ who are at risk (of termination) at time $j$, and,
$
d_{k}(W_1, W_2; \cD)[j] = \sum_{u \in \cD} \one[H^{(u)} = j] \cdot \beta^{(u)}(W_2) \cdot \alpha^{(u)}_{ k}(W_1),
$
is the expected number of \subjects that are predicted to have had a terminal event at time $j$.
\end{proposition}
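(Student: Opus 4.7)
The plan is to obtain the MLE via a standard hazard-based reparametrization of the discrete lifetime distribution, with soft cluster memberships $\alpha_k^{(u)}$ and soft terminations $\beta^{(u)}$ entering as weights in the likelihood. Concretely, let $\lambda_{k,j} := P(T_k = j \mid T_k \geq j)$ denote the discrete hazard at time $j$ in cluster $k$. Then the survival function factorizes as $S_k(t) = \prod_{j=0}^{t}(1-\lambda_{k,j})$, so it suffices to derive $\hat{\lambda}_{k,j}$ and plug back in.

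Next I would write down the (weighted) expected log-likelihood of the observed data under cluster $k$. Each subject $u$ contributes to cluster $k$ proportionally to its soft assignment $\alpha_k^{(u)}$, and its last observed event at time $H^{(u)}$ is a terminal event with probability $\beta^{(u)}$ and a censoring event with probability $1-\beta^{(u)}$. Combining these, the per-subject contribution (in expectation) to cluster $k$'s log-likelihood is
\begin{equation*}
\alpha_k^{(u)} \Bigl[ \beta^{(u)} \mathbf{1}[H^{(u)}{=}j_u]\log\lambda_{k,j_u} + \sum_{j<H^{(u)}}\log(1{-}\lambda_{k,j}) + (1{-}\beta^{(u)})\mathbf{1}[H^{(u)}{=}j_u]\log(1{-}\lambda_{k,j_u})\Bigr].
\end{equation*}
Summing over $u\in\cD$ and differentiating with respect to $\lambda_{k,j}$, the first-order condition reads
\begin{equation*}
\frac{\sum_u \alpha_k^{(u)} \beta^{(u)} \mathbf{1}[H^{(u)}{=}j]}{\lambda_{k,j}} \;=\; \frac{\sum_u \alpha_k^{(u)} \bigl( (1{-}\beta^{(u)})\mathbf{1}[H^{(u)}{=}j] + \mathbf{1}[H^{(u)}{>}j] \bigr)}{1-\lambda_{k,j}}.
\end{equation*}

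The key algebraic step is recognizing the identity $(1{-}\beta^{(u)})\mathbf{1}[H^{(u)}{=}j] + \mathbf{1}[H^{(u)}{>}j] = \mathbf{1}[H^{(u)}{\geq}j] - \beta^{(u)}\mathbf{1}[H^{(u)}{=}j]$, which turns the denominator-side sum into exactly $s_k(W_1;\cD)[j] - d_k(W_1,W_2;\cD)[j]$, while the numerator-side sum is precisely $d_k(W_1,W_2;\cD)[j]$. Solving for $\lambda_{k,j}$ gives $\hat{\lambda}_{k,j} = d_k[j]/s_k[j]$; strict concavity of each summand in $\lambda_{k,j}\in(0,1)$ (and separability across $j$) confirms this is the unique maximizer. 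Substituting into $S_k(t)=\prod_{j\le t}(1-\lambda_{k,j})$ yields \cref{eq:kmEstimates}.

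The main subtlety I expect is the careful bookkeeping around the dual role of $\alpha_k^{(u)}$ and $\beta^{(u)}$: the cluster-assignment weight $\alpha_k^{(u)}$ multiplies the \emph{entire} per-subject contribution, whereas $\beta^{(u)}$ only appears in the event-at-$H^{(u)}$ terms (because subjects observed as active at times $j<H^{(u)}$ are definitely at risk, regardless of whether their \emph{last} event is terminal). Getting that decomposition right is what produces the clean cancellation via the indicator identity above; if one instead naively re-weighted all terms by $\beta^{(u)}$ the closed form would not emerge. A minor issue is boundary/edge cases (e.g.\ $s_k[j]=0$), which can be handled by adopting the usual convention that the corresponding factor in the product equals $1$.
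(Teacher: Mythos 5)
Your proof is correct and follows essentially the same route as the paper's: replace hard memberships and termination indicators by their expectations (the paper's ``mean-field'' step), reduce to a weighted Kaplan--Meier likelihood that is separable across time points, and maximize over the discrete hazards to get $\hat{\lambda}_{k,j}=d_k[j]/s_k[j]$. You actually make explicit the two steps the paper glosses over — the first-order condition and the indicator identity $(1-\beta^{(u)})\one[H^{(u)}{=}j]+\one[H^{(u)}{>}j]=\one[H^{(u)}{\geq}j]-\beta^{(u)}\one[H^{(u)}{=}j]$ — so no gaps remain.
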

	The proof is presented in the supplementary material.
\vspace{-10pt}
\subsection{Empirical distribution divergence loss : $\Delta(\hat{S}_a, \hat{S}_b)$} \label{sec:kuiperloss}
\vspace{-5pt}
We rewrite the objective function of lifetime clustering from Definition \ref{def:clustering} with respect to the model parameters $W_1$ and $W_2$ as follows, 
\begin{align}\label{eq:loss_final}
{W}^*_1, {W}^*_2 = \argmax_{W_1, W_2} \min_{\substack{i, j \in \{1 \ldots K\}, \\ i\neq j}} \Delta\left(\hat{S}_i, \hat{S}_j\right) \:,
\end{align}

where $\hat{S}_i$ is the empirical distribution, 
a shorthand for the vector $\hat{S}_i(W_1, W_2; \cD)$,
and $\Delta$ is a divergence measure between two empirical distributions. 

We note the following essential requirements for the divergence measure $\Delta$:
	(a) $\Delta$ defined over \emph{empirical} distributions must take into account sample sizes,
	(b) $\Delta$ should have easy-to-compute gradients since it is used as an objective function to train neural networks, and
	(c) $\Delta$ should not assume proportional hazards, and should allow for crossing lifetime curves (see \cref{fig:examplecurves1}).

Divergence measures such as Kullback-Leibler \citep{kl1951} and MMD \citep{mmd} fulfill \textbf{(b, c)} but not \textbf{(a)}, and will result in sample anomalies as depicted in \cref{fig:examplecurves2} (also seen in our experiments: \cref{fig:cluster_distros_mmd}).
Logrank test \citep{mantel1966evaluation}, commonly used for comparing lifetime distributions,
fulfills \textbf{(a, b)},
but has low statistical power when the proportional hazards assumption is not met \citep{logranktest2,logranktestassumption} (e.g., \cref{fig:examplecurves1}). 
Finally, p-value from two-sample tests such as the Kolmogorov-Smirnov (K-S) test \citep{massey1951kolmogorov}
fulfill \textbf{(a, c)}, but not \textbf{(b)}
as they require the computation of an infinite sum, resulting in an impractical objective function unless heuristic approximations are made.

We propose to use the Kuiper test \citep{kuipertest}, a two-sample test closely related to the K-S test with increased statistical power in distinguishing distribution tails \citep{tygert2010statistical}. Specifically, we define $\Delta(\hat{S}_a, \hat{S}_b){:=}-\log(\text{KD}(\hat{S}_a, \hat{S}_b))$, where $\text{KD}$ is the p-value from the Kuiper test between 
$\hat{S}_a$ and $\hat{S}_b$.
Our choice of the Kuiper test is because it is amenable to upper and lower bounds as shown next, thus avoiding the prohibitive heuristic approximations of infinite sums. 


\begin{proposition}\label{lemma:ubKuiper}
(Bounds for Kuiper p-value.) Given two empirical lifetime distributions $\hat{S}_a$ and $\hat{S}_b$ with discrete support and sample sizes $n_a$ and $n_b$ respectively, define the maximum positive and negative separations between them,
\begin{align*}
&\hat{D}_{a,b}^+ = \sup_{t \in \{0, \ldots t_\text{max}\}} (\hat{S}_a[t] - \hat{S}_b[t]) \:,
&\hat{D}_{a,b}^- = \sup_{t \in \{0, \ldots t_\text{max}\}} (\hat{S}_b[t] - \hat{S}_a[t]) \:.
\vspace{-5pt}
\end{align*}
The Kuiper test $p$-value \cite{kuipertest} gives the probability that $\Lambda$, the empirical deviation for $n_a$ and $n_b$ observations under the null hypothesis $S_a = S_b$ \footnote{The test is typically defined using CDFs but it is equivalent to use lifetime distributions (CCDFs) instead.}, exceeds the observed value $V = \hat{D}_{a,b}^+ + \hat{D}_{a,b}^-$:
\begin{align}\label{eq:infiniteSum}
\text{KD}(\hat{S}_a, \hat{S}_b) \equiv P[ \Lambda > V] = 2 \sum_{j=1}^\infty  (4 j^2 \lambda_{a,b}^2 - 1) e^{-2 j^2 \lambda_{a,b}^2},
\end{align}
$\lambda_{a,b} = (\sqrt{M_{a,b}} + 0.155 + \frac{0.24}{\sqrt{M_{a,b}}}) V$
and $M_{a,b} = \frac{n_a n_b}{n_a + n_b}$ is the effective sample size.
Then, the upper bound \footnote{Lower bound is presented in the supplementary material.} for the Kuiper p-value is,
\allowdisplaybreaks{
\begin{align}\label{eq:KDUB}
\text{KD}(\hat{S}_a, \hat{S}_b) \leq & \min \biggl(1, ~2 \cdot \one[r_{a,b}^{(\text{lo})} \geq 1] \cdot \bigl(w(r_{a,b}^{(\text{lo})}, \lambda_{a,b}) 
- w(1, \lambda_{a,b}) + v(r_{a,b}^{(\text{lo})}, \lambda_{a,b})\bigr) \nonumber \\
& \qquad  \qquad + v(r_{a,b}^{(\text{up})}, \lambda_{a,b})  - w(r_{a,b}^{(\text{up})}, \lambda_{a,b})\biggr) ,
\end{align}
}
where 
$
v(r,\lambda){=}(4r^2\lambda^2 - 1)e^{-2r^2\lambda^2}, 
w(r, \lambda){=}-re^{-2r^2\lambda^2},
$
$
{r}^{(\text{lo})}_{a,b} = \lfloor \frac{1}{\sqrt{2}{\lambda}_{a,b}} \rfloor, 
\text{~and,~}
{r}^{(\text{up})}_{a,b} = \lceil \frac{1}{\sqrt{2}{\lambda}_{a,b}} \rceil \, .
$
\end{proposition}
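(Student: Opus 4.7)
The plan is to convert the infinite Kuiper series $\text{KD}(\hat S_a,\hat S_b) = 2\sum_{j=1}^\infty v(j,\lambda_{a,b})$ into the stated closed-form upper bound by replacing the series with integral comparisons, using the fact that $w$ is an antiderivative of $v$. A direct computation gives
\[
\frac{\partial w(r,\lambda)}{\partial r} \;=\; v(r,\lambda),
\]
so $\int_a^b v(x,\lambda)\,dx = w(b,\lambda)-w(a,\lambda)$ and, since $w(r,\lambda)\to 0$ as $r\to\infty$, $\int_r^\infty v(x,\lambda)\,dx = -w(r,\lambda)$. All of the $w$-differences appearing in the proposition will come from this identity.

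Next I would analyse the shape of $v$. One differentiation yields $\partial v/\partial r = 4r\lambda^2(3-4r^2\lambda^2)\,e^{-2r^2\lambda^2}$, so $v$ has a single maximum at $x^\star = \sqrt{3}/(2\lambda)$, is strictly increasing on $(0,x^\star)$, and strictly decreasing on $(x^\star,\infty)$. The split point $r^\star = 1/(\sqrt{2}\,\lambda)$ used in the proposition satisfies $r^\star < x^\star$, so $r^{(\mathrm{lo})} = \lfloor r^\star\rfloor$ lies safely in the increasing region. I would then split the series at $r^{(\mathrm{up})}$ as
\[
\sum_{j=1}^\infty v(j,\lambda) \;=\; \sum_{j=1}^{r^{(\mathrm{lo})}} v(j,\lambda) \;+\; \sum_{j=r^{(\mathrm{up})}}^\infty v(j,\lambda),
\]
where the left piece is empty whenever $r^{(\mathrm{lo})}<1$, explaining the indicator $\mathbf{1}[r^{(\mathrm{lo})}\geq 1]$ in the bound.

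For the left piece, monotonicity of $v$ on $[1,r^{(\mathrm{lo})}]$ gives $v(j) \leq \int_j^{j+1}v(x,\lambda)\,dx$ for every $j\leq r^{(\mathrm{lo})}-1$; summing and adding the endpoint term $v(r^{(\mathrm{lo})},\lambda)$ produces the first summand $v(r^{(\mathrm{lo})},\lambda)+w(r^{(\mathrm{lo})},\lambda)-w(1,\lambda)$ of the stated bound. For the tail, the goal is the ``decreasing integral test'' inequality $\sum_{j=r^{(\mathrm{up})}}^\infty v(j,\lambda) \leq v(r^{(\mathrm{up})},\lambda) - w(r^{(\mathrm{up})},\lambda)$. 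This is immediate when $r^{(\mathrm{up})} \geq x^\star$, but \emph{the main obstacle} is the regime $r^{(\mathrm{up})} < x^\star$, where $v$ first increases and then decreases on the tail so that the naive decreasing-sequence integral test fails. My plan here is a summation-by-parts argument: writing $v(j) = [w(j+1)-w(j)] - \epsilon_j$ with $\epsilon_j = \int_j^{j+1} v(x,\lambda)\,dx - v(j,\lambda)$, telescoping yields
\[
\sum_{j=r^{(\mathrm{up})}}^\infty v(j,\lambda) \;=\; -w(r^{(\mathrm{up})},\lambda) \;-\; \sum_{j=r^{(\mathrm{up})}}^\infty \epsilon_j,
\]
and one shows $\sum \epsilon_j \geq -v(r^{(\mathrm{up})},\lambda)$ by grouping the handful of positive $\epsilon_j$ near $j\leq \lfloor x^\star\rfloor$ against the exponentially larger negative contributions from the decreasing tail.

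Finally I would concatenate the two piece-wise bounds, multiply by the factor $2$ from the definition of $\text{KD}$, and take the pointwise minimum with $1$, which is valid since a Kuiper p-value is a probability. The $\min(1,\cdot)$ clause also absorbs the regime of very small observed separation $V$, where the raw integral bound would otherwise be vacuous, yielding the claimed expression exactly.
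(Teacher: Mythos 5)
Your proposal follows the same overall route as the paper's proof: regard $w$ as the antiderivative of $v$, split the series $\sum_{j\ge 1} v(j,\lambda)$ at $r^{(\text{lo})}$ and $r^{(\text{up})}$, bound each piece by an integral comparison, restore the factor of $2$, and cap at $1$. Your treatment of the increasing piece $[1,r^{(\text{lo})}]$ and the final assembly are identical to the paper's.

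Where you diverge --- and where you are in fact more careful than the paper --- is the tail. The paper asserts that $v(\cdot,\lambda)$ is monotonically decreasing on $(\frac{1}{\sqrt{2}\lambda},\infty)$ and applies the integral test term by term on $[r^{(\text{up})},\infty)$. As you correctly compute, $\partial v/\partial r = 4r\lambda^2(3-4r^2\lambda^2)e^{-2r^2\lambda^2}$, so $v$ keeps increasing up to $r=\sqrt{3}/(2\lambda) > 1/(\sqrt{2}\lambda)$, and the term-by-term inequality $v(j,\lambda)\le \int_{j-1}^{j} v(x,\lambda)\,dx$ can genuinely fail for the first indices past $r^{(\text{up})}$ (e.g.\ with $1/(\sqrt{2}\lambda)=3.9$ one finds $v(5,\lambda)\approx 0.442 > \int_4^5 v \approx 0.430$). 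So your decomposition exposes a real gap in the paper's own argument. However, your repair is itself incomplete: the decisive inequality $\sum_{j\ge r^{(\text{up})}}\epsilon_j \ge -v(r^{(\text{up})},\lambda)$ is asserted rather than proved, and it does not follow from telescoping the negative $\epsilon_j$ alone --- that only yields $\sum_j \epsilon_j \ge -v(\lceil \sqrt{3}/(2\lambda)\rceil,\lambda)$, which can be strictly worse than $-v(r^{(\text{up})},\lambda)$ because $v$ is larger near its peak than at $r^{(\text{up})}$. One must quantitatively offset this with the positive $\epsilon_j$ accrued on $[r^{(\text{up})},\sqrt{3}/(2\lambda)]$, and since the number of integers in that window grows like $1/\lambda$, ``a handful'' is not accurate and the estimate is not automatic. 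Until that step is written out, both your argument and the paper's stop short of establishing the claimed upper bound in the regime $r^{(\text{up})} < \sqrt{3}/(2\lambda)$; note also that the $\min(1,\cdot)$ does not rescue this case, since the claim $\text{KD}\le\min(1,X)$ is stronger than $\text{KD}\le X$, not weaker.
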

\vspace{-5pt}
The proof is presented in the supplementary material.

Figures~\ref{fig:bounds100} and \ref{fig:bounds1000} show empirical results of the upper and lower bounds as a function of $V =D^+ + D^-$ against the expensive heuristic computation where the infinite sum in \cref{eq:infiniteSum} is approximated with $10^4$ terms \citep{press1996numerical}.
We optimize \cref{eq:loss_final} with $\Delta(\hat{S}_a, \hat{S}_b) := -\log(\text{KD}^\text{(up)}(\hat{S}_a, \hat{S}_b))$ where $\text{KD}^\text{(up)}$ denotes the Kuiper p-value upper bound in \cref{eq:KDUB}. $\Delta$ defined this way satisfies all three requirements we described in the beginning of \cref{sec:kuiperloss}: takes sample sizes into account, has closed form expression with easy-to-compute gradients, and does not assume proportional hazards. 



\paragraph{Implementation.} 
We implement our clustering procedure as a feedforward neural network in Pytorch \citep{pytorch} and use ADAM \citep{kingma2014adam} to optimize \cref{eq:loss_final}. Each iteration of the optimization takes as input a batch of \subjects, generates a single value for the set loss, calculates the gradients, and updates the parameters $W_1$ and $W_2$.
We show in the supplementary material that the time and space complexity per iteration of the proposed approach are $O(KB + K^2 \tmax) $ and $O(K\tmax)$ respectively, where $K$ is the number of clusters and $B$ is the batch size. For large values of $K$, we achieve tractability by sampling $K$ random pairs of clusters every iteration instead of all $\binom{K}{2}$ possible pairs.

\section{Related Work} \label{sec:related}
Majority of the work in survival analysis has dealt with the task of predicting the time to an observable terminal event (e.g., death), especially when the number of features is much larger than the number of subjects \citep{predictsurvival1,predictsurvival2,predictsurvival3,predictsurvival4}. Recently, many deep learning approaches \citep{luck2017deep,katzman2018deepsurv,lee2018deephit,ren2018deep} have been proposed for predicting the lifetime distribution of a \subject given her covariates, while effectively handling censored data that typically arise in survival tasks.
DeepHit \citep{lee2018deephit} introduced a novel architecture and a ranking loss function in addition to the log-likelihood loss for lifetime prediction in the presence of multiple competing risks.
Using a log-likelihood loss similar to DeepHit, \citet{ren2018deep} propose a recurrent architecture to predict the survival distribution that captures sequential dependencies between neighbouring time points. Finally, \citet{chapfuwa2018adversarial} introduce an adversarial learning framework to model the lifetime given the \subject covariates. In contrast to these works on predicting lifetimes, our task is to cluster the \subjects based on their underlying lifetime distributions.

There are relatively fewer works that perform lifetime clustering. Many unsupervised approaches have been proposed to identify cancer subtypes in gene expression data but do not consider the lifetime \citep{uncluster1,uncluster2,bhattacharjee2001classification,sorlie2001gene,bullinger2004use}, and may produce clusters that are entirely independent of the lifetimes.
Semi-supervised clustering \citep{bair} and supervised sparse clustering \citep{sparseclustering} use Cox scores  \citep{coxproportionalhazardsmodel} to identify features associated with the lifetime and treat these features differently while using k-means to perform the final clustering.
Unlike these lifetime clustering methods, \method does not assume proportional hazards, and can smoothly handle the absence of termination signals.
Our supplementary material has a more in-depth discussion of related work.

\vspace{-10pt}
\section{Results}\label{sec:results}
\vspace{-5pt}
\paragraph{Baselines.}
%

We perform experiments on one synthetic dataset and two real-world datasets -- \textbf{Friendster social network} and \textbf{MIMIC~III healthcare dataset}~\citep{johnson2016mimic}. 

We compare the following lifetime clustering approaches:
	 \textbf{(a)~\semisup}, a semi-supervised clustering method \citep{bair} that performs k-means clustering on selected covariates that have high Cox scores \citep{coxproportionalhazardsmodel};
	 \textbf{(b)~\ssc} or supervised sparse clustering \citep{ssc}, a modification of sparse clustering \citep{sparseclustering} that weights the covariates based on their Cox scores;
	 \textbf{(c)~DeepHit+GMM}, a Gaussian mixture model applied over last layer embeddings learnt by DeepHit \citep{lee2018deephit}; 
	 \textbf{(d)~\mmd}, the \method model with Maximum Mean Discrepancy \citep{mmd} as the divergence measure; 
	 \textbf{(e)~\kuiperub}, the \method model with the proposed divergence measure based on the Kuiper p-value upper bound (\cref{eq:KDUB}). 
\paragraph{Termination signals for evaluation and baselines.}
(Timeout.) The competing methods and most evaluation metrics for survival applications require clear termination signals. In scenarios where we do not observe termination signals (e.g., Friendster experiments), we specify termination signals artificially when training the baselines using a pre-defined ``timeout'', i.e., $\beta^{(u)}(W_\text{fixed}) = \one[\timesincelast^{(u)} > W_\text{fixed}]$. During evaluation, we use the same $W_\text{fixed}$ to specify the termination signals and compute the metrics. This helps the competing methods since they are trained and evaluated using termination signals with the same $W_\text{fixed}$, whereas our approach is not trained with these termination signals.

\newcolumntype{C}[1]{>{\centering\let\newline\\\arraybackslash\hspace{0pt}}m{#1}}

\begin{table*}[]
	\vspace{-15pt}
	\caption{\footnotesize {\bf (Synthetic)} C-index (\%) and Adjusted Rand index (\%) for clusters with standard errors in parentheses for different methods \protect \footnotemark. 
		\label{tab:results-sim}	}
	\vspace{-10pt}
	\small
	\centering
	\resizebox{1.0\textwidth}{!}{%
		\begin{tabular}{l rC{0.7in} rC{0.7in} rC{0.7in}}
			\multicolumn{1}{c}{} & \multicolumn{2}{c}{$\mathcal{D}_{\{C_1, C_2\}}$} & \multicolumn{2}{c}{$\mathcal{D}_{\{C_1, C_3\}}$}  & \multicolumn{2}{c}{$\mathcal{D}_{\{C_1, C_2, C_3\}}$} \\
			\cmidrule(l{2pt}r{2pt}){2-3} \cmidrule(l{2pt}r{2pt}){4-5} \cmidrule(l{2pt}r{2pt}){6-7}
			Method & C-index $\uparrow$ (\%) & {\footnotesize Adj. Rand Index} $\uparrow$ (\%) & C-index $\uparrow$ (\%) & {\footnotesize Adj. Rand Index} $\uparrow$ (\%) & C-index $\uparrow$ (\%) & {\footnotesize Adj. Rand Index} $\uparrow$ (\%) \\
			\toprule
			\semisup   
			& 62.75 (0.35) & 74.66 (0.48)
			& 62.99 (0.26) & 56.86 (0.63)
			& 63.77 (0.24) & 47.67 (0.24) \\ 
			
			\ssc 
			& 56.34 (0.50) & 19.88 (0.51)
			& 57.21 (0.43) & 16.60 (0.56)
			& 56.75 (0.32) & 5.84 (0.11) \\ 

			\deephitgmm
			& 63.31 (0.32) & 85.05 (1.01)
			& 65.23 (0.21) & 78.47 (0.94)
			& 59.59 (1.98) & 38.77 (7.16) \\ 
			
			\mmd 
			& \textbf{64.35 (0.33)} & \textbf{98.47 (0.28)}
			& \textbf{67.25 (0.26)} & \textbf{99.68 (0.16)}
			& 62.17 (0.71) & 36.75 (1.10)  \\
			
			 \kuiperub 
			& \textbf{64.37 (0.32)} & \textbf{99.02 (0.14)}
			& \textbf{67.24 (0.26)} & \textbf{99.94 (0.06)}
			& \textbf{68.96 (0.38)} & \textbf{73.61 (0.62)} \\ 
			
			\bottomrule
			
		\end{tabular}
	}
\end{table*}

\footnotetext[3]{The values in bold are statistically significant with p-value $<0.01$ using paired t-test over validation folds.}

\paragraph{Metrics.}
We use the following metrics\footnote{$\uparrow$ indicates higher is better, $\downarrow$ indicates lower is better. Detailed description in the supplementary material.} for evaluating the clusters obtained from the methods. 
%
\begin{itemize}[align=left, leftmargin=0pt, labelindent=0pt, listparindent=0pt, labelwidth=0pt, itemindent=!,topsep=0pt,itemsep=-2pt,partopsep=1ex,parsep=1ex]
	\item{\bf Logrank Score $\uparrow$.} Logrank test~\citep{mantel1966evaluation} statistic is a non-parametric test that outputs high values when it is unlikely for the $K$ groups to have the same lifetime distribution.
	\item{\bf Adjusted Rand index $\uparrow$.}
	The Adjusted Rand index \citep{hubert1985comparing} is a measurement of cluster agreement, compared to the ground truth clustering (if available).
	ARI is 0.0 for random cluster assignments and 1.0 for perfect assignments. 
	\item {\bf C-Index $\uparrow$.} Concordance index \citep{harrell1982evaluating}
	 is a commonly used metric
	 that calculates the fraction of pairs of \subjects for which the model predicts the correct order of survival while also incorporating censoring. C-index is 1.0 for perfect predictions and 0.5 for random predictions.
	 \item {\bf Integrated Brier Score $\downarrow$.} Integrated Brier score \citep{brier1950verification,graf1999assessment} computes mean squared difference between the survival probabilities and the actual outcome over $[0, \tmax]$. It ranges from 0.25 for random predictions to 0 for perfect predictions.
\end{itemize}
Although cluster evaluation metrics like Logrank score are more suitable for the lifetime clustering task, we also use predictive measures like C-index following prior work~\citep{ssc} to further validate the clusters. 
\paragraph{Evaluation.} We evaluate the models using 5-fold cross validation. We use the $i$th fold for testing and sample $N^{\text{(tr)}}$ \subjects from the remaining 4 folds for training. 
We use 20\% of the $N^{\text{(tr)}}$ \subjects as validation for early stopping and hyperparameter tuning for the different approaches.

\subsection{Experiments}
\vspace{-5pt}

\begin{figure}[t]
	\begin{subfigure}{0.6\columnwidth}
		\begin{subfigure}{0.45\columnwidth}
			\centering
			\includegraphics[scale=0.3]{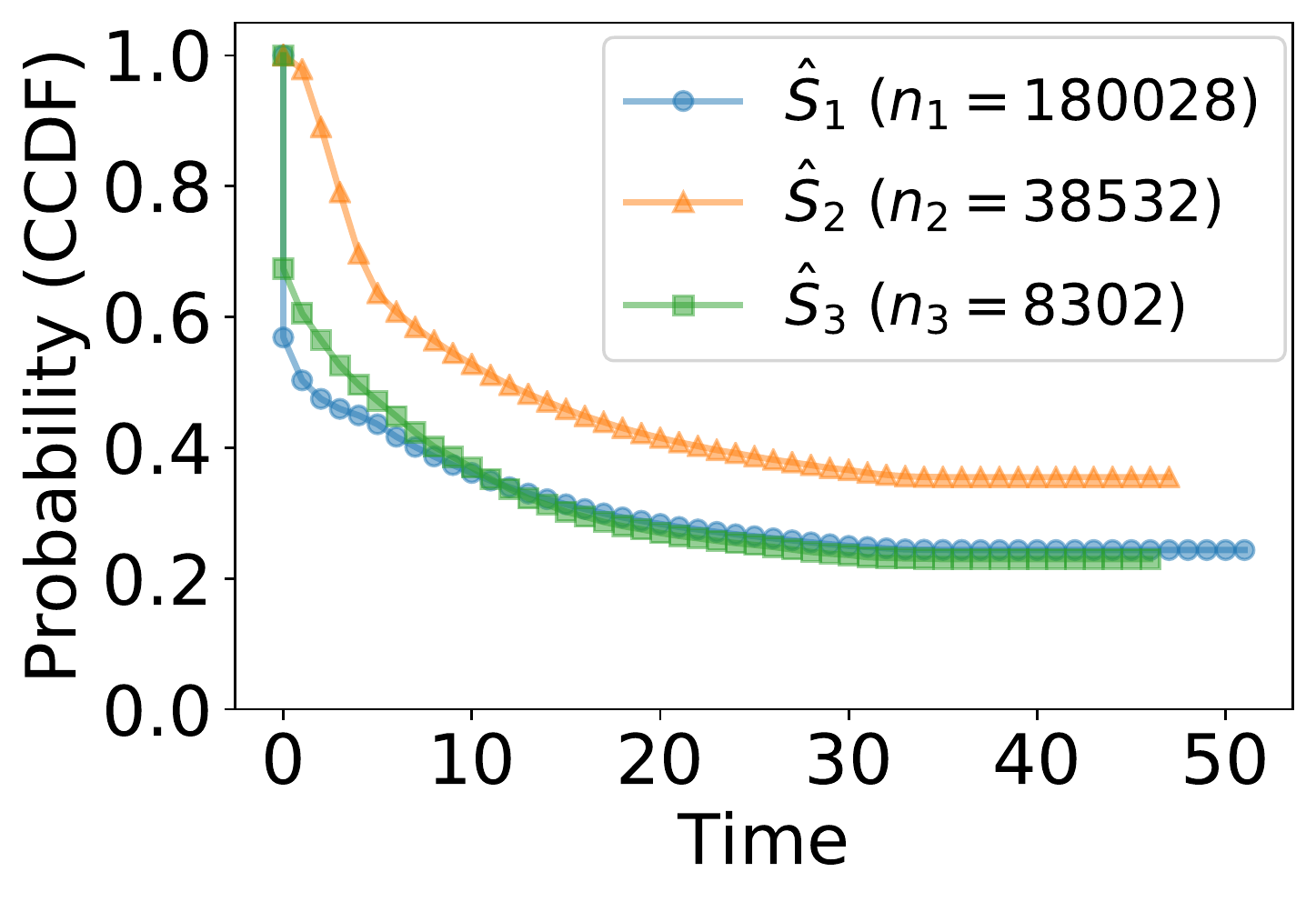}
			\caption{{ \semisup}}
		\end{subfigure}
	~~~~~~
		\begin{subfigure}{0.45\columnwidth}
			\includegraphics[scale=0.3]{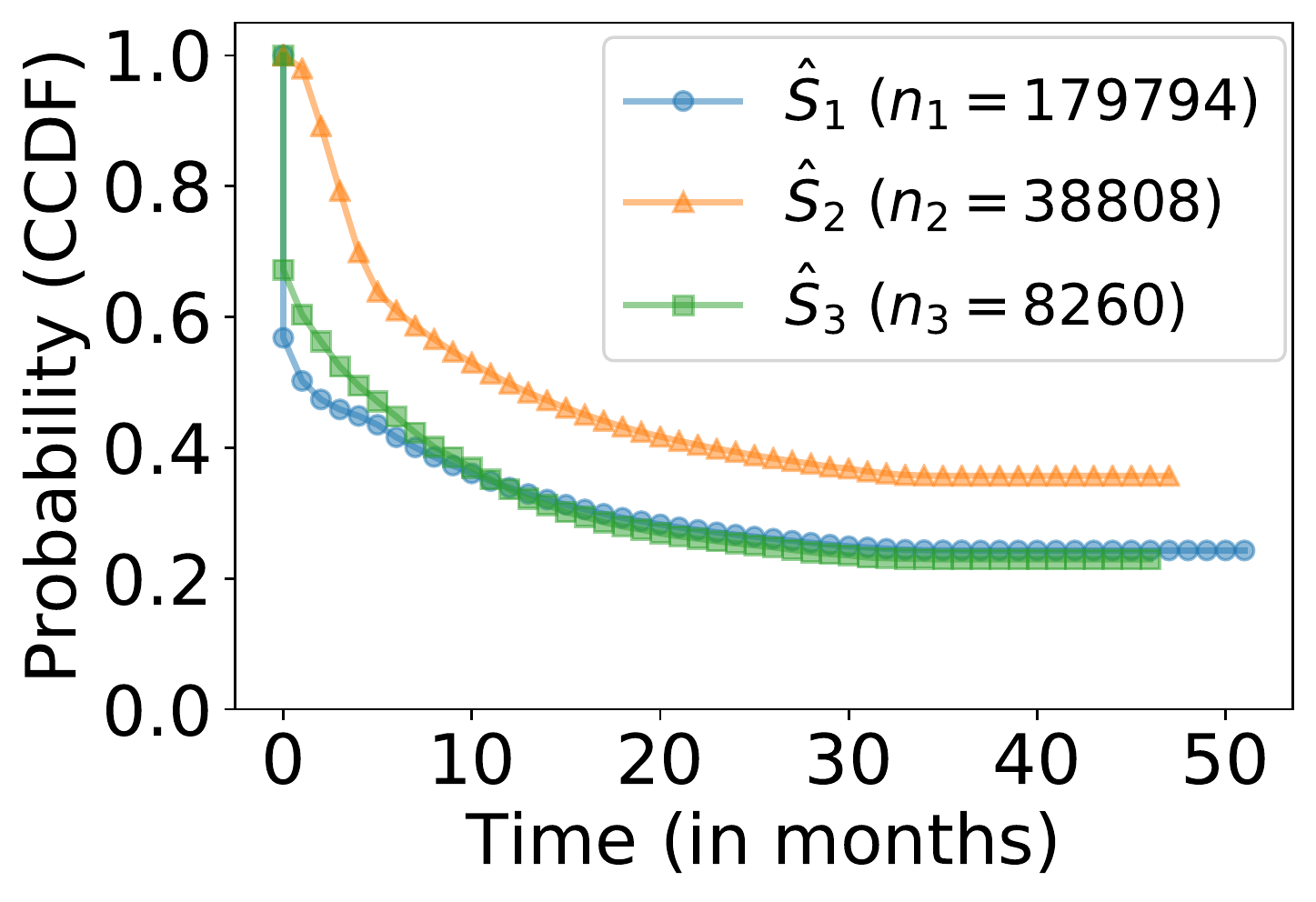}
			\caption{{ \ssc}}
		\end{subfigure}
	
		\begin{subfigure}{0.45\columnwidth}
			\includegraphics[scale=0.3]{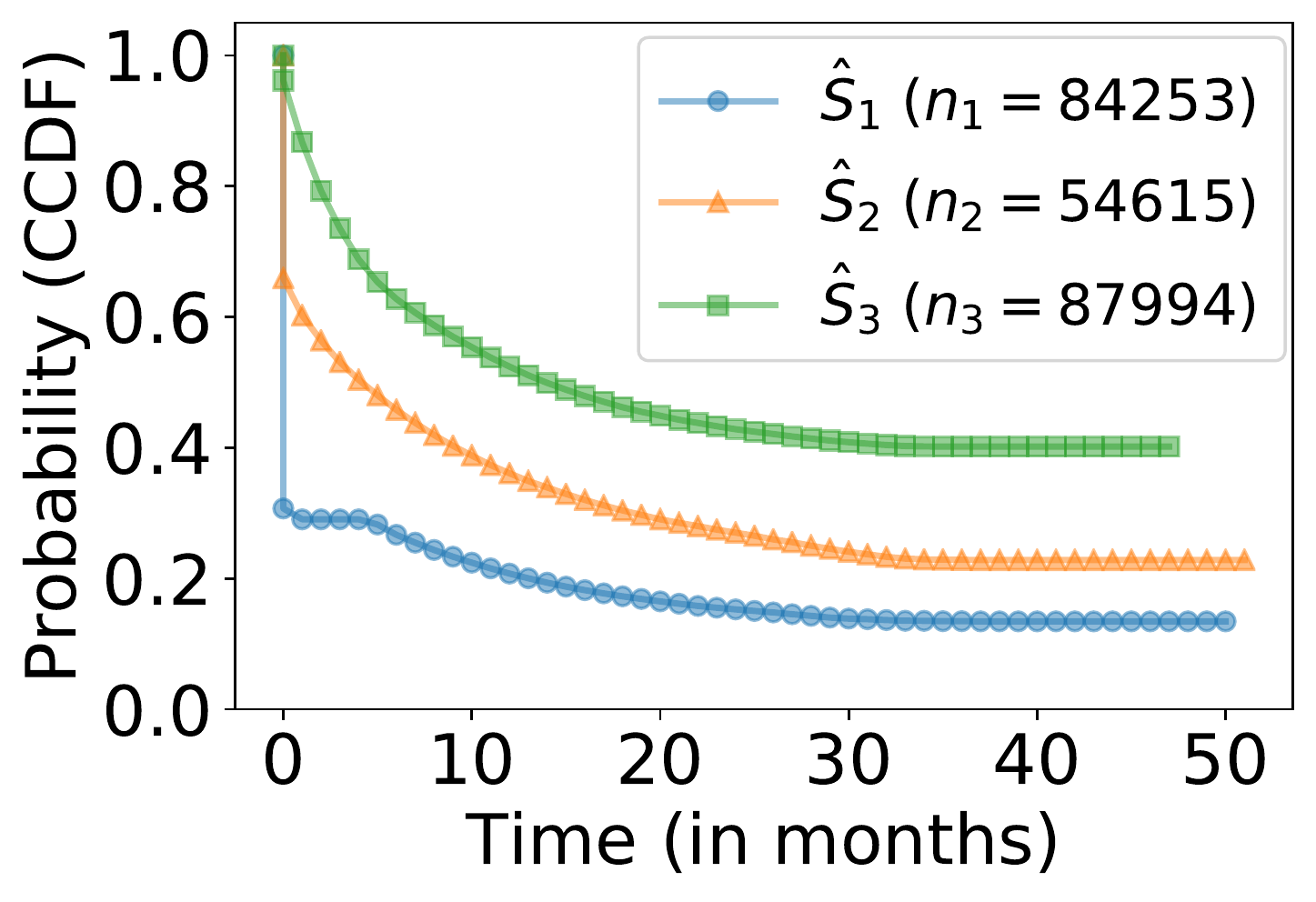}
			\caption{{ \deephitgmm}}
		\end{subfigure}
		~~~~~~
		\begin{subfigure}{0.45\columnwidth}
			\includegraphics[scale=0.3]{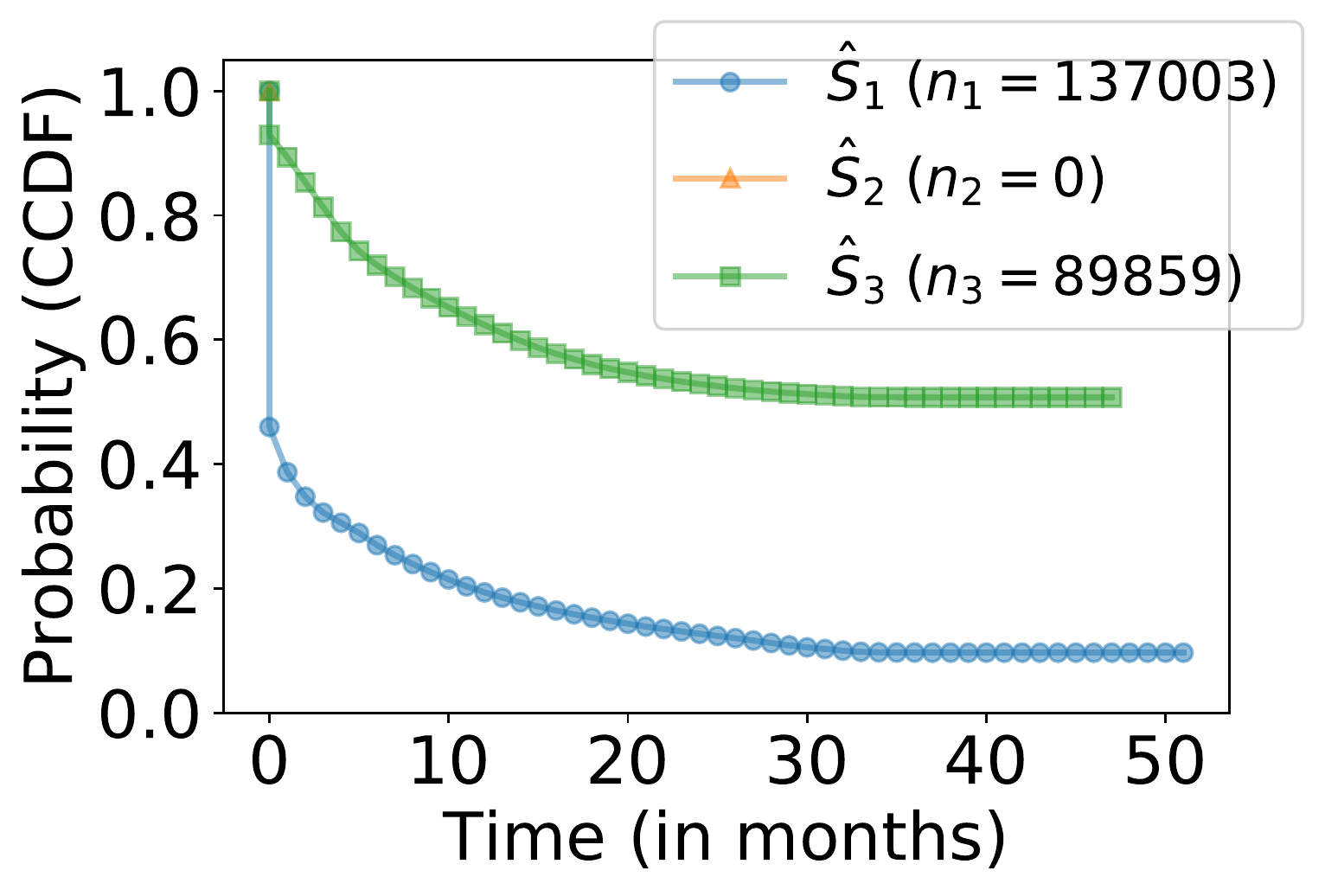}
			\caption{ \mmd \label{fig:cluster_distros_mmd}}
		\end{subfigure}
	\end{subfigure}	
	~~~~~~~
	\begin{subfigure}{0.3\columnwidth}
		\includegraphics[scale=0.3]{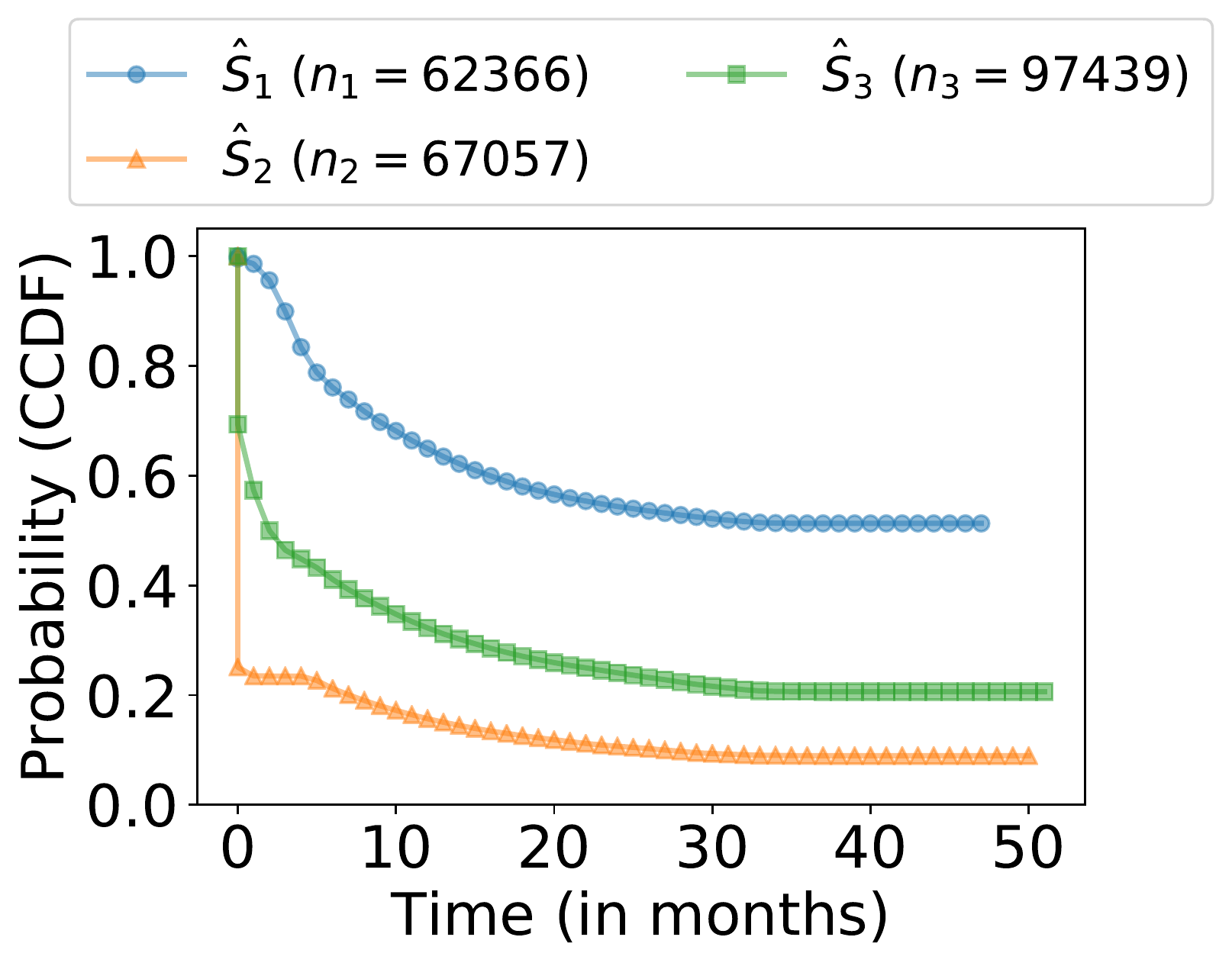}
		\caption{{\bf \kuiperub}}
	\end{subfigure}

	\caption{\footnotesize {\bf (Friendster)} Empirical lifetime distributions of clusters obtained from different methods for $K{=}3$ (legend shows cluster sizes $n_1, n_2, n_3$). Baseline methods \textbf{(a-c)} employ a two-stage clustering process and do not guarantee clusters with maximally different lifetime distributions. \textbf{(d)} \mmd suffers from sample anomalies ($n_2 = 0$). \textbf{(e)} {\bf \kuiperub obtains clusters with significantly different lifetime distributions (best Logrank scores).} 
		\label{fig:cluster_distros}
	}
\end{figure}

\newcolumntype{H}{>{\setbox0=\hbox\bgroup}c<{\egroup}@{}}
\begin{table*}[]
	\caption{\footnotesize {\bf (Friendster)} C-index (\%), Integrated Brier Score (\%) and Logrank score with standard errors  in parentheses for different methods\protect\footnotemark[3] and $K{=}2,4$ clusters with number of training examples $N^\text{(tr)}{=}10^5$. 
	\label{tab:results}}
	\vspace{-10pt}
	\footnotesize
	\centering
	\resizebox{1\columnwidth}{!}{%
		\begin{tabular}{m{1in}H rrr HHH rrr}
			\multicolumn{2}{c}{} &
			\multicolumn{3}{c}{$K=2$} & 
			\multicolumn{3}{c}{} & 
			\multicolumn{3}{c}{$K=4$}\\
			
			\cmidrule(l{-2pt}r{2pt}){3-5} 
			\cmidrule(l{-2pt}r{2pt}){9-11}
			
			\multicolumn{1}{l}{Method} & Termination & C-index $\uparrow$ (\%)  & I.B.S $\downarrow$ (\%) & Logrank Score $\uparrow$ & C-index $\uparrow$ (\%)  & I.B.S. $\downarrow$ (\%) & Logrank Score $\uparrow$ & C-index $\uparrow$ (\%)  & I.B.S. $\downarrow$ (\%) & Logrank Score $\uparrow$\\
			\toprule
			
	\semisup & timeout  & 64.42 (0.15) & 22.18 (0.02) & 5479.27 (~~~~38.06) & 64.23 (0.24) & 22.19 (0.02) & 5277.82 (~~~~43.63) & 67.18 (0.13) & 21.55 (0.03) & 13013.86 (~~182.07) \\ 
	
	\ssc & timeout  & 64.42 (0.18) & 22.17 (0.02) & 5557.41 (~~~~38.43) & 64.13 (0.21) & 22.18 (0.02) & 5400.75 (~~~~39.59) & 69.99 (0.28) & 21.62 (0.01) & 15204.81 (~~~~41.29) \\ 
	
	\deephitgmm & timeout & 64.33 (1.80) & 22.04 (0.23) & 9207.46 (5031.48) & 74.81 (0.11) & 20.97 (0.06) & 33880.55 (~~426.74) & \textbf{76.55 (0.12)} & 20.64 (0.02) & 40703.01 (~~477.25) \\ 
	
	\mmd & learnt & 67.49 (0.11) & 22.07 (0.02) & 27642.60 (1301.85) & 72.75 (1.24) & \textbf{18.94 (0.25)} & 50173.94 (4809.80) & 70.93 (1.80) & 22.40 (0.07) & 33030.93 (4277.24)  \\ 
	
	{\kuiperub} & learnt & \textbf{75.58 (0.15)} & \textbf{20.13 (0.02)} & \textbf{47837.25 (~~297.63)} & \textbf{78.48 (0.32)} & {19.70 (0.05)} & \textbf{54191.28 (~~436.72)} & \textbf{77.04 (0.88)} & \textbf{18.99 (0.20)} & \textbf{59236.36 (2126.55)} \\ 
			\bottomrule
		\end{tabular}
	}
\end{table*}

\begin{figure}[t]
		\begin{subfigure}{0.29\textwidth}
			\centering
			\includegraphics[scale=0.3]{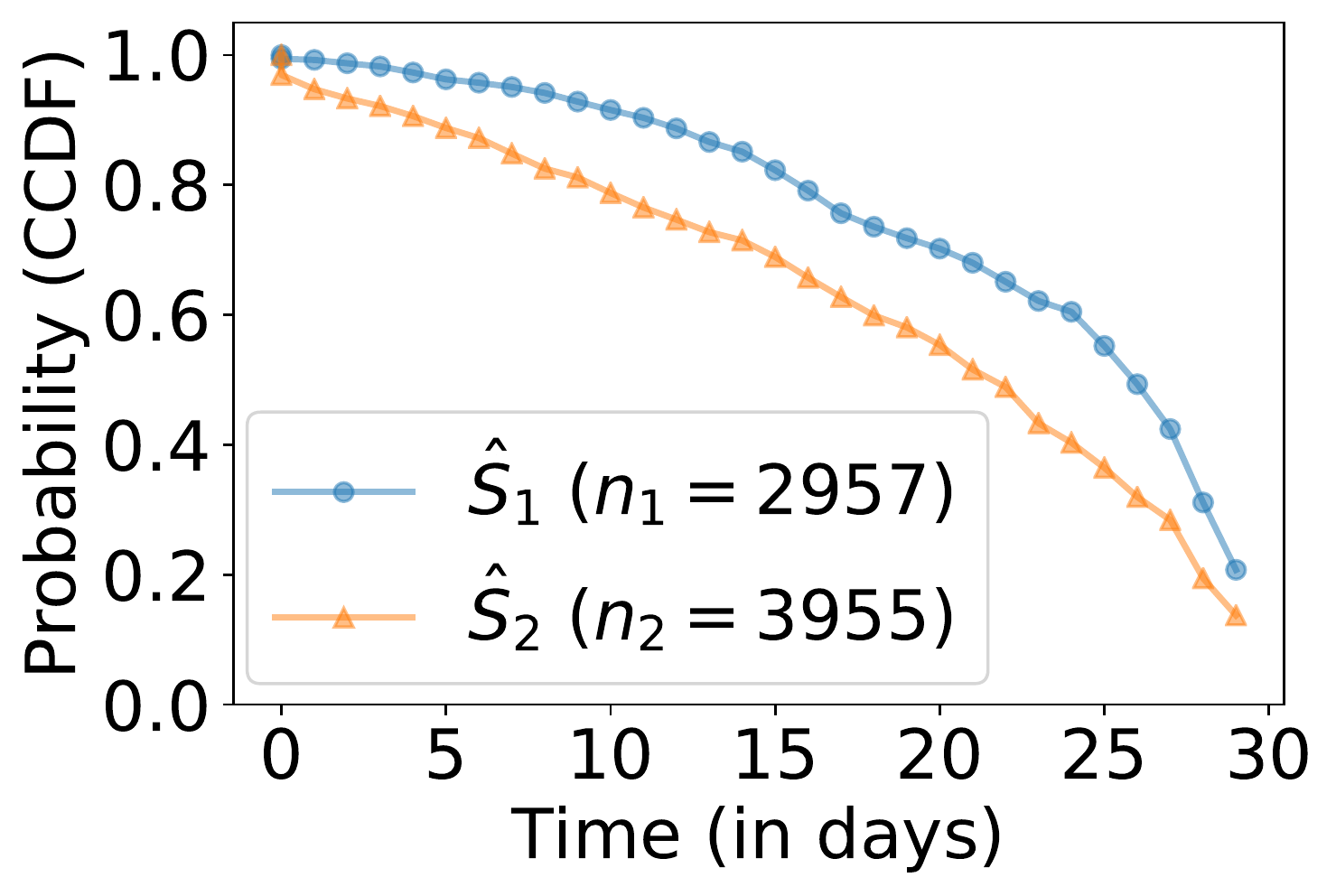}
			\caption{~~~~{\bf \kuiperub} \label{fig:cluster_distros_mimic}}
		\end{subfigure}
	~~~~~~~~~~~~~~~~
	\begin{subfigure}{0.7\textwidth}
		\resizebox{0.9\columnwidth}{!}{%
%
\begin{tabular}{lH rrr}
	\multicolumn{2}{c}{} & \multicolumn{2}{c}{$K=2$}\\
	\cmidrule(l{2pt}r{2pt}){3-5}
	Method & 
	{Termination} & 
	C-index $\uparrow$ (\%) & 
	Brier Score $\downarrow$ (\%) &
	Logrank $\uparrow$ (\%) \\
	\toprule
	\semisup & timeout  & 52.78 (0.76) & 15.89 (0.20) & 1177.52 (~~419.61) \\ 
	\ssc & timeout  & 52.32 (0.96) & 15.87 (0.15) & 12070.97 (5933.58) \\ 
	\deephitgmm & timeout & 64.93 (0.54) & 15.70 (0.15) & 17193.38 (1115.70) \\ 
	\mmd & learnt & 58.85 (0.60) & 15.03 (0.35) & 15783.84 (1101.45) \\
	\kuiperub & learnt & \textbf{66.30 (1.09)} & 15.50 (0.23) & \textbf{20525.26 (1145.00)} \\ 
	\bottomrule
\end{tabular}
		}
		\caption{~\label{tab:results-med}}
	\end{subfigure}
	\vspace{-5pt}
	\caption{\footnotesize \textbf{(MIMIC III)}
		\textbf{(a)} Empirical lifetime distributions obtained by \kuiperub for $K=2$.
		\textbf{(b)}~C-index (\%), Brier Score (\%) and Logrank score with standard errors for different methods on healthcare data for $K{=}2$.
	}
	\vspace{-15pt}
\end{figure}

\footnotetext[5]{More details about all the datasets and preprocessing steps can be found in the supplementary material.}
\paragraph{Synthetic experiment.}
We test our method with a synthetic dataset\footnotemark\  for which we have the true clusters as ground truth. We generate 3 clusters $C_1, C_2$ and $C_3$ with different lifetime distributions such that $C_2$ and $C_3$ have proportional hazards, but lifetime distribution of $C_1$ crosses the other two curves (shown in \cref{fig:examplecurves1}). We choose an arbitrary time of measurement, $\tm{=}150$, to imitate right censoring. We sample $10^4$ \subjects for each cluster, their features drawn from mixture of Gaussians.
The lifetime $T^{(u)}$ of a \subject $u$ is randomly sampled from the lifetime distribution of her ground truth cluster.
\cref{tab:results-sim} shows performance of the methods on three synthetic datasets: $\cD_{\{C_1, C_2\}}$, $\cD_{\{C_1, C_3\}}$, $\cD_{\{C_1, C_2, C_3\}}$. \kuiperub and \mmd were able to recover perfect ground truth cluster assignments on $\cD_{\{C_1, C_2\}}$ and $\cD_{\{C_1, C_3\}}$, whereas the baseline methods performed invariably worse. On the harder dataset $\cD_{\{C_1, C_2, C_3\}}$, \kuiperub recovered the ground truth clusters almost twice as better than any other method.

\paragraph{Friendster experiment.} \label{sec:dataset}
Friendster dataset\footnotemark[5] consists of 15 million users in the Friendster online social network along with the comments sent and received by the users. We consider a subset with 1.1 million users who had participated in at least one comment.
We use each user's profile information (like age, gender, location, etc.) as covariates, and define activity events as the comments sent or received by the user. The task is to cluster new test users using their covariates and their event information for $\tau{=5}$ months from joining.
Note that we do not observe \textit{termination} signals (i.e., account deletion) for \textit{any} \subject in the data.
We use an arbitrary window of $W_\text{fixed}=10$ months over the inactivity period to obtain termination signals ($\approx 65\%$ assumed to have quit) for the competing methods and for computing the evaluation metrics; \method does not require such arbitrary specification but learns a smooth timeout window during training.

Table~\ref{tab:results} shows results for $K{=}2, 4$ clusters. We note that the proposed method obtains higher C-index values and Brier scores compared to the baselines even without termination signals. \kuiperub achieves a significant improvement in Logrank scores compared to the baselines because its loss specifically maximizes for differences in empirical distributions, while also taking sample sizes into account. \mmd on the other hand does not account for sample sizes, and hence performs worse.
The empirical lifetime distributions of the clusters obtained from different methods for $K{=}3$ are shown in \cref{fig:cluster_distros}. The clusters obtained from the baselines~\textbf{(a-c)} are not substantially different from each other. Although \mmd obtains clusters with distinct lifetime distributions, it suffers from sample anomalies i.e., outputs clusters with very few or no \subjects (e.g., $\hat{S}_2$ in \cref{fig:cluster_distros_mmd}). \kuiperub outputs clusters that have significantly different lifetime distributions with the best Logrank scores. Corresponding plots of empirical lifetime distributions for $K{=}2,4,5$ are presented in the supplementary material. For $K{=}4$, we observe that \kuiperub finds crossing yet distinct lifetime distributions (see \cref{fig:interesting_crossing_curve} in the supplementary material).


\emph{Qualitative Analysis}: In the clusters found by \kuiperub in Friendster, we see that a user in a \textit{low-risk} cluster has on average 7.76 friends, sends 5.06 comments with an average response time of 20 days. On the other hand, a user in a \textit{high-risk} cluster has just 1.56 friends on average and sends far fewer comments, around 1.07, but with a fast response time of 1.32 days. Interestingly, users that stayed longer in the system had lower activity rate in the beginning.

\paragraph{MIMIC~III experiment.} 
MIMIC~III dataset\footnotemark[5]~\citep{johnson2016mimic} consists of around 46500 patients admitted to the Intensive Care Unit (ICU). 
The task is to cluster the patients based on their mortality within 30 days of admission to the ICU~\citep{purushotham2017benchmark}. Unlike Friendster experiments, we can observe terminal events. Lifetime of a patient is right-censored if she was discharged within the 30-day period ($\approx 84\%$ right-censored).
We use only the initial $\tau{=}24$ hours of patient measurements (e.g. heart rate, respiratory rate, etc.) to perform the clustering. 
Results for $K{=}2$ in Table~\ref{tab:results-med} show that \kuiperub achieves significantly better C-index and Logrank scores, followed by \deephitgmm. The corresponding empirical lifetime distributions of the clusters are shown in \cref{fig:cluster_distros_mimic}.

\vspace{-10pt}
\section{Conclusion}\label{sec:conclusion}
\vspace{-10pt}
In this work we introduced 
Kuiper-based nonparametric loss function to maximize the divergence between empirical distributions, and a corresponding upper bound with easy-to-compute gradients.
The loss function is then used to train a feedforward neural network to inductively map \subjects into $K$ lifetime-based clusters
without requiring \textit{termination} signals. 
We show that this approach produces clusters with better C-index values and Logrank scores than competing methods.

\subsection*{Acknowledgments}
This work was sponsored in part by the ARO, under the U.S. Army Research Laboratory contract number W911NF-09-2-0053, the Purdue Integrative Data Science Initiative and the Purdue Research foundation, the DOD through SERC under contract number HQ0034-13-D-0004 RT \#206, the National Science Foundation under contract numbers CCF-1918483 and IIS-1618690, and AFRL under contract number FA8650-18-2-7879.

\bibliographystyle{plainnat}
\bibliography{paper}

\begin{thebibliography}{3}
\providecommand{\natexlab}[1]{#1}
\providecommand{\url}[1]{\texttt{#1}}
\expandafter\ifx\csname urlstyle\endcsname\relax
  \providecommand{\doi}[1]{doi: #1}\else
  \providecommand{\doi}{doi: \begingroup \urlstyle{rm}\Url}\fi

\bibitem[Bengio \& LeCun(2007)Bengio and LeCun]{Bengio+chapter2007}
Yoshua Bengio and Yann LeCun.
\newblock Scaling learning algorithms towards {AI}.
\newblock In \emph{Large Scale Kernel Machines}. MIT Press, 2007.

\bibitem[Goodfellow et~al.(2016)Goodfellow, Bengio, Courville, and
  Bengio]{goodfellow2016deep}
Ian Goodfellow, Yoshua Bengio, Aaron Courville, and Yoshua Bengio.
\newblock \emph{Deep learning}, volume~1.
\newblock MIT Press, 2016.

\bibitem[Hinton et~al.(2006)Hinton, Osindero, and Teh]{Hinton06}
Geoffrey~E. Hinton, Simon Osindero, and Yee~Whye Teh.
\newblock A fast learning algorithm for deep belief nets.
\newblock \emph{Neural Computation}, 18:\penalty0 1527--1554, 2006.

\end{thebibliography}


\begin{thebibliography}{58}
\providecommand{\natexlab}[1]{#1}
\providecommand{\url}[1]{\texttt{#1}}
\expandafter\ifx\csname urlstyle\endcsname\relax
  \providecommand{\doi}[1]{doi: #1}\else
  \providecommand{\doi}{doi: \begingroup \urlstyle{rm}\Url}\fi

\bibitem[pre()]{predictsurvival2}


\bibitem[Aggarwal et~al.(2004)Aggarwal, Gates, and Yu]{supcluster1}
Charu~C Aggarwal, Stephen~C Gates, and Philip~S Yu.
\newblock On using partial supervision for text categorization.
\newblock \emph{TKDE}, 2004.

\bibitem[Alaa and van~der Schaar(2017)]{alaadeep}
Ahmed~M Alaa and Mihaela van~der Schaar.
\newblock Deep multi-task gaussian processes for survival analysis with
  competing risks.
\newblock In \emph{NIPS}, 2017.

\bibitem[Alizadeh et~al.(2000)Alizadeh, Eisen, Davis, Ma, Lossos, Rosenwald,
  Boldrick, Sabet, Tran, Yu, et~al.]{uncluster2}
Ash~A Alizadeh, Michael~B Eisen, R~Eric Davis, Chi Ma, Izidore~S Lossos,
  Andreas Rosenwald, Jennifer~C Boldrick, Hajeer Sabet, Truc Tran, Xin Yu,
  et~al.
\newblock Distinct types of diffuse large b-cell lymphoma identified by gene
  expression profiling.
\newblock \emph{Nature}, 403\penalty0 (6769):\penalty0 503--511, 2000.

\bibitem[Arjovsky et~al.(2017)Arjovsky, Chintala, and Bottou]{Arjovsky2017}
Martin Arjovsky, Soumith Chintala, and L{\'{e}}on Bottou.
\newblock {Wasserstein Generative Adversarial Networks}.
\newblock In \emph{ICML}, 2017.

\bibitem[Bair and Tibshirani(2004)]{bair}
Eric Bair and Robert Tibshirani.
\newblock Semi-supervised methods to predict patient survival from gene
  expression data.
\newblock \emph{PLoS Biol}, 2\penalty0 (4):\penalty0 e108, 2004.

\bibitem[Basu et~al.(2002)Basu, Banerjee, and Mooney]{supcluster2}
Sugato Basu, Arindam Banerjee, and Raymond Mooney.
\newblock Semi-supervised clustering by seeding.
\newblock In \emph{ICML}, 2002.

\bibitem[Basu et~al.(2004)Basu, Bilenko, and Mooney]{supcluster3}
Sugato Basu, Mikhail Bilenko, and Raymond~J Mooney.
\newblock A probabilistic framework for semi-supervised clustering.
\newblock In \emph{SIGKDD}, 2004.

\bibitem[Bhattacharjee et~al.(2001)Bhattacharjee, Richards, Staunton, Li,
  Monti, Vasa, Ladd, Beheshti, Bueno, Gillette,
  et~al.]{bhattacharjee2001classification}
Arindam Bhattacharjee, William~G Richards, Jane Staunton, Cheng Li, Stefano
  Monti, Priya Vasa, Christine Ladd, Javad Beheshti, Raphael Bueno, Michael
  Gillette, et~al.
\newblock Classification of human lung carcinomas by mrna expression profiling
  reveals distinct adenocarcinoma subclasses.
\newblock \emph{Proceedings of the National Academy of Sciences}, 98\penalty0
  (24):\penalty0 13790--13795, 2001.

\bibitem[Bland and Altman(2004)]{logranktestassumption}
J~Martin Bland and Douglas~G Altman.
\newblock The logrank test.
\newblock \emph{Bmj}, 328\penalty0 (7447):\penalty0 1073, 2004.

\bibitem[Brier(1950)]{brier1950verification}
Glenn~W Brier.
\newblock Verification of forecasts expressed in terms of probability.
\newblock \emph{Monthey Weather Review}, 78\penalty0 (1):\penalty0 1--3, 1950.

\bibitem[Bullinger et~al.(2004)Bullinger, D{\"o}hner, Bair, Fr{\"o}hling,
  Schlenk, Tibshirani, D{\"o}hner, and Pollack]{bullinger2004use}
Lars Bullinger, Konstanze D{\"o}hner, Eric Bair, Stefan Fr{\"o}hling, Richard~F
  Schlenk, Robert Tibshirani, Hartmut D{\"o}hner, and Jonathan~R Pollack.
\newblock Use of gene-expression profiling to identify prognostic subclasses in
  adult acute myeloid leukemia.
\newblock \emph{New England Journal of Medicine}, 350\penalty0 (16):\penalty0
  1605--1616, 2004.

\bibitem[Chapfuwa et~al.(2018)Chapfuwa, Tao, Li, Page, Goldstein, Carin, and
  Henao]{chapfuwa2018adversarial}
Paidamoyo Chapfuwa, Chenyang Tao, Chunyuan Li, Courtney Page, Benjamin
  Goldstein, Lawrence Carin, and Ricardo Henao.
\newblock Adversarial time-to-event modeling.
\newblock \emph{arXiv preprint arXiv:1804.03184}, 2018.

\bibitem[Chuang et~al.(2005)Chuang, Cai, Douglass, Wei, and
  Dodson]{chuang2005frailty}
SK~Chuang, T~Cai, CW~Douglass, LJ~Wei, and TB~Dodson.
\newblock Frailty approach for the analysis of clustered failure time
  observations in dental research.
\newblock \emph{Journal of dental research}, 84\penalty0 (1):\penalty0 54--58,
  2005.

\bibitem[Cox(1992)]{coxproportionalhazardsmodel}
David~R Cox.
\newblock Regression models and life-tables.
\newblock In \emph{Breakthroughs in statistics}, pages 527--541. Springer,
  1992.

\bibitem[Dudley(2002)]{Dudley2002}
Richard~M Dudley.
\newblock \emph{{Real analysis and probability}}, volume~74.
\newblock Cambridge University Press, 2002.
\newblock ISBN 0521007542.

\bibitem[Eisen et~al.(1998)Eisen, Spellman, Brown, and Botstein]{uncluster1}
Michael~B Eisen, Paul~T Spellman, Patrick~O Brown, and David Botstein.
\newblock Cluster analysis and display of genome-wide expression patterns.
\newblock \emph{PNAS}, 1998.

\bibitem[Fukumizu et~al.(2008)Fukumizu, Gretton, Sun, and
  Sch{\"{o}}lkopf]{Fukumizu2008}
K.~Fukumizu, A.~Gretton, X.~Sun, and B.~Sch{\"{o}}lkopf.
\newblock {Kernel Measures of Conditional Dependence}.
\newblock In \emph{NIPS}, 2008.

\bibitem[Gan et~al.(2018)Gan, Li, Zou, Xin, and Guan]{gan2018identification}
Yanglan Gan, Ning Li, Guobing Zou, Yongchang Xin, and Jihong Guan.
\newblock Identification of cancer subtypes from single-cell rna-seq data using
  a consensus clustering method.
\newblock \emph{BMC medical genomics}, 11\penalty0 (6):\penalty0 117, 2018.

\bibitem[Gaynor and Bair(2013)]{ssc}
Sheila Gaynor and Eric Bair.
\newblock Identification of biologically relevant subtypes via preweighted
  sparse clustering.
\newblock \emph{Biostatistics}, pages 1--33, 2013.

\bibitem[Graf et~al.(1999)Graf, Schmoor, Sauerbrei, and
  Schumacher]{graf1999assessment}
Erika Graf, Claudia Schmoor, Willi Sauerbrei, and Martin Schumacher.
\newblock Assessment and comparison of prognostic classification schemes for
  survival data.
\newblock \emph{Statistics in medicine}, 18\penalty0 (17-18):\penalty0
  2529--2545, 1999.

\bibitem[Gretton et~al.(2009)Gretton, Fukumizu, Harchaoui, and
  Sriperumbudur]{Gretton2009b}
Arthur Gretton, Kenji Fukumizu, Za{\"{i}}d Harchaoui, and Bharath~K.
  Sriperumbudur.
\newblock {A Fast, Consistent Kernel Two-Sample Test}.
\newblock In \emph{Advances in Neural Information Processing Systems}, 2009.

\bibitem[Gretton et~al.(2012)Gretton, Borgwardt, Rasch, Sch{\"o}lkopf, and
  Smola]{mmd}
Arthur Gretton, Karsten~M Borgwardt, Malte~J Rasch, Bernhard Sch{\"o}lkopf, and
  Alexander Smola.
\newblock A kernel two-sample test.
\newblock \emph{Journal of Machine Learning Research}, 13\penalty0
  (Mar):\penalty0 723--773, 2012.

\bibitem[Harrell et~al.(1982)Harrell, Califf, Pryor, Lee, and
  Rosati]{harrell1982evaluating}
Frank~E Harrell, Robert~M Califf, David~B Pryor, Kerry~L Lee, and Robert~A
  Rosati.
\newblock Evaluating the yield of medical tests.
\newblock \emph{Jama}, 247\penalty0 (18):\penalty0 2543--2546, 1982.

\bibitem[Hothorn et~al.(2006)Hothorn, B{\"u}hlmann, Dudoit, Molinaro, and Van
  Der~Laan]{predictsurvival3}
Torsten Hothorn, Peter B{\"u}hlmann, Sandrine Dudoit, Annette Molinaro, and
  Mark~J Van Der~Laan.
\newblock Survival ensembles.
\newblock \emph{Biostatistics}, 7\penalty0 (3):\penalty0 355--373, 2006.

\bibitem[Hougaard(1995)]{hougaard1995frailty}
Philip Hougaard.
\newblock Frailty models for survival data.
\newblock \emph{Lifetime data analysis}, 1\penalty0 (3):\penalty0 255--273,
  1995.

\bibitem[Huang and Wolfe(2002)]{huang2002frailty}
Xuelin Huang and Robert~A Wolfe.
\newblock A frailty model for informative censoring.
\newblock \emph{Biometrics}, 58\penalty0 (3):\penalty0 510--520, 2002.

\bibitem[Hubert and Arabie(1985)]{hubert1985comparing}
Lawrence Hubert and Phipps Arabie.
\newblock Comparing partitions.
\newblock \emph{Journal of classification}, 2\penalty0 (1):\penalty0 193--218,
  1985.

\bibitem[Ishwaran et~al.(2008)Ishwaran, Kogalur, Blackstone, Lauer,
  et~al.]{ishwaran2008random}
Hemant Ishwaran, Udaya~B Kogalur, Eugene~H Blackstone, Michael~S Lauer, et~al.
\newblock Random survival forests.
\newblock \emph{The annals of applied statistics}, 2\penalty0 (3):\penalty0
  841--860, 2008.

\bibitem[Ishwaran et~al.(2010)Ishwaran, Kogalur, Gorodeski, Minn, and
  Lauer]{featureselection1}
Hemant Ishwaran, Udaya~B Kogalur, Eiran~Z Gorodeski, Andy~J Minn, and Michael~S
  Lauer.
\newblock High-dimensional variable selection for survival data.
\newblock \emph{Journal of the American Statistical Association}, 105\penalty0
  (489):\penalty0 205--217, 2010.

\bibitem[Johnson et~al.(2016)Johnson, Pollard, Shen, Li-wei, Feng, Ghassemi,
  Moody, Szolovits, Celi, and Mark]{johnson2016mimic}
Alistair~EW Johnson, Tom~J Pollard, Lu~Shen, H~Lehman Li-wei, Mengling Feng,
  Mohammad Ghassemi, Benjamin Moody, Peter Szolovits, Leo~Anthony Celi, and
  Roger~G Mark.
\newblock Mimic-iii, a freely accessible critical care database.
\newblock \emph{Scientific data}, 3:\penalty0 160035, 2016.

\bibitem[Kaplan and Meier(1958)]{kaplanmeier}
Edward~L Kaplan and Paul Meier.
\newblock Nonparametric estimation from incomplete observations.
\newblock \emph{Journal of the American statistical association}, 53\penalty0
  (282):\penalty0 457--481, 1958.

\bibitem[Katzman et~al.(2018)Katzman, Shaham, Cloninger, Bates, Jiang, and
  Kluger]{katzman2018deepsurv}
Jared~L Katzman, Uri Shaham, Alexander Cloninger, Jonathan Bates, Tingting
  Jiang, and Yuval Kluger.
\newblock Deepsurv: personalized treatment recommender system using a cox
  proportional hazards deep neural network.
\newblock \emph{BMC medical research methodology}, 18\penalty0 (1):\penalty0
  24, 2018.

\bibitem[Kingma and Ba(2014)]{kingma2014adam}
Diederik Kingma and Jimmy Ba.
\newblock Adam: A method for stochastic optimization.
\newblock \emph{arXiv preprint arXiv:1412.6980}, 2014.

\bibitem[Kuiper(1960)]{kuipertest}
Nicolaas~H Kuiper.
\newblock Tests concerning random points on a circle.
\newblock In \emph{Indagationes Mathematicae (Proceedings)}, 1960.

\bibitem[Kullback and Leibler(1951)]{kl1951}
Solomon Kullback and Richard~A Leibler.
\newblock On information and sufficiency.
\newblock \emph{The annals of mathematical statistics}, 22\penalty0
  (1):\penalty0 79--86, 1951.

\bibitem[Lagani and Tsamardinos(2010)]{featureselection2}
Vincenzo Lagani and Ioannis Tsamardinos.
\newblock Structure-based variable selection for survival data.
\newblock \emph{Bioinformatics}, 26\penalty0 (15):\penalty0 1887--1894, 2010.

\bibitem[Lee et~al.(2018)Lee, Zame, Yoon, and van~der Schaar]{lee2018deephit}
Changhee Lee, William~R Zame, Jinsung Yoon, and Mihaela van~der Schaar.
\newblock Deephit: A deep learning approach to survival analysis with competing
  risks.
\newblock AAAI, 2018.

\bibitem[Li et~al.(2015{\natexlab{a}})Li, Han, Hou, Chen, and
  Chen]{li2015statistical}
Huimin Li, Dong Han, Yawen Hou, Huilin Chen, and Zheng Chen.
\newblock Statistical inference methods for two crossing survival curves: a
  comparison of methods.
\newblock \emph{PLoS One}, 10\penalty0 (1):\penalty0 e0116774,
  2015{\natexlab{a}}.

\bibitem[Li et~al.(2015{\natexlab{b}})Li, Swersky, and Zemel]{li2015generative}
Yujia Li, Kevin Swersky, and Rich Zemel.
\newblock Generative moment matching networks.
\newblock In \emph{ICML}, 2015{\natexlab{b}}.

\bibitem[Luck et~al.(2017)Luck, Sylvain, Cardinal, Lodi, and
  Bengio]{luck2017deep}
Margaux Luck, Tristan Sylvain, H{\'e}lo{\"\i}se Cardinal, Andrea Lodi, and
  Yoshua Bengio.
\newblock Deep learning for patient-specific kidney graft survival analysis.
\newblock \emph{arXiv preprint arXiv:1705.10245}, 2017.

\bibitem[Mantel(1966)]{mantel1966evaluation}
Nathan Mantel.
\newblock Evaluation of survival data and two new rank order statistics arising
  in its consideration.
\newblock \emph{Cancer Chemother Rep}, 50:\penalty0 163--170, 1966.

\bibitem[Massey~Jr(1951)]{massey1951kolmogorov}
Frank~J Massey~Jr.
\newblock The kolmogorov-smirnov test for goodness of fit.
\newblock \emph{Journal of the American statistical Association}, 46\penalty0
  (253):\penalty0 68--78, 1951.

\bibitem[Nigam et~al.(1998)Nigam, McCallum, Thrun, Mitchell,
  et~al.]{supcluster4}
Kamal Nigam, Andrew McCallum, Sebastian Thrun, Tom Mitchell, et~al.
\newblock Learning to classify text from labeled and unlabeled documents.
\newblock \emph{AAAI/IAAI}, 792, 1998.

\bibitem[Paszke et~al.(2017)Paszke, Gross, Chintala, Chanan, Yang, DeVito, Lin,
  Desmaison, Antiga, and Lerer]{pytorch}
Adam Paszke, Sam Gross, Soumith Chintala, Gregory Chanan, Edward Yang, Zachary
  DeVito, Zeming Lin, Alban Desmaison, Luca Antiga, and Adam Lerer.
\newblock Automatic differentiation in pytorch.
\newblock 2017.

\bibitem[Peto and Peto(1972)]{logranktest2}
Richard Peto and Julian Peto.
\newblock Asymptotically efficient rank invariant test procedures.
\newblock \emph{Journal of the Royal Statistical Society. Series A (General)},
  pages 185--207, 1972.

\bibitem[Press et~al.(1996)Press, Teukolsky, Vetterling, and
  Flannery]{press1996numerical}
William~H Press, Saul~A Teukolsky, William~T Vetterling, and Brian~P Flannery.
\newblock \emph{Numerical recipes in C}, volume~2.
\newblock Cambridge university press Cambridge, 1996.

\bibitem[Purushotham et~al.(2017)Purushotham, Meng, Che, and
  Liu]{purushotham2017benchmark}
Sanjay Purushotham, Chuizheng Meng, Zhengping Che, and Yan Liu.
\newblock Benchmark of deep learning models on large healthcare mimic datasets.
\newblock \emph{arXiv preprint arXiv:1710.08531}, 2017.

\bibitem[Rand(1971)]{rand1971objective}
William~M Rand.
\newblock Objective criteria for the evaluation of clustering methods.
\newblock \emph{Journal of the American Statistical association}, 66\penalty0
  (336):\penalty0 846--850, 1971.

\bibitem[Ranganath et~al.(2016)Ranganath, Perotte, Elhadad, and
  Blei]{ranganath2016deep}
Rajesh Ranganath, Adler Perotte, No{\'e}mie Elhadad, and David Blei.
\newblock Deep survival analysis.
\newblock \emph{arXiv preprint arXiv:1608.02158}, 2016.

\bibitem[Ren et~al.(2018)Ren, Qin, Zheng, Yang, Zhang, Qiu, and
  Yu]{ren2018deep}
Kan Ren, Jiarui Qin, Lei Zheng, Zhengyu Yang, Weinan Zhang, Lin Qiu, and Yong
  Yu.
\newblock Deep recurrent survival analysis.
\newblock \emph{arXiv preprint arXiv:1809.02403}, 2018.

\bibitem[Ribeiro and Faloutsos(2015)]{brunowsdm}
Bruno Ribeiro and Christos Faloutsos.
\newblock Modeling website popularity competition in the attention-activity
  marketplace.
\newblock In \emph{WSDM}, 2015.

\bibitem[Shivaswamy et~al.(2007)Shivaswamy, Chu, and Jansche]{predictsurvival4}
Pannagadatta~K Shivaswamy, Wei Chu, and Martin Jansche.
\newblock A support vector approach to censored targets.
\newblock In \emph{ICDM}, 2007.

\bibitem[S{\o}rlie et~al.(2001)S{\o}rlie, Perou, Tibshirani, Aas, Geisler,
  Johnsen, Hastie, Eisen, Van De~Rijn, Jeffrey, et~al.]{sorlie2001gene}
Therese S{\o}rlie, Charles~M Perou, Robert Tibshirani, Turid Aas, Stephanie
  Geisler, Hilde Johnsen, Trevor Hastie, Michael~B Eisen, Matt Van De~Rijn,
  Stefanie~S Jeffrey, et~al.
\newblock Gene expression patterns of breast carcinomas distinguish tumor
  subclasses with clinical implications.
\newblock \emph{Proceedings of the National Academy of Sciences}, 98\penalty0
  (19):\penalty0 10869--10874, 2001.

\bibitem[Sun et~al.(2012)Sun, Han, Aggarwal, and Chawla]{whenwillithappen}
Yizhou Sun, Jiawei Han, Charu~C Aggarwal, and Nitesh~V Chawla.
\newblock When will it happen?: relationship prediction in heterogeneous
  information networks.
\newblock In \emph{WSDM}, 2012.

\bibitem[Tygert(2010)]{tygert2010statistical}
Mark Tygert.
\newblock Statistical tests for whether a given set of independent, identically
  distributed draws comes from a specified probability density.
\newblock \emph{PNAS}, 2010.

\bibitem[Witten and Tibshirani(2010{\natexlab{a}})]{predictsurvival1}
Daniela~M Witten and Robert Tibshirani.
\newblock Survival analysis with high-dimensional covariates.
\newblock \emph{Statistical methods in medical research}, 19\penalty0
  (1):\penalty0 29--51, 2010{\natexlab{a}}.

\bibitem[Witten and Tibshirani(2010{\natexlab{b}})]{sparseclustering}
Daniela~M Witten and Robert Tibshirani.
\newblock A framework for feature selection in clustering.
\newblock \emph{Journal of the American Statistical Association}, 105\penalty0
  (490):\penalty0 713--726, 2010{\natexlab{b}}.

\end{thebibliography}

\cleardoublepage

\appendix
\section{Appendix}
\subsection{Notations}
 \begin{table}[h!]
	\centering
	\scalebox{1}{
		\begin{tabular}{lp{0.5\textwidth}}
			\toprule
			Notation & Description \\
			\midrule
			$\square^{(u)}$ & Variables indexed by a \subject $u$\\
			{$\square_k$} & Random variables indexed by a random subject of cluster $k$\\
			$X$  & Covariates of the \subject \\
			{$Y_i$}  & {Inter-event time between} $i$-th and $(i-1)$-st activity events \\
			$\eventmarks_i$ & Covariates of event $i$ \\
			$A_i$ & $A_i = 1$ denotes whether $i$-th activity event is terminal \\
			$T$ &  True lifetime of the \subject (unobserved) \\
			$H$ &  Observed lifetime of the \subject \\
			$F$ & CDF of the true lifetime $T$\\
			$S$ & True lifetime distribution ($\equiv$ CCDF of $T$)\\
			$\hat{S}$ & Empirical lifetime distribution (discrete vector) \\
			$\hat{S}[t]$ & $\hat{S}$ indexed by $t \in \{0, 1, 2 \ldots \}$\\
			$\timesincelast$ & Time elapsed since the last observed activity event before $\tm$\\
			
			\bottomrule
		\end{tabular}
	}
	\caption{Table of notations.\vspace{-15pt}
		\label{tab:notations}}
\end{table}

\subsection{Proof of \cref{lemma:mykm}}
	\begin{proposition*}[\cref{lemma:mykm}] 
		Given the training data $\mathcal{D}$, 
		a cluster $k$, 
		the cluster assignment probabilities $\{\alpha^{(u)}_k(W_1)\}_{u \in \mathcal{\cD}}$,
		and the probabilities of termination $\{\beta^{(u)}(W_2)\}_{u \in \mathcal{\cD}}$, 
		the maximum likelihood estimate of the empirical lifetime distribution of cluster $k$ is given by,
		\vspace{-5pt}
		\begin{align*}
		\hat{S}_k(W_1, W_2; \cD)[t] = \prod_{j = 0}^t \frac{s_{k}(W_1; \cD)[j] - d_{k}(W_1, W_2; \cD)[j]}{s_{k}(W_1; \cD)[j]} \:,
		\vspace{-5pt}
		\end{align*}
		for all $t \in \{0, 1, \ldots, t_\text{max}\}$, where, 
		$
		s_{k}(W_1; \cD)[j] = \sum_{u \in \cD} \one[H^{(u)} \geq j]  \cdot \alpha^{(u)}_{k}(W_1),
		$
		is the expected number of \subjects in cluster $k$ who are at risk (of termination) at time $j$, and,
		$
		d_{k}(W_1, W_2; \cD)[j] = \sum_{u \in \cD} \one[H^{(u)} = j] \cdot \beta^{(u)}(W_2) \cdot \alpha^{(u)}_{ k}(W_1)\:,
		$
		is the expected number of \subjects that are predicted to have had a termination event at time $j$.
	\end{proposition*}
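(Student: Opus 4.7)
The plan is to derive the estimator as a standard maximum likelihood computation, extending the classical Kaplan-Meier derivation to accommodate the two non-standard ingredients here: soft cluster memberships $\alpha^{(u)}_k(W_1)$ and probabilistic termination weights $\beta^{(u)}(W_2)$. The key reparameterization is to work in terms of the discrete hazards $h_k[j] := P(T_k = j \mid T_k \geq j)$ for $j \in \{0, \ldots, t_{\max}\}$, since the survival function factorizes as $S_k[t] = \prod_{j=0}^{t}(1 - h_k[j])$, and the MLE problem decouples across the $h_k[j]$'s once the log-likelihood is written out.

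First, I would write the likelihood contribution of a \subject $u$ to cluster $k$ as a mixture: with weight $\beta^{(u)}$ the subject truly terminated at $H^{(u)}$, contributing $h_k[H^{(u)}]\prod_{j=0}^{H^{(u)}-1}(1-h_k[j])$; with weight $1 - \beta^{(u)}$ the subject was censored at $H^{(u)}$, contributing $\prod_{j=0}^{H^{(u)}}(1-h_k[j])$. Following standard practice for weighted KM, I would split each \subject into two fractional pseudo-\subjects (terminated versus censored) and then weight the full log-likelihood for cluster $k$ by the soft membership $\alpha^{(u)}_k$, giving
\begin{align*}
\ell_k = \sum_u \alpha^{(u)}_k \Bigl[ \beta^{(u)}\bigl(\log h_k[H^{(u)}] + \sum_{j=0}^{H^{(u)}-1}\log(1 - h_k[j])\bigr) + (1 - \beta^{(u)})\sum_{j=0}^{H^{(u)}}\log(1 - h_k[j]) \Bigr].
\end{align*}

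Next, I would differentiate $\ell_k$ with respect to $h_k[t]$ for a fixed $t$. Only terms where $H^{(u)} = t$ contribute to the $\log h_k[t]$ piece (with coefficient $\alpha^{(u)}_k \beta^{(u)}$), while the $\log(1 - h_k[t])$ piece receives contributions from $H^{(u)} > t$ (full weight $\alpha^{(u)}_k$) and from $H^{(u)} = t$ in the censored branch (weight $\alpha^{(u)}_k(1-\beta^{(u)})$). Collecting terms, the first-order condition becomes $d_k[t](1 - h_k[t]) = h_k[t]\bigl(s_k[t] - d_k[t]\bigr)$, where $d_k[t]$ and $s_k[t]$ are exactly the quantities defined in the proposition, which solves to $\hat h_k[t] = d_k[t]/s_k[t]$. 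Substituting back into $S_k[t] = \prod_{j=0}^{t}(1 - h_k[j])$ yields the stated product formula.

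The derivation itself is routine once the setup is fixed; the main obstacle is being careful about the indexing and about justifying the mixture/splitting step. In particular, I want to make sure that the "soft termination" treatment is equivalent (from the MLE perspective) to duplicating every \subject into a weighted pair of a deterministically terminated and a deterministically censored copy, so that the weighted log-likelihood decouples cleanly across the hazards $h_k[t]$. Once that reduction is justified, the remaining algebra is the standard weighted-KM computation and does not require any additional assumptions beyond those already in the paper.
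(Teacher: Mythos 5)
Your proposal is correct and follows essentially the same route as the paper: both reduce the problem to a weighted Kaplan--Meier likelihood in which $\alpha^{(u)}_k$ and $\beta^{(u)}$ enter as fractional membership and termination weights, rewrite it as a product over time with per-time parameters, and maximize each factor to obtain the hazard estimate $d_k[t]/s_k[t]$. The only (cosmetic) difference is that the paper introduces sampled Bernoulli/Categorical latent variables and invokes a mean-field step to push the expectation inside the likelihood, whereas you write the expected (weighted) log-likelihood directly in the hazard parameterization---which is arguably the cleaner way to state the same computation.
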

	\begin{proof}
	For a \subject $u$, let $e^{(u)} \sim \text{Bernoulli}(\beta^{(u)}(W_2))$ be the sampled termination signal, i.e., $e^{(u)} = 1$ indicates that the \subject's observed lifetime is her true lifetime, $H^{(u)} = T^{(u)}$. Let $\vec{\kappa}^{(u)} \sim \text{Categorical}(\alpha^{(u)}_1(W_1), \ldots, \alpha^{(u)}_K(W_1))$ be the sampled cluster assignment of the \subject. 
	
	Finally, let the empirical lifetime PMF of a cluster $k \in \{1, 2, \ldots K\}$ be defined as $P(T_k = t) := \lambda_{k, t}$ for $t \in \{0, 1, \ldots \tmax\}$, where $T_k$ is the true lifetime of a random subject in cluster $k$.  

	The likelihood of the training data $\cD$ for a particular cluster $k$ given the sampled termination signal $\{e^{(u)}\}_{u \in \cD}$ and the sampled cluster assignments $\{\vec{\kappa}^{(u)}\}_{u \in \cD}$ can be written using \cite{kaplanmeier} as, 
	\begin{align*}
	L_k = \left[ \prod_{u \in \cD}P(T_k = H^{(u)})^{e^{(u)}} P(T_k > H^{(u)})^{1 - e^{(u)}} \right]^{\kappa^{(u)}_k} \:.
	\end{align*}
	Next, we take expectation over the cluster assignments and termination signals to get,
	\begin{align} \label{eq:prop1proof_exp}
		L_k = \mathbb{E}_{\{\vec{\kappa}^{(u)}\}_u} \mathbb{E}_{\{e^{(u)}\}_u} \left[ \prod_{u \in \cD}P(T_k = H^{(u)})^{e^{(u)}} P(T_k > H^{(u)})^{1 - e^{(u)}} \right]^{\kappa^{(u)}_k} \:.
	\end{align}
	Using mean-field approximation and proceeding as \cite{kaplanmeier} to write \cref{eq:prop1proof_exp} as a product over time instead of \subjects, we get,
	\begin{align} \label{eq:prop1proof_likelihood}
	L_k = \left[\prod_{j=0}^{\tmax} \lambda_{k, j}^{d_{k}[j]} (1 - \lambda_{k, j})^{s_k[j] - d_{k}[j]} \right] 	\:,
	\end{align}
	where, we have used $P(T_k = j) = \lambda_{k, j}$,
	$s_k[j]$ is a shorthand for $s_k(W_1, W_2; \cD)[j] = \sum_{u \in \cD} \one[H^{(u)} \geq j]  \cdot \alpha^{(u)}_{k}(W_1)$, the expected number of \subjects at risk at time $j$, and, $d_k[j]$ is a shorthand for $d_{k}(W_1, W_2; \cD)[j] = \sum_{u \in \cD} \one[H^{(u)} = j] \cdot \beta^{(u)}(W_2) \cdot \alpha^{(u)}_{ k}(W_1)$, the expected number of \subjects that are predicted to have had a termination event at time $j$.
	
	Finally, maximizing the likelihood in \cref{eq:prop1proof_likelihood} with respect to $\lambda_{k, t}$ for all $k \in \{1, \ldots K\}$ and $t \in \{0, 1, \ldots \tmax\}$, and computing the lifetime distributions (CCDF) $\hat{S}_k$ from the corresponding lifetime PMFs finishes the proof.
\end{proof}

\subsection{Proof of \cref{lemma:ubKuiper}} \label{sec:lemma_proof}
\begin{proposition*}[\cref{lemma:ubKuiper}] 
Given two empirical lifetime distributions $\hat{S}_a$ and $\hat{S}_b$ with discrete support and sample sizes $n_a$ and $n_b$ respectively, define the maximum positive and negative separations between them as,
		\begin{align*}
		&\hat{D}_{a,b}^+ = \sup_{t \in \{0, \ldots t_\text{max}\}} (\hat{S}_a[t] - \hat{S}_b[t]) \:, \\ 
		&\hat{D}_{a,b}^- = \sup_{t \in \{0, \ldots t_\text{max}\}} (\hat{S}_b[t] - \hat{S}_a[t]) \:.
		\end{align*}
		The Kuiper test $p$-value \cite{kuipertest} gives the probability that $\Lambda$, the empirical deviation for $n_a$ and $n_b$ observations under the null hypothesis $S_a = S_b$ , exceeds the observed value $V = \hat{D}_{a,b}^+ + \hat{D}_{a,b}^-$:
		\begin{align}
		\text{KD}(\hat{S}_a, \hat{S}_b) \equiv P[ \Lambda > V] = 2 \sum_{j=1}^\infty  (4 j^2 \lambda_{a,b}^2 - 1) e^{-2 j^2 \lambda_{a,b}^2},
		\end{align}
		where $\lambda_{a,b} = \left(\sqrt{M_{a,b}} + 0.155 + \frac{0.24}{\sqrt{M_{a,b}}} \right) V$
		and $M_{a,b} = \frac{n_a n_b}{n_a + n_b}$ is the effective sample size.
		
		Then, the lower and upper bounds for the Kuiper p-value are,
			\begin{align*}
			&\frac{\text{KD}(\hat{S}_a, \hat{S}_b)}{2} \geq   \one[r_{a,b}^{(\text{lo})} \geq 1] \cdot \bigl(w(r^{(\text{lo})}_{a,b} - 1, \lambda_{a,b})  \\
			&\quad + v(r^{(\text{lo})}_{a,b}, \lambda_{a,b})\bigr) + v(r^{(\text{up})}_{a,b}, \lambda_{a,b}) + w(r^{(\text{up})}_{a,b} + 1, \lambda_{a,b}) \:,
			\end{align*}
			and
			\begin{align}
			&\frac{\text{KD}(\hat{S}_a, \hat{S}_b)}{2} \leq  \min \biggl(\frac{1}{2}, \one[r_{a,b}^{(\text{lo})} \geq 1] \cdot \bigl(w(r_{a,b}^{(\text{lo})}, \lambda_{a,b}) \nonumber \\
			&\qquad \qquad - w(1, \lambda_{a,b}) + v(r_{a,b}^{(\text{lo})}, \lambda_{a,b})\bigr) \nonumber \\
			&\qquad  \qquad + v(r_{a,b}^{(\text{up})}, \lambda_{a,b})  - w(r_{a,b}^{(\text{up})}, \lambda_{a,b})\biggr) ,
			\end{align}
		where 
		$
		v(r,\lambda) = (4r^2\lambda^2 - 1)e^{-2r^2\lambda^2}, 
		$
		with
		$
		w(r, \lambda) = -re^{-2r^2\lambda^2},
		$
		and
		$
		{r}^{(\text{lo})}_{a,b} = \lfloor \frac{1}{\sqrt{2}{\lambda}_{a,b}} \rfloor,  \quad {r}^{(\text{up})}_{a,b} = \lceil \frac{1}{\sqrt{2}{\lambda}_{a,b}} \rceil \, .
		$
	\end{proposition*}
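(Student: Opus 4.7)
The crux of the proof is the identity $w'(r) = v(r)$, verified by direct computation: $\frac{d}{dr}[-r e^{-2r^2\lambda^2}] = -e^{-2r^2\lambda^2} + 4r^2\lambda^2 e^{-2r^2\lambda^2} = (4r^2\lambda^2 - 1) e^{-2r^2\lambda^2}$. Consequently $\int_a^b v(r)\, dr = w(b) - w(a)$, and since $w(r) \to 0$ as $r \to \infty$, $\int_a^{\infty} v(r)\, dr = -w(a)$. A second derivative computation gives $v'(r) = 4r\lambda^2 e^{-2r^2\lambda^2}(3 - 4r^2\lambda^2)$, so $v$ is strictly unimodal on $(0, \infty)$ with unique peak at $r_{\text{peak}} = \sqrt{3}/(2\lambda)$. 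Because $1/(\sqrt{2}\lambda) < \sqrt{3}/(2\lambda)$ always, both $r^{(\text{lo})}$ and $r^{(\text{up})}$ lie at or just below the peak, and in particular $v$ is strictly increasing throughout $[0, r^{(\text{lo})}]$.

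The strategy is to split $\text{KD}/2 = \sum_{j=1}^{\infty} v(j)$ into a low part over $\{1,\ldots,r^{(\text{lo})}\}$ and a high part over $\{r^{(\text{up})}, r^{(\text{up})}+1,\ldots\}$. When $r^{(\text{lo})} < r^{(\text{up})}$ these partition $\mathbb{N}$; when $1/(\sqrt{2}\lambda)$ is itself an integer the shared term $v(r^{(\text{lo})})$ is non-negative (since $r^{(\text{lo})} > 1/(2\lambda)$, the unique zero of $v$), so the decomposition can only loosen the inequality. On the low range $v$ is increasing, so the left Riemann sum underestimates the integral, giving $\sum_{j=1}^{r^{(\text{lo})}-1} v(j) \leq \int_1^{r^{(\text{lo})}} v(r)\, dr = w(r^{(\text{lo})}) - w(1)$; adding the endpoint $v(r^{(\text{lo})})$ reproduces the first bracketed term in the stated bound. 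The indicator $\mathbf{1}[r^{(\text{lo})} \geq 1]$ merely suppresses this contribution when the low range is empty.

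For the high part, I would establish the tail bound $\sum_{j=r^{(\text{up})}+1}^{\infty} v(j) \leq \int_{r^{(\text{up})}}^{\infty} v(r)\, dr = -w(r^{(\text{up})})$; adding $v(r^{(\text{up})})$ then produces the second contribution $v(r^{(\text{up})}) - w(r^{(\text{up})})$. When $r^{(\text{up})} \geq r_{\text{peak}}$, this follows directly from the pointwise comparison $v(j) \leq \int_{j-1}^{j} v(r)\, dr$, which holds on any interval where $v$ is decreasing. Finally doubling and taking the minimum with $1$ delivers the stated inequality on $\text{KD}$, with the outer $\min$ accounting for the trivial probability bound $\text{KD} \leq 1$.

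The main obstacle is the narrow residual regime where $r_{\text{peak}} \in (r^{(\text{up})}, r^{(\text{up})}+1]$, since there the pointwise tail inequality can barely fail at $j = r^{(\text{up})}+1$. My plan is a dichotomy: because $r_{\text{peak}} - r^{(\text{up})} \leq (\sqrt{3}-\sqrt{2})/(2\lambda) \approx 0.159/\lambda$, the interval $[j-1, j]$ lies entirely past the peak for every $j \geq r^{(\text{up})}+2$, so only the single term $j = r^{(\text{up})}+1$ needs special treatment; and whenever even this term spoils the pointwise comparison (which requires very small $\lambda$), the estimate $-w(r^{(\text{up})}) \geq r^{(\text{up})}/e \geq 1/(\sqrt{2}\lambda e)$ forces the right-hand side to exceed $1/2$, so the outer $\min$ collapses the claim to the trivial bound $\text{KD}/2 \leq 1/2$. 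This case split is conceptually routine but necessary for the bound to hold uniformly in $\lambda$.
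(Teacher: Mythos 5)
Your proof takes the same route as the paper's: treat $w$ as the antiderivative of $v$, split the series $\sum_{j\ge1} v(j,\lambda)$ at $r^{(\text{lo})}$ and $r^{(\text{up})}$, and bound each piece by comparing the Riemann sum with the corresponding integral. The substantive difference is that you locate the mode of $v$ correctly. The paper's proof asserts that $v(\cdot,\lambda)$ increases on $(0,\tfrac{1}{\sqrt{2}\lambda})$ and decreases on $(\tfrac{1}{\sqrt{2}\lambda},\infty)$, but your computation $v'(r)=4r\lambda^2e^{-2r^2\lambda^2}(3-4r^2\lambda^2)$ shows the turning point is at $\sqrt{3}/(2\lambda)$, strictly to the right of $1/(\sqrt{2}\lambda)$. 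The low-range comparison is unaffected (since $r^{(\text{lo})}\le 1/(\sqrt{2}\lambda)$, $v$ really is increasing there), but the tail comparison $\sum_{j\ge r^{(\text{up})}+1}v(j)\le\int_{r^{(\text{up})}}^{\infty}v\,dr$ is justified in the paper only by the incorrect claim that $v$ decreases beyond $r^{(\text{up})}$; the term-by-term inequality $v(j)\le\int_{j-1}^{j}v\,dr$ can fail at $j=r^{(\text{up})}+1$ when the peak falls in $(r^{(\text{up})},r^{(\text{up})}+1]$. Your dichotomy---only one term is problematic unless $\lambda$ is small, and for small $\lambda$ the right-hand side exceeds $1/2$ so the outer $\min$ makes the claim trivial---is therefore a genuine repair of a hole in the paper's own argument, not gratuitous pedantry. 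Two details still need tightening: the estimate $-w(r^{(\text{up})},\lambda)\ge r^{(\text{up})}/e$ is backwards as stated (since $r^{(\text{up})}\ge 1/(\sqrt{2}\lambda)$ one gets $e^{-2(r^{(\text{up})})^2\lambda^2}\le e^{-1}$; you need the upper bound $r^{(\text{up})}\le 1/(\sqrt{2}\lambda)+1$ to salvage a slightly worse constant), and in the small-$\lambda$ branch you must also verify that the negative low-range contribution $w(r^{(\text{lo})},\lambda)-w(1,\lambda)$ does not drag the right-hand side back below $1/2$ (it does not, because $w(r^{(\text{lo})},\lambda)-w(r^{(\text{up})},\lambda)=-\int_{r^{(\text{lo})}}^{r^{(\text{up})}}v\,dr\ge -2e^{-3/2}$ while $-w(1,\lambda)=e^{-2\lambda^2}$ is close to $1$, but this must be said). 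Finally, the proposition as stated also claims a lower bound, which your sketch does not address; the same exceptional term $j=r^{(\text{up})}+1$ needs the analogous treatment there.
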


\begin{proof}
To prove these bounds, we first regard that the indefinite integral $w(r, \lambda) = \int v(r, \lambda) dr = -re^{-2r^2\lambda^2} + C$.
Then, for a given $\lambda$, $v(r, \lambda)$ is monotonically increasing w.r.t $r$ in the interval $(0, \frac{1}{\sqrt{2}\lambda})$ and monotonically decreasing w.r.t $r$ in the interval $(\frac{1}{\sqrt{2}\lambda}, \infty)$.


Let ${r}^{(lo)} = \lfloor \frac{1}{\sqrt{2}{\lambda}} \rfloor$,  ${r}^{(up)} = \lceil \frac{1}{\sqrt{2}{\lambda}} \rceil$ and $J_{[l, m]}(\lambda) = \sum_{r=l}^m v(r, \lambda)$. 
Here, we assume that ${r}^{(lo)} \neq {r}^{(up)}$, i.e., $\frac{1}{\sqrt{2}\lambda}$ is not an integer. The bounds can be easily modified for when ${r}^{(lo)} = {r}^{(up)}$.
Now, $v(r, \lambda)$ is decreasing w.r.t $r$ in $(r^{(up)}, \infty)$, and we have,
\begin{align*}\label{eq:kuiperapprox_part1}
&\int_{{r}^{(up)}+1}^\infty v(r, \lambda) dr \leq J_{[{r}^{(up)}+1, \infty]}(\lambda)  \leq \int_{{r}^{(up)}}^\infty v(r, \lambda) dr \:,
\end{align*}
and adding $v({r}^{(up)}, \lambda)$ throughout we get,
\begin{align}
&v({r}^{(up)}, \lambda) + \int_{{r}^{(up)}+1}^\infty v(r, \lambda) dr \nonumber \\ 
&\qquad \leq J_{[{r}^{(up)}, \infty]} (\lambda) \leq v({r}^{(up)}, \lambda) + \int_{{r}^{(up)}}^\infty v(r, \lambda) dr \: .
\end{align}
Similarly, since $v(r, \lambda)$ is increasing in $(0, {r}^{(lo)})$, we have,
\begin{align*}\label{eq:kuiperapprox_part2}
&\int_{0}^{{r}^{(lo)} - 1} v(r, \lambda) dr \leq J_{[1, {r}^{(lo)} - 1]}(\lambda) \leq \int_{1}^{{r}^{(lo)}} v(r, \lambda) dr \:, 
\end{align*}
and adding $v({r}^{(lo)}, \lambda)$ throughout we get,
\begin{align}
& \int_{0}^{{r}^{(lo)} - 1} v(r, \lambda) dr + v({r}^{(lo)}, \lambda) \nonumber \\
&\qquad \leq J_{[1, {r}^{(lo)}]}(\lambda) \leq \int_{1}^{{r}^{(lo)}} v(r, \lambda) dr + v({r}^{(lo)}, \lambda) \:,
\end{align}
where we assume that ${r}^{(lo)} \geq 1$, otherwise, $J_{[1, {r}^{(lo)}]}(\lambda)$ is trivially zero.

Combining equations \eqref{eq:kuiperapprox_part1} and \eqref{eq:kuiperapprox_part2}, we get,
\begin{align}
&\one[{r}^{(lo)} \geq 1] \cdot \left( \int_{0}^{{r}^{(lo)} - 1} v(r, \lambda) dr + v({r}^{(lo)}, \lambda)\right) \nonumber \\ 
&{+ v({r}^{(up)}, \lambda) + \int_{{r}^{(up)}+1}^\infty v(r, \lambda) dr \qquad \qquad \quad} \nonumber \\
&\qquad \leq \quad \one[{r}^{(lo)} \geq 1] \cdot J_{[1, {r}^{(lo)}]}(\lambda) + J_{[{r}^{(up)}, \infty]}(\lambda) \quad \nonumber \leq \\
& \one[{r}^{(lo)} \geq 1] \cdot\left(\int_{1}^{{r}^{(lo)}} v(r, \lambda) dr + v({r}^{(lo)}, \lambda)\right) \nonumber \\
& + v({r}^{(up)}, \lambda) + \int_{{r}^{(up)}}^\infty v(r, \lambda) dr \: .
\end{align}
We use the fact that $\text{KD}(\hat{S}_a, \hat{S}_b) = 2\cdot J_{[1, \infty]}(\lambda_{a,b})$, $J_{[1, \infty]}(\lambda) = \one[{r}^{(lo)} \geq 1] \cdot J_{[1, {r}^{(lo)}]}(\lambda) + J_{[{r}^{(up)}, \infty]}(\lambda)$, and
$w(r, \lambda) = \int v(r, \lambda) dr$, and that $\text{KD}(\hat{S}_a, \hat{S}_b) \leq 1$ to complete the proof.
\end{proof}

\subsection{Additional Results}
\subsubsection{Friendster Experiment}
Results on the Friendster experiments for $K=3,5$ are shown in \cref{tab:extra_results}. We observe that \method outperforms all baselines by large margins, particularly in the Logrank score. 
\cref{fig:distros_k=2,fig:distros_k=3,fig:distros_k=4,fig:distros_k=5} show empirical lifetime distributions of the clusters found by the methods on the Friendster dataset for $K=2,3,4,5$ respectively. Baseline methods \semisup, \ssc and \deephitgmm employ a two-stage clustering process and do not guarantee clusters with maximally different lifetime distributions. \mmd does not account for sample sizes of empirical distributions and suffers from sample anomalies, i.e., outputs clusters containing almost no \subjects. \kuiperub obtains clusters with the best Logrank scores. Interestingly, \method finds clusters with crossing yet distinct lifetime distributions for $K=4$ (see \cref{fig:interesting_crossing_curve}).

\begin{figure}[h]
	\begin{subfigure}{0.6\columnwidth}
		\begin{subfigure}{0.45\columnwidth}
			\centering
			\includegraphics[scale=0.3]{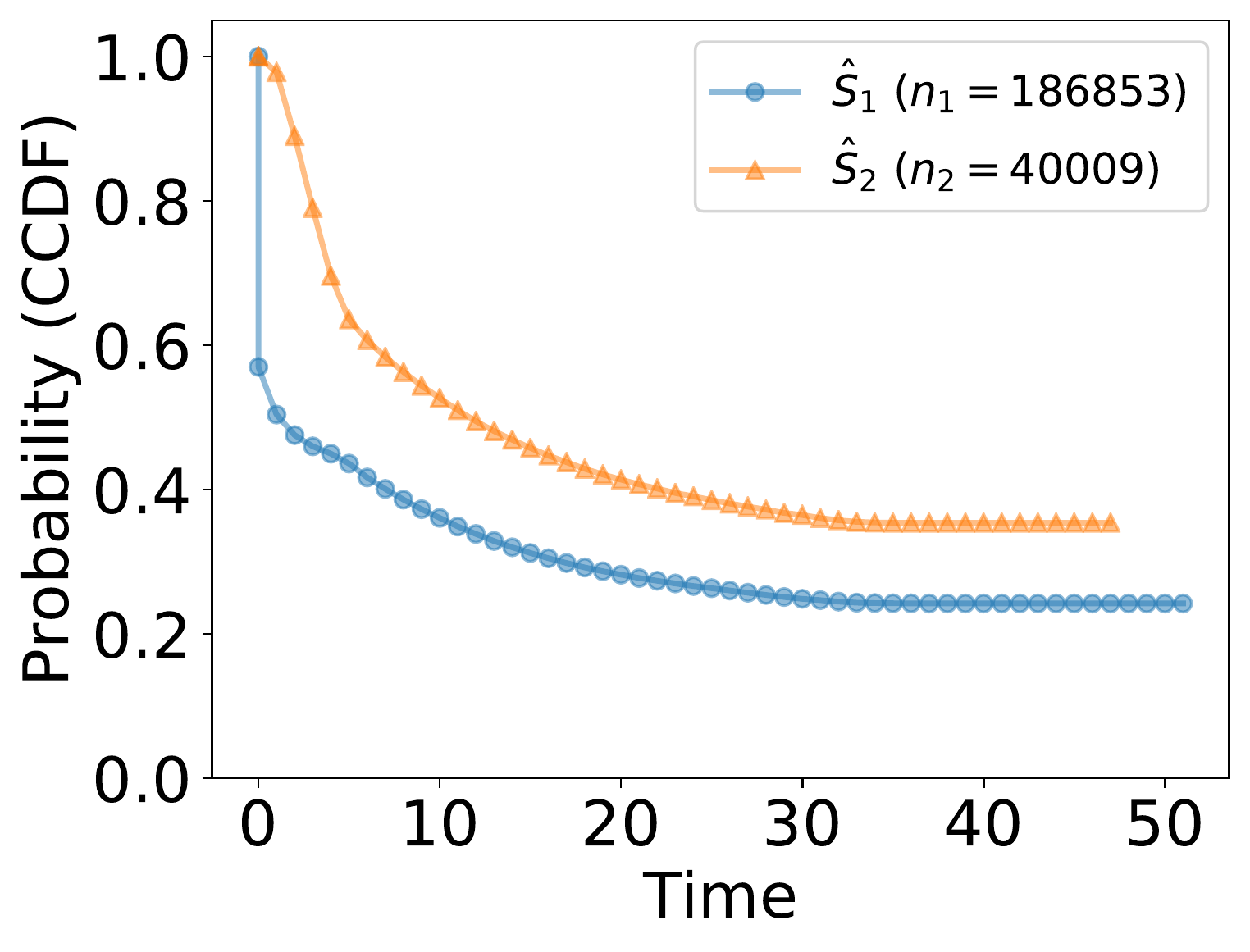}
			\caption{{ \semisup}}
		\end{subfigure}
		~~~~~~
		\begin{subfigure}{0.45\columnwidth}
			\includegraphics[scale=0.3]{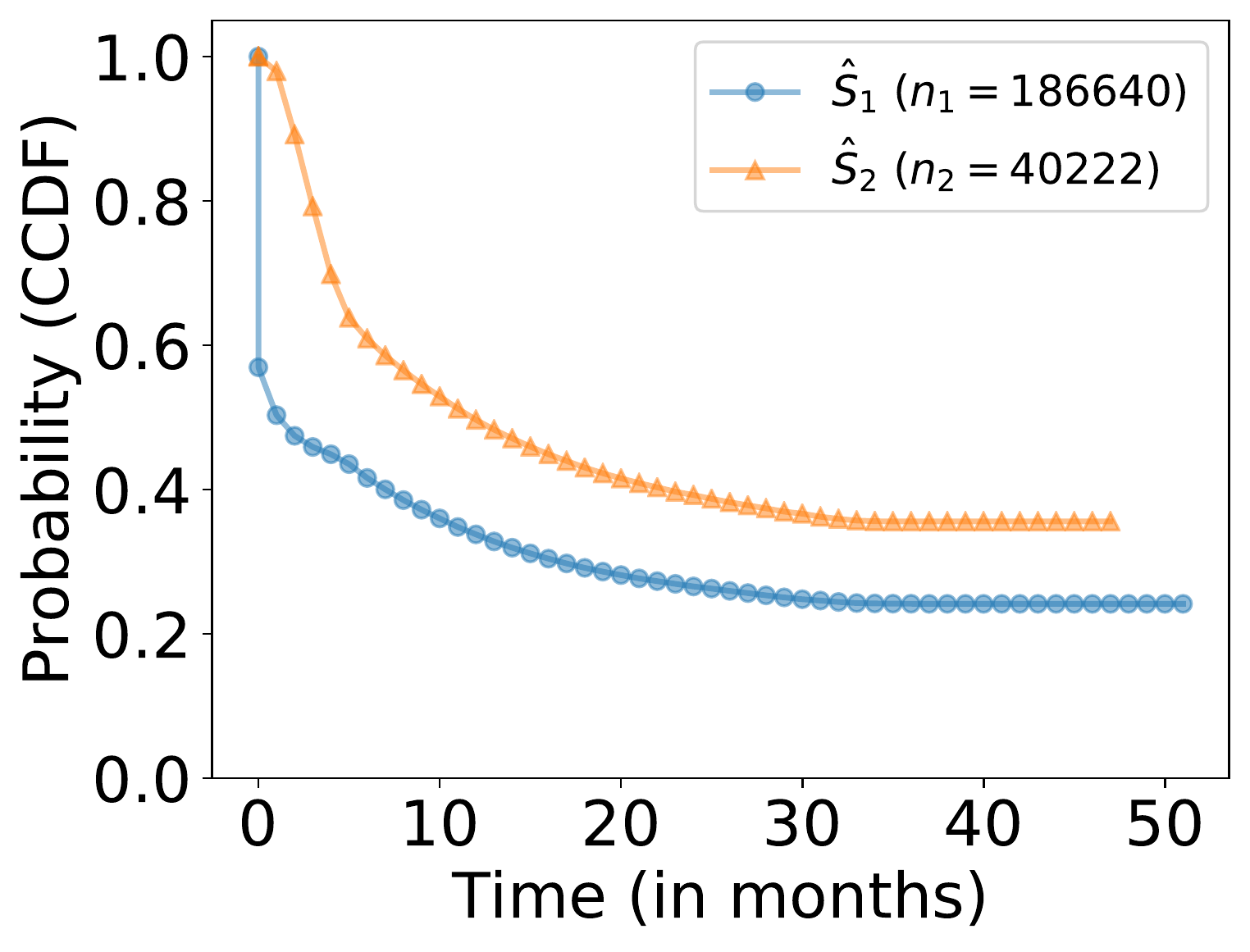}
			\caption{{ \ssc}}
		\end{subfigure}
		
		\begin{subfigure}{0.45\columnwidth}
			\includegraphics[scale=0.3]{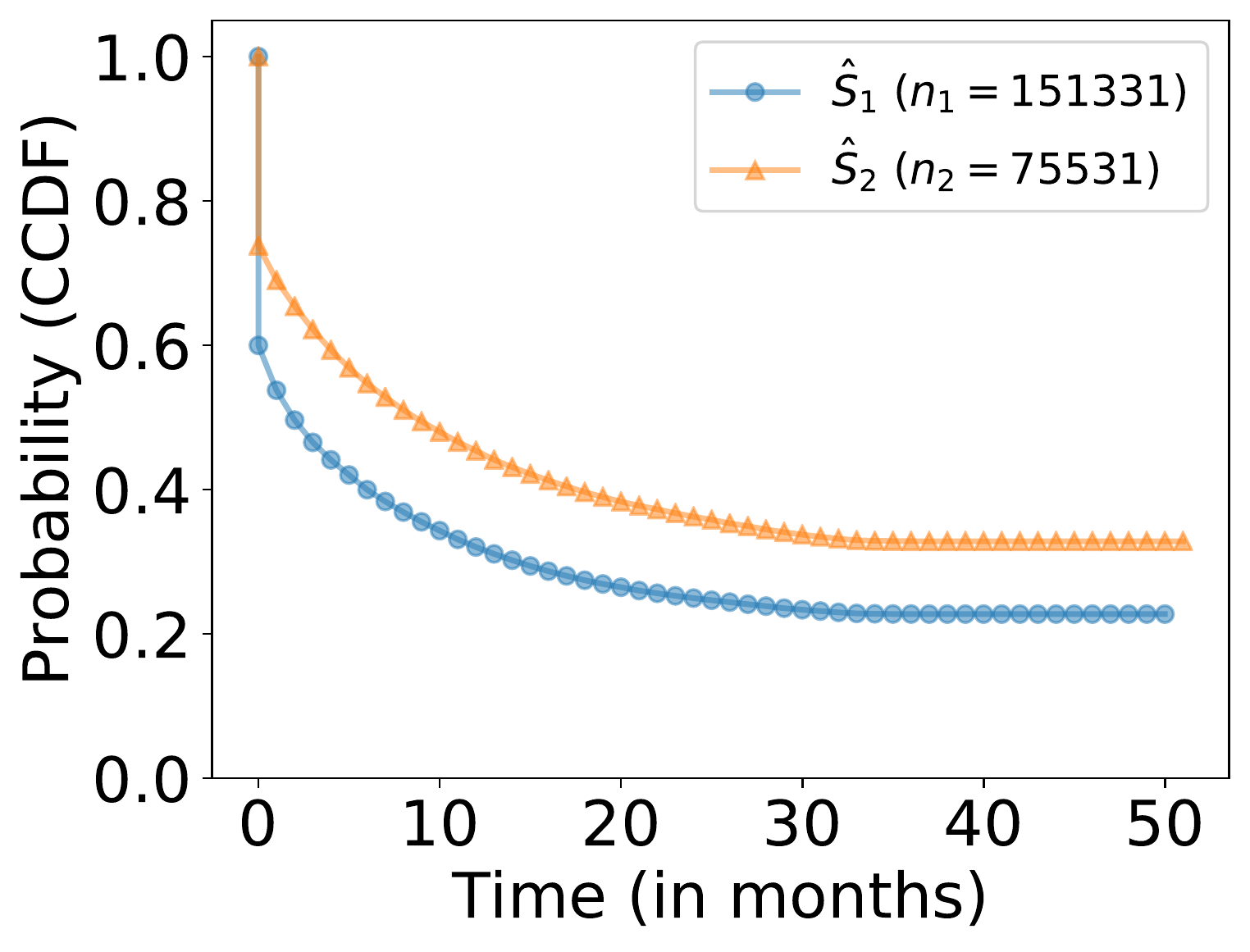}
			\caption{{ \deephitgmm}}
		\end{subfigure}
		~~~~~~
		\begin{subfigure}{0.45\columnwidth}
			\includegraphics[scale=0.3]{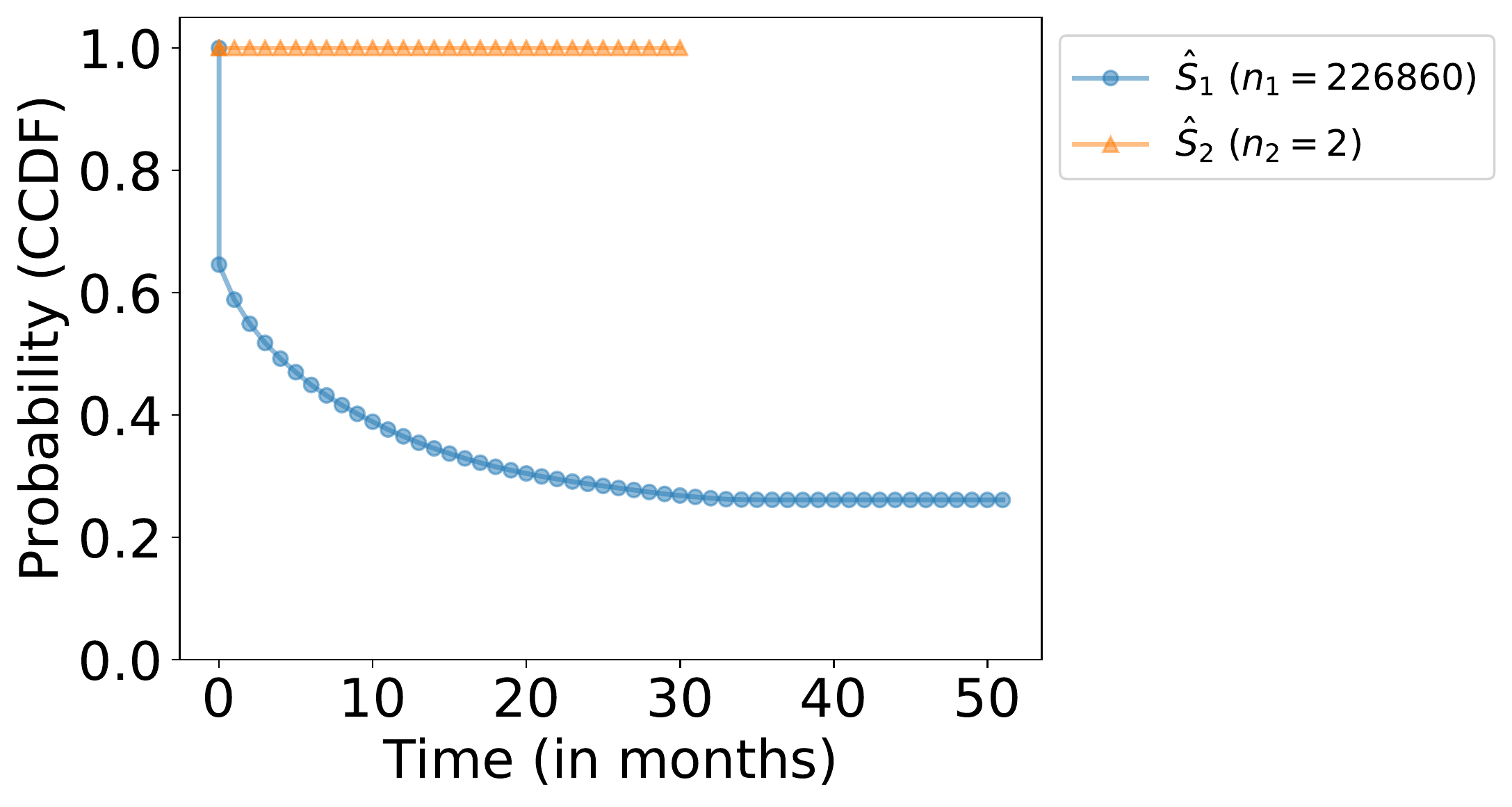}
			\caption{ \mmd}
		\end{subfigure}
	\end{subfigure}	
	~~~~~~~
	\begin{subfigure}{0.3\columnwidth}
		\includegraphics[scale=0.3]{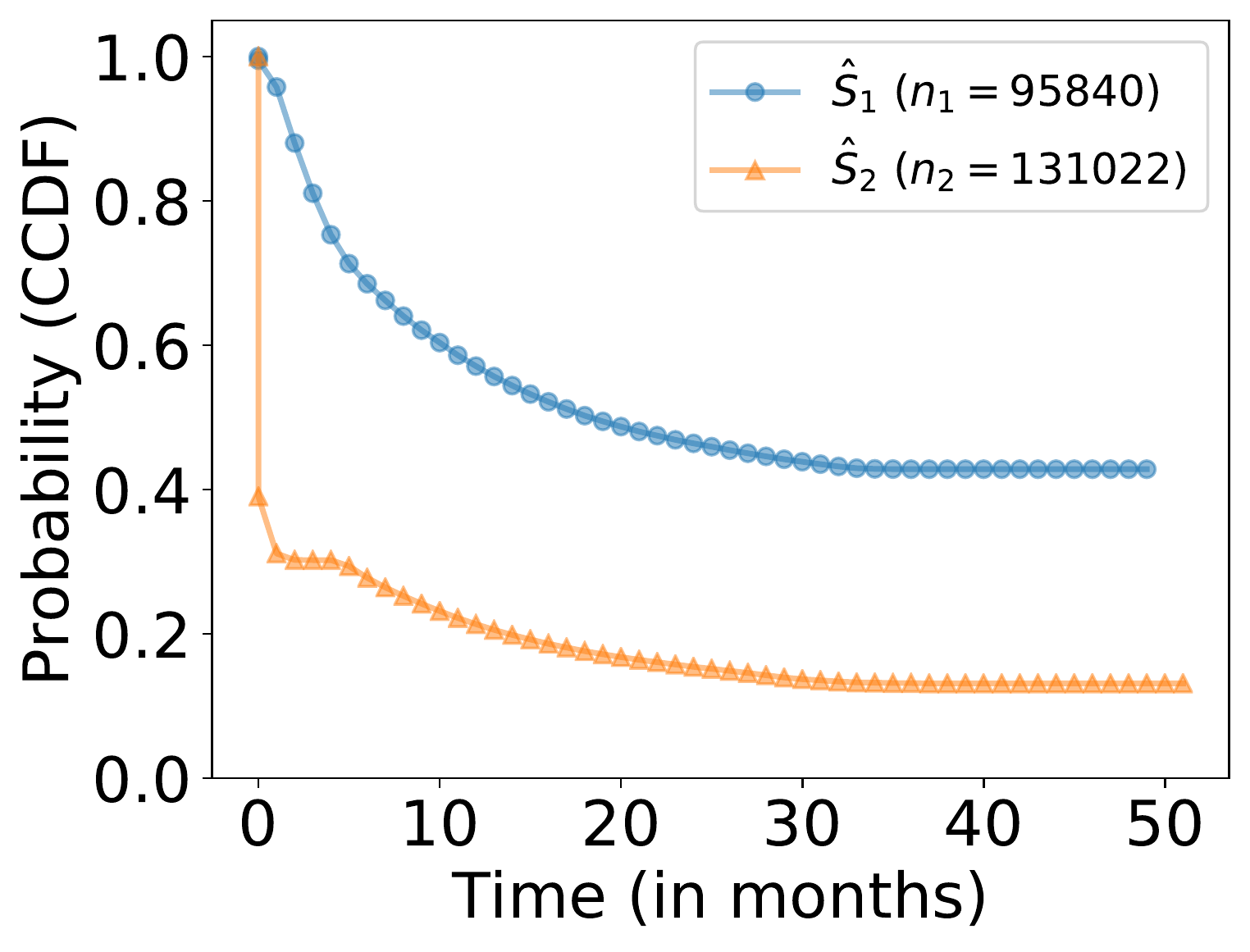}
		\caption{{\bf \kuiperub}}
	\end{subfigure}

	\caption{\footnotesize {\bf (Friendster)} Empirical lifetime distributions of clusters obtained from different methods for $K{=}2$ (legend shows cluster sizes $n_1, n_2$). \label{fig:distros_k=2}}
\end{figure}

\begin{figure}[h]
	\begin{subfigure}{0.6\columnwidth}
		\begin{subfigure}{0.45\columnwidth}
			\centering
			\includegraphics[scale=0.3]{images/plots/semiSupervisedClustering/friendster_k=3.pdf}
			\caption{{ \semisup}}
		\end{subfigure}
		~~~~~~
		\begin{subfigure}{0.45\columnwidth}
			\includegraphics[scale=0.3]{images/plots/supervisedSparseClustering/friendster_k=3.pdf}
			\caption{{ \ssc}}
		\end{subfigure}
		
		\begin{subfigure}{0.45\columnwidth}
			\includegraphics[scale=0.3]{images/plots/deepHitGMM/friendster_k=3.pdf}
			\caption{{ \deephitgmm}}
		\end{subfigure}
		~~~~~~
		\begin{subfigure}{0.45\columnwidth}
			\includegraphics[scale=0.3]{images/plots/survivalNet_mmd/friendster_k=3.pdf}
			\caption{ \mmd}
		\end{subfigure}
	\end{subfigure}	
	~~~~~~~
	\begin{subfigure}{0.3\columnwidth}
		\includegraphics[scale=0.3]{images/plots/survivalNet_kuiper_ub/friendster_k=3.pdf}
		\caption{{\bf \kuiperub}}
	\end{subfigure}

	\caption{\footnotesize {\bf (Friendster)} Empirical lifetime distributions of clusters obtained from different methods for $K{=}3$ (legend shows cluster sizes $n_1, n_2, n_3$).
		\label{fig:distros_k=3}
	}
\end{figure}

\begin{figure}[h]
	\begin{subfigure}{0.6\columnwidth}
		\begin{subfigure}{0.45\columnwidth}
			\centering
			\includegraphics[scale=0.3]{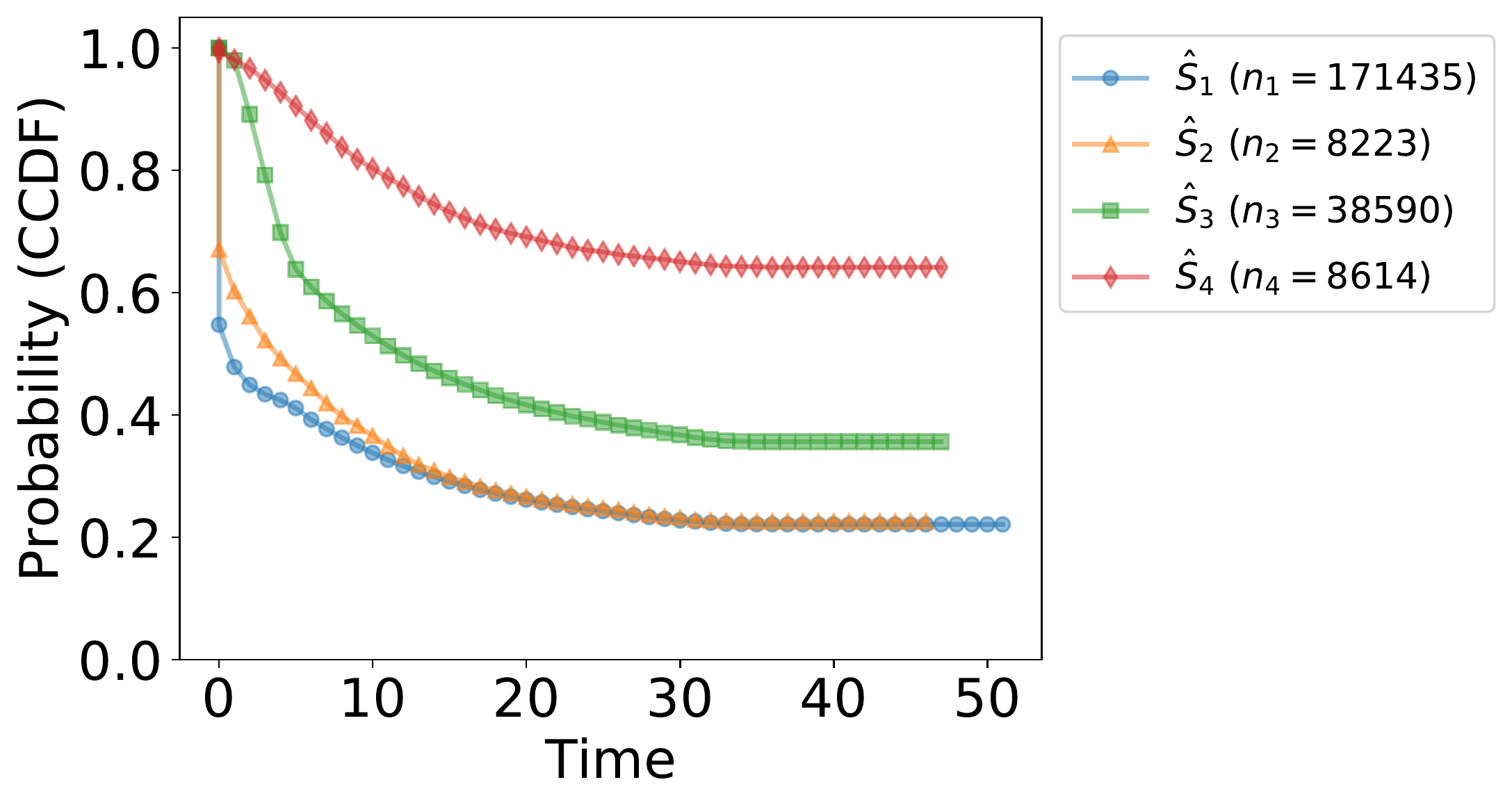}
			\caption{{ \semisup}}
		\end{subfigure}
		~~~~~~
		\begin{subfigure}{0.45\columnwidth}
			\includegraphics[scale=0.3]{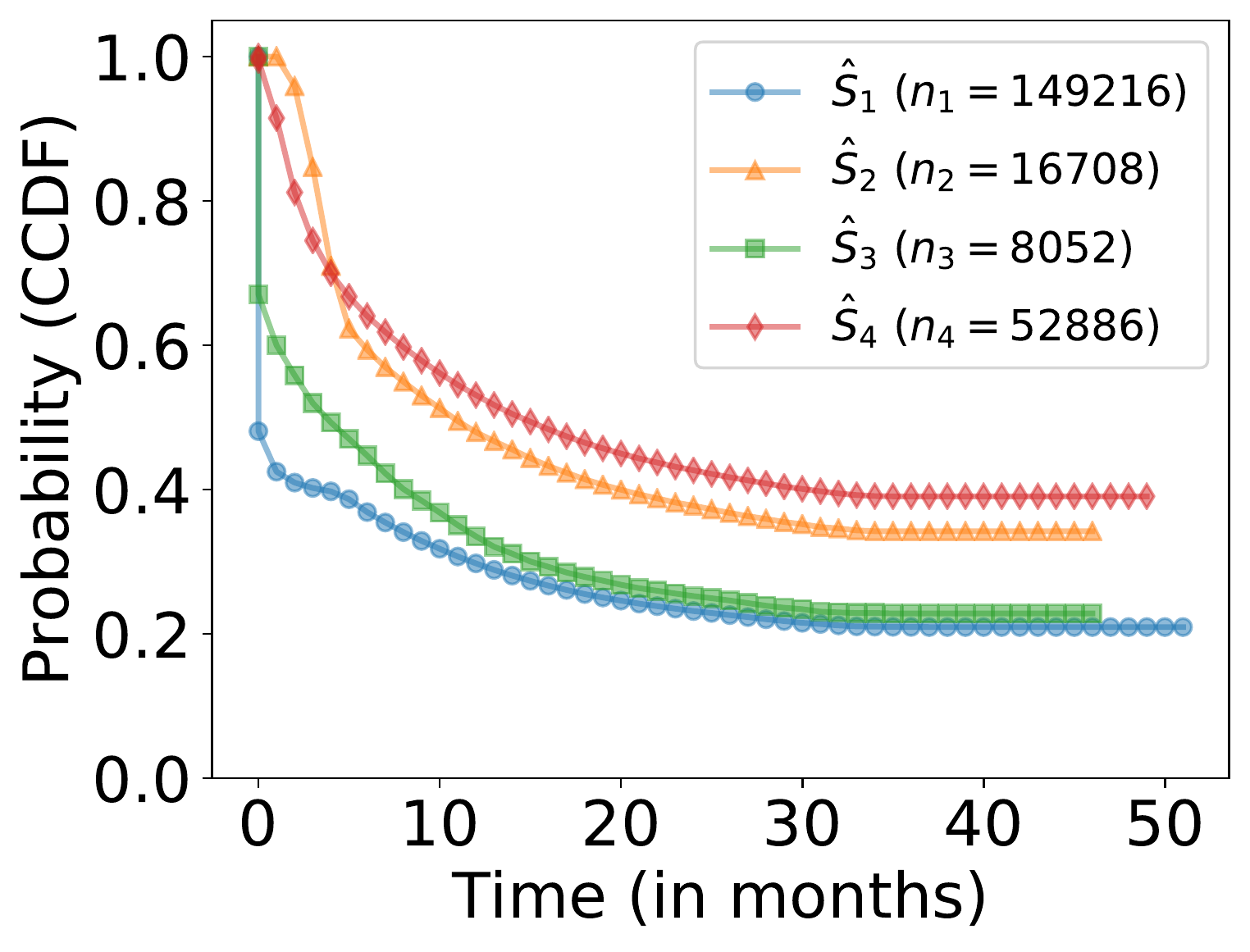}
			\caption{{ \ssc}}
		\end{subfigure}
		
		\begin{subfigure}{0.45\columnwidth}
			\includegraphics[scale=0.3]{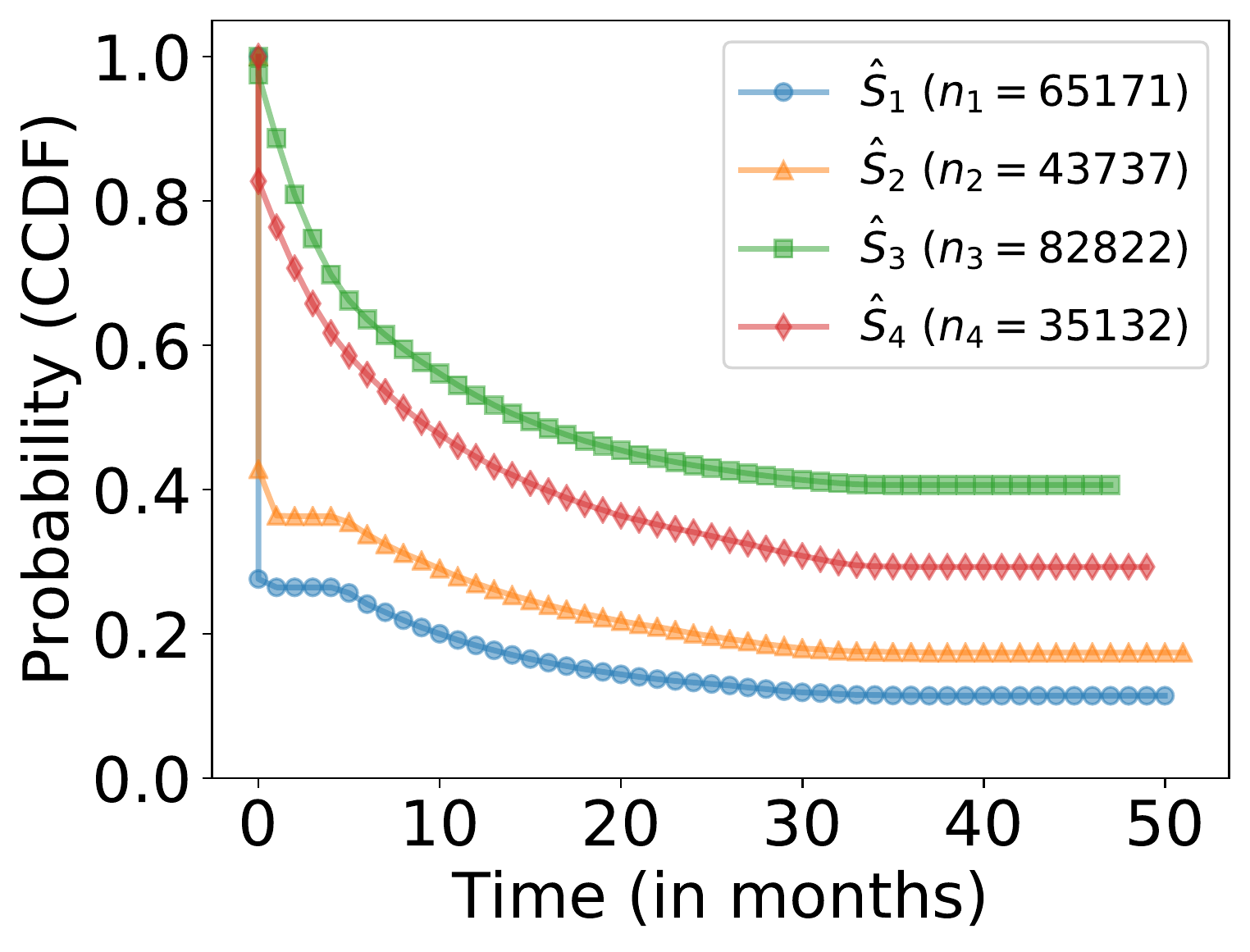}
			\caption{{ \deephitgmm}}
		\end{subfigure}
		~~~~~~
		\begin{subfigure}{0.45\columnwidth}
			\includegraphics[scale=0.3]{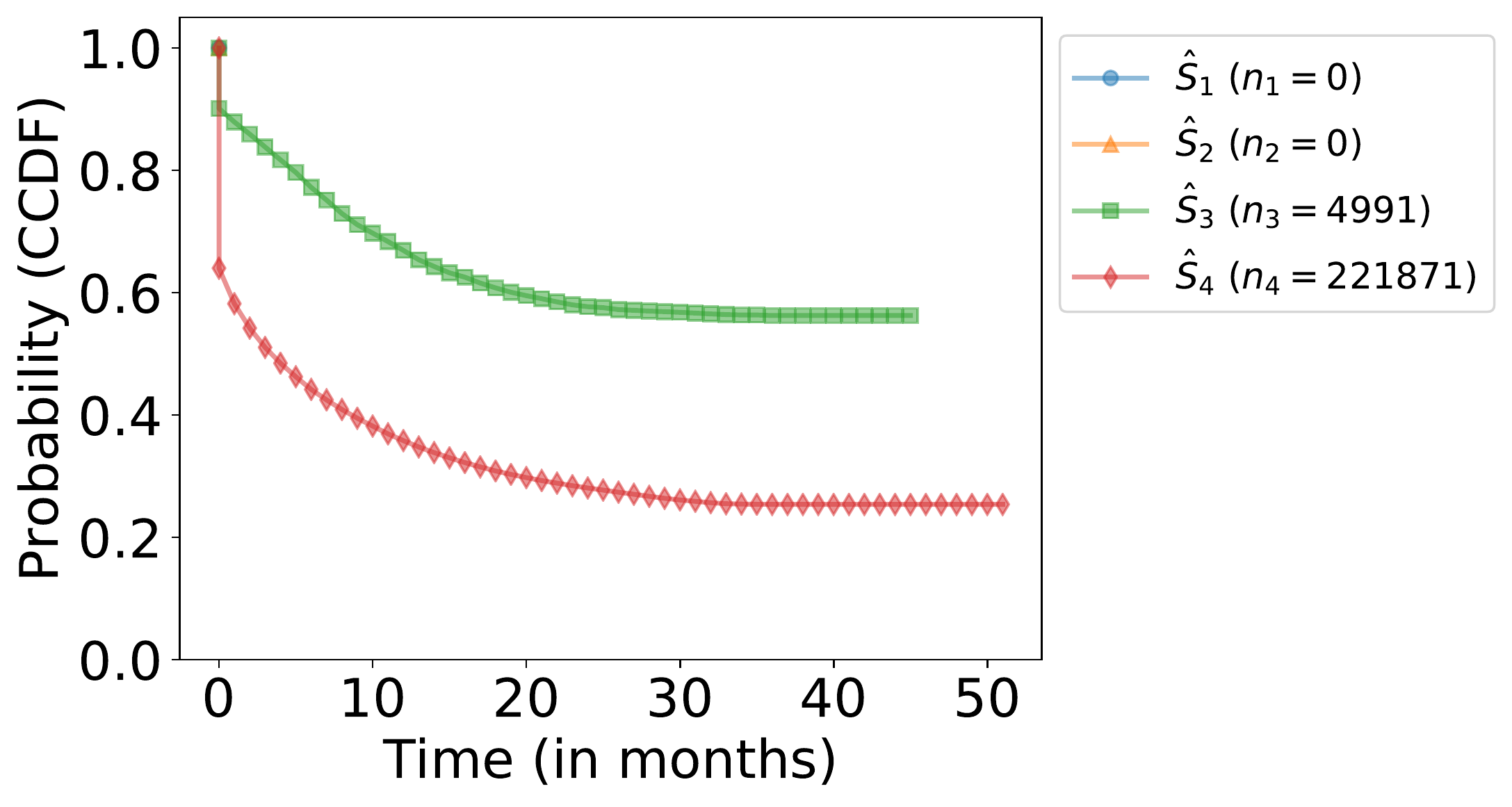}
			\caption{ \mmd}
		\end{subfigure}
	\end{subfigure}	
	~~~~~~~
	\begin{subfigure}{0.3\columnwidth}
		\includegraphics[scale=0.3]{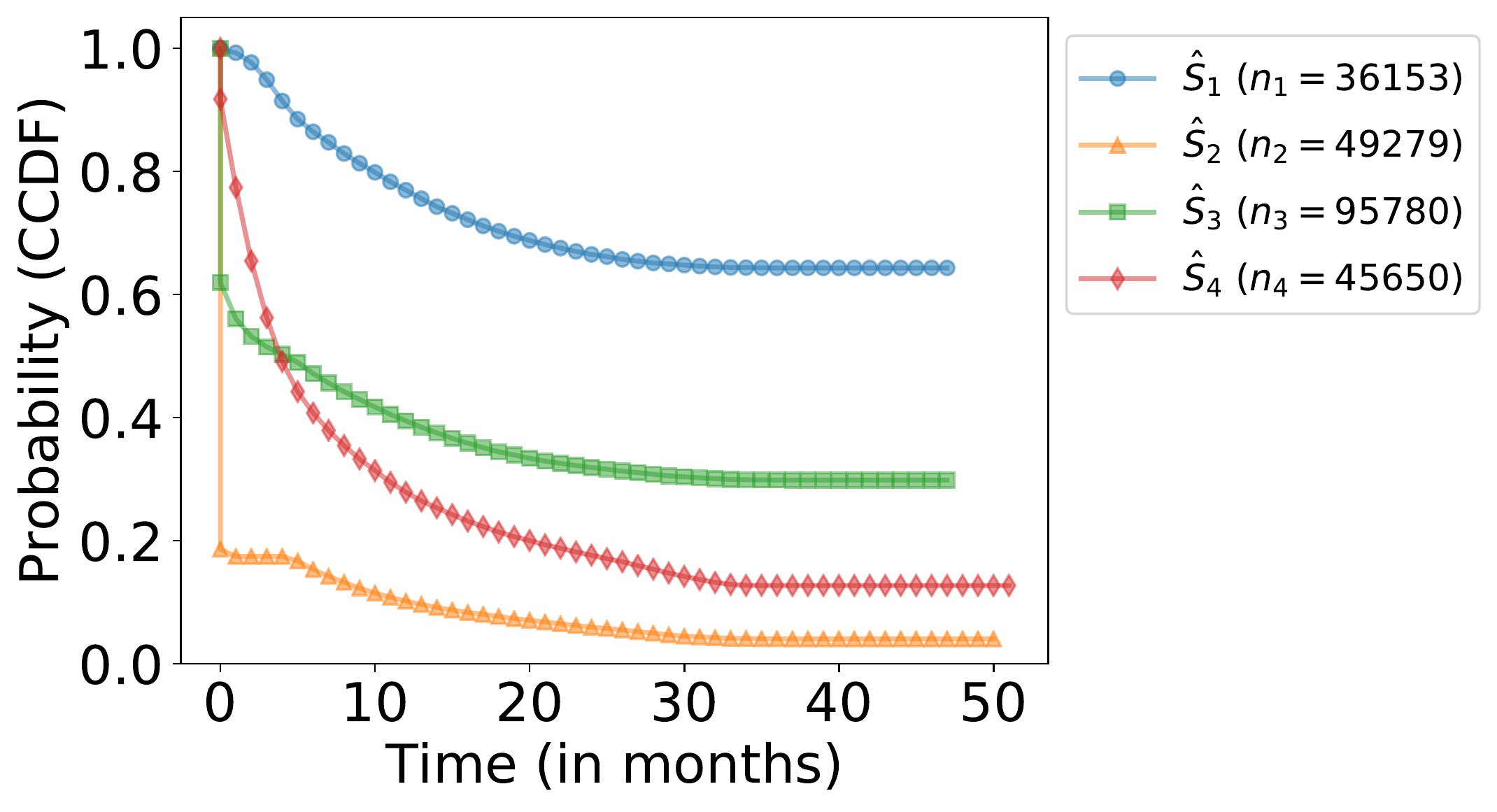}
		\caption{{\bf \kuiperub} \label{fig:interesting_crossing_curve}}
	\end{subfigure}
	
	\caption{\footnotesize {\bf (Friendster)} Empirical lifetime distributions of clusters obtained from different methods for $K{=}4$ (legend shows cluster sizes). 
		\label{fig:distros_k=4}
	}
\end{figure}

\newcolumntype{H}{>{\setbox0=\hbox\bgroup}c<{\egroup}@{}}
\begin{table*}[]
	\caption{\footnotesize {\bf (Friendster)} C-index (\%), Integrated Brier Score (\%) and Logrank score with standard errors  in parentheses for different methods\protect\footnotemark[3] and $K{=}3,5$ clusters with number of training examples $N^\text{(tr)}{=}10^5$. 
		\label{tab:extra_results}}
	\vspace{-10pt}
	\footnotesize
	\centering
	\resizebox{1\columnwidth}{!}{%
		\begin{tabular}{m{1in}H rrr rrr}
			\multicolumn{2}{c}{} &
			\multicolumn{3}{c}{$K=3$} & 
			\multicolumn{3}{c}{$K=5$}\\
			
			\cmidrule(l{-2pt}r{2pt}){3-5} 
			\cmidrule(l{-2pt}r{2pt}){6-8}
			
			\multicolumn{1}{l}{Method} & Termination & C-index $\uparrow$ (\%)  & I.B.S $\downarrow$ (\%) & Logrank Score $\uparrow$ & C-index $\uparrow$ (\%)  & I.B.S. $\downarrow$ (\%) & Logrank Score $\uparrow$\\
			\toprule
			
			\semisup & timeout  & 64.23 (0.24) & 22.19 (0.02) & 5277.82 (~~~~43.63)  & 75.80 (0.16) & 20.26 (0.02) & 43605.16 (~~~~91.39) \\ 
			
			\ssc & timeout  & 64.13 (0.21) & 22.18 (0.02) & 5400.75 (~~~~39.59)  & 75.66 (0.20) & 20.48 (0.01) & 41824.28 (~~151.85) \\ 
			
			\deephitgmm & timeout & 74.81 (0.11) & 20.97 (0.06) & 33880.55 (~~426.74) & {76.17 (0.27)} & 20.54 (0.11) & 39254.67 (1721.56) \\ 
			
			\mmd & learnt & 72.75 (1.24) & \textbf{18.94 (0.65)} & 50173.94 (4809.80) & 63.08 (2.69) & 21.55 (0.54) & 14308.90 (9149.41) \\
			
			\kuiperub & learnt & \textbf{78.48 (0.32)} & \textbf{19.70 (0.55)} & \textbf{54191.28 (~~436.72)} & {76.51 (1.25)} & \textbf{18.99 (0.16)} & \textbf{56130.21 (4036.02)} \\ 
			
			\bottomrule
		\end{tabular}
	}
\end{table*}

\begin{figure}[h]
	\begin{subfigure}{0.6\columnwidth}
		\begin{subfigure}{0.45\columnwidth}
			\centering
			\includegraphics[scale=0.3]{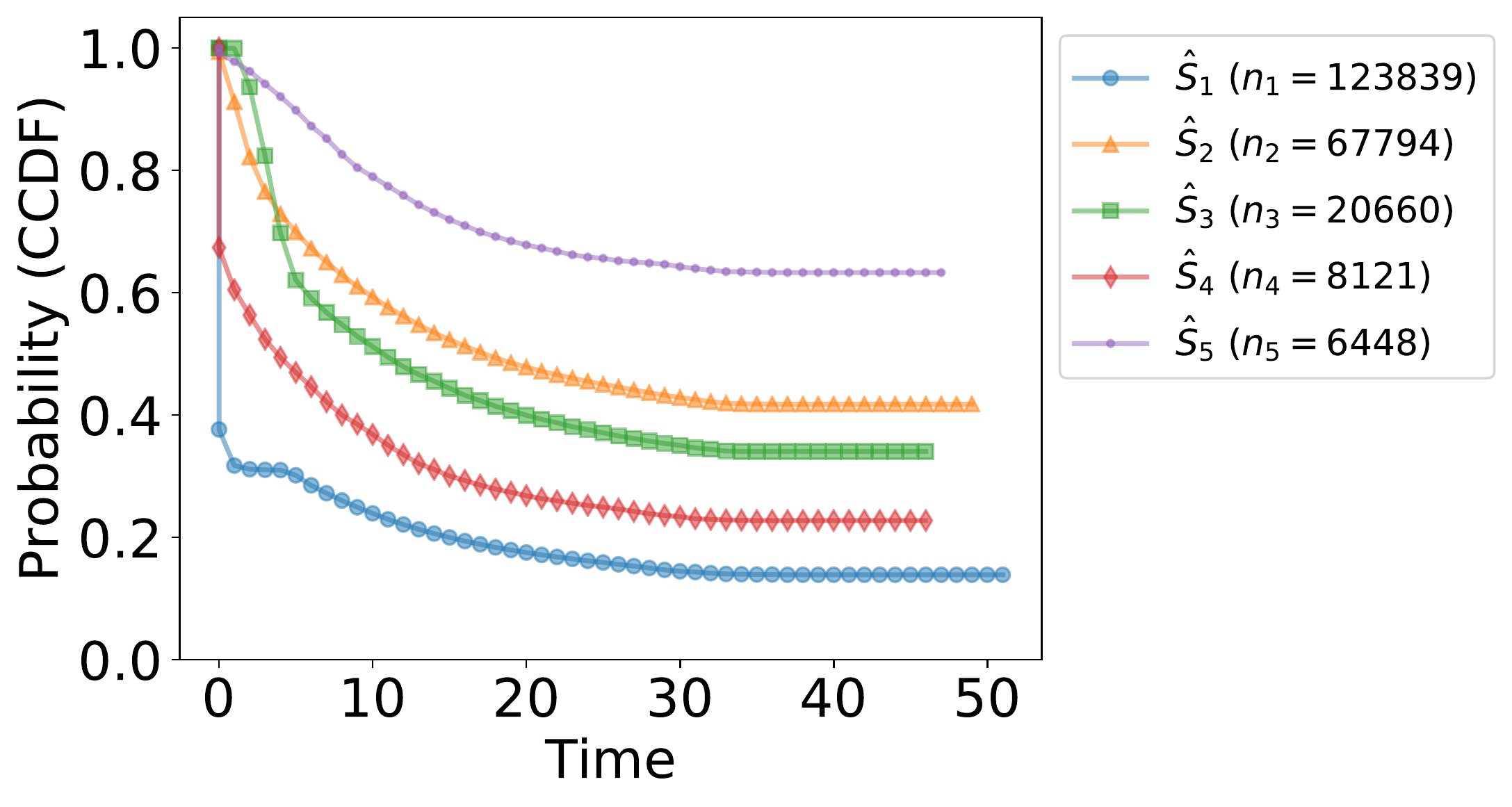}
			\caption{{ \semisup}}
		\end{subfigure}
		~~~~~~
		\begin{subfigure}{0.45\columnwidth}
			\includegraphics[scale=0.3]{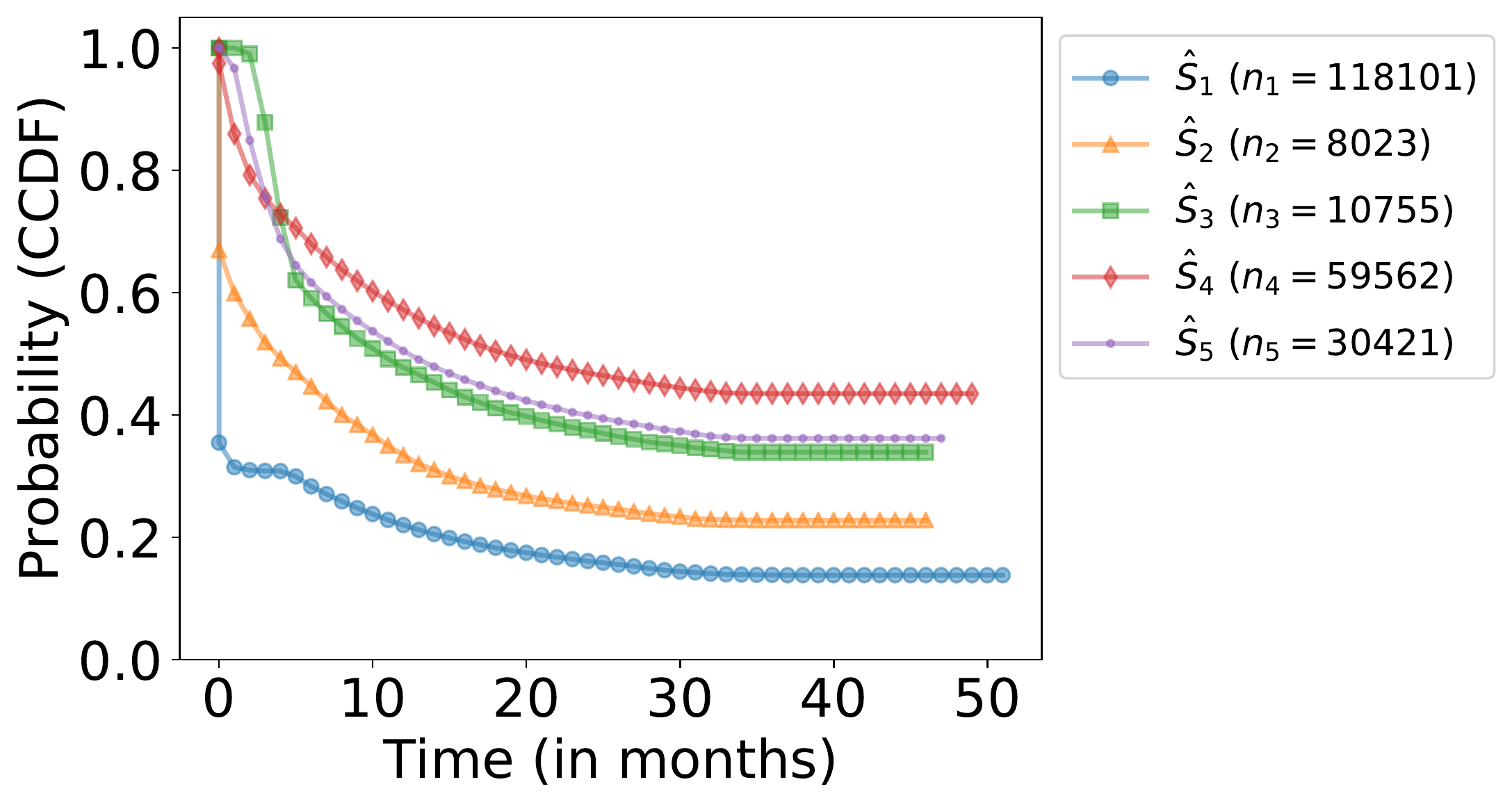}
			\caption{{ \ssc}}
		\end{subfigure}
		
		\begin{subfigure}{0.45\columnwidth}
			\includegraphics[scale=0.3]{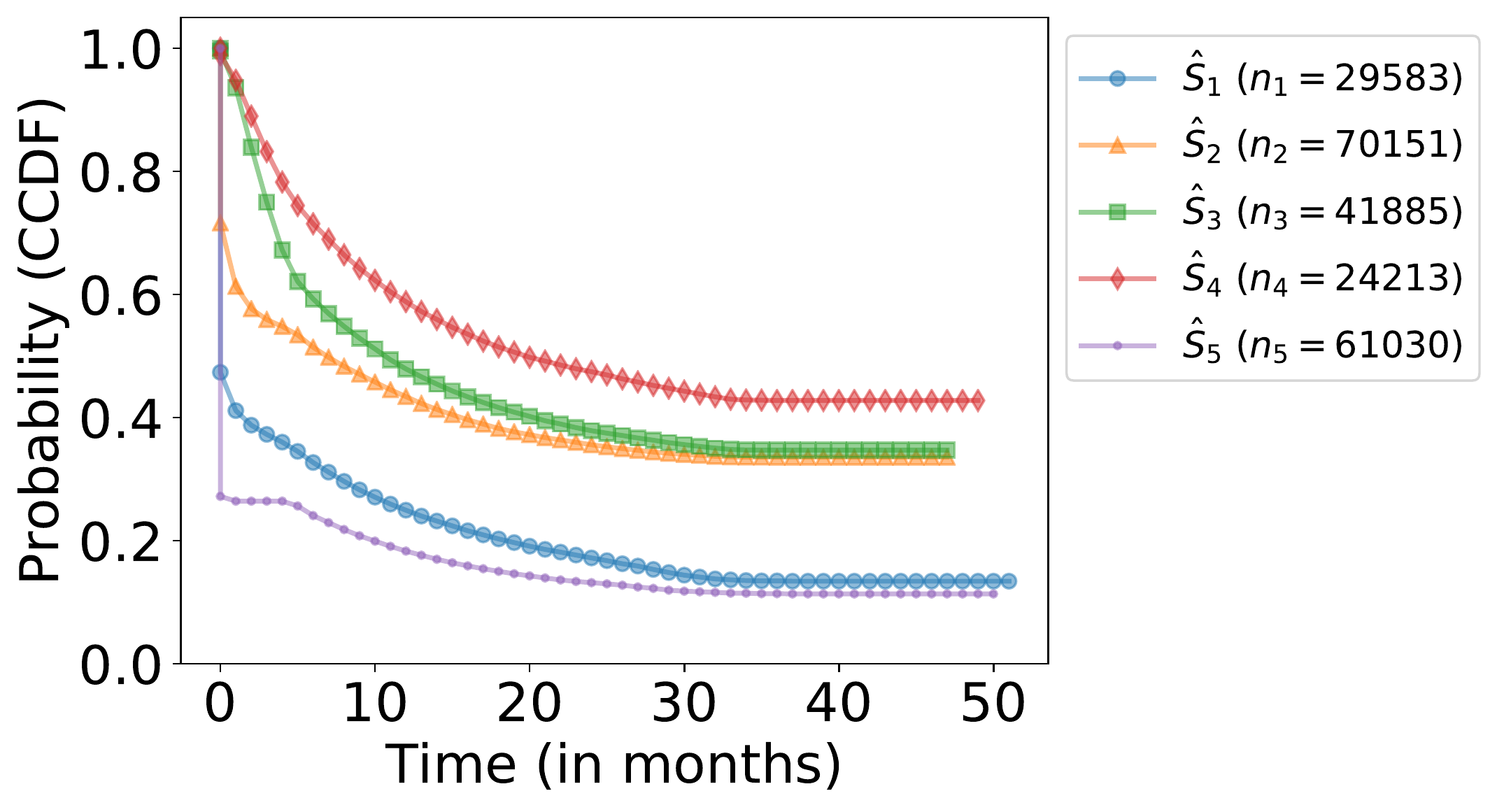}
			\caption{{ \deephitgmm}}
		\end{subfigure}
		~~~~~~
		\begin{subfigure}{0.45\columnwidth}
			\includegraphics[scale=0.3]{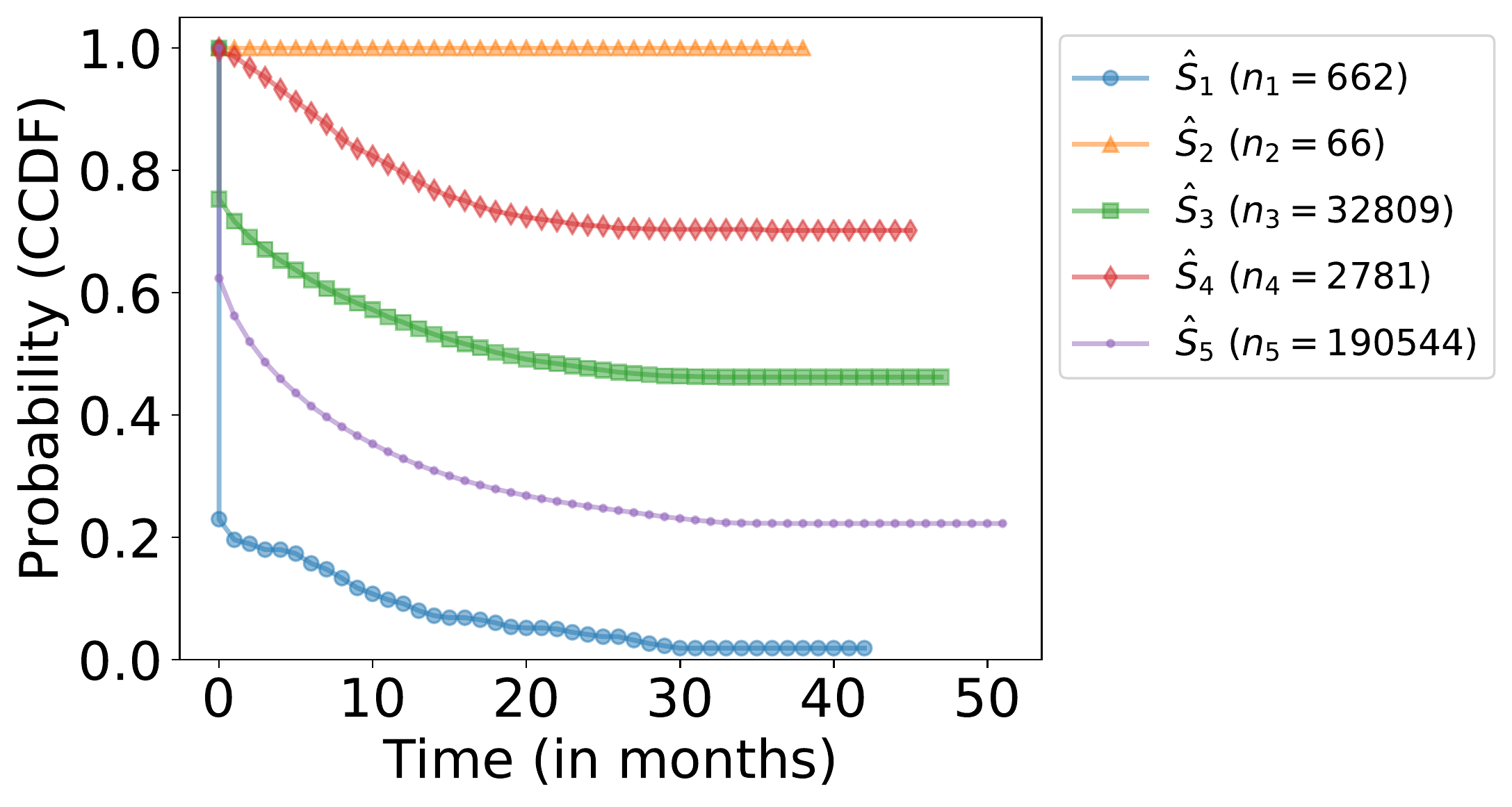}
			\caption{ \mmd}
		\end{subfigure}
	\end{subfigure}	
	~~~~~~~~~
	\begin{subfigure}{0.3\columnwidth}
		\includegraphics[scale=0.3]{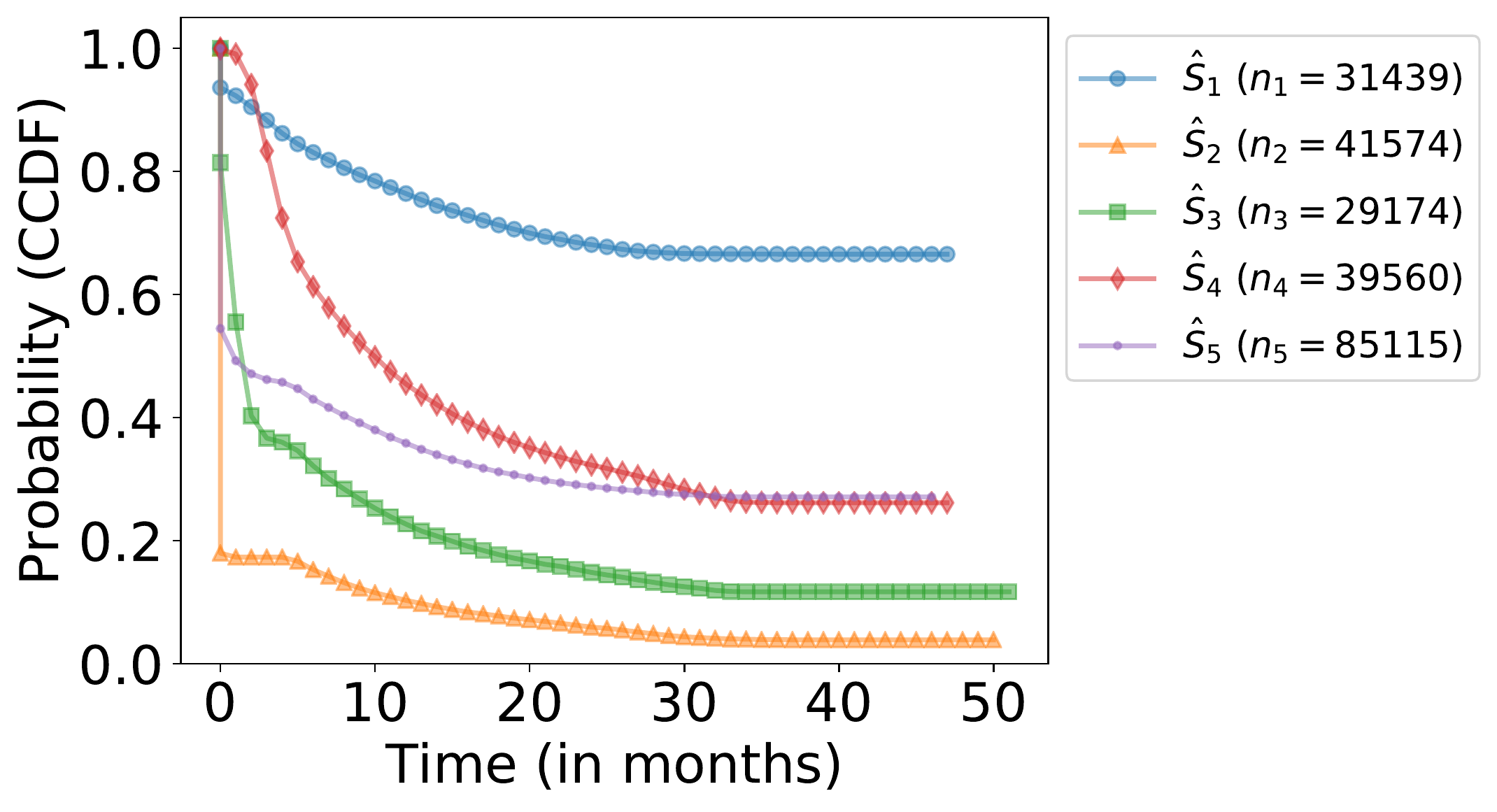}
		\caption{{\bf \kuiperub}}
	\end{subfigure}

	\caption{\footnotesize {\bf (Friendster)} Empirical lifetime distributions of clusters obtained from different methods for $K{=}5$ (legend shows cluster sizes).
		\label{fig:distros_k=5}
	}
\end{figure}

\subsection{Algorithm} \label{alg:kuiperub}
For ease of use, we present \cref{alg:kuiperloss} to compute the proposed divergence measure $\log(\text{KD}^\text{(up)}(\hat{S}_a, \hat{S}_b))$.

\newcommand*\Let[2]{\State #1 $\gets$ #2}
\algnewcommand{\LineComment}[1]{\State \(\triangleright\) #1}

\begin{algorithm}[h]
	\caption{Kuiper Divergence Upper Bound (Log)} \label{alg:kuiperloss}
	\begin{algorithmic}[1]
		\Require{Two empirical lifetime distributions $\hat{S}_a$ and $\hat{S}_b$ with sample sizes $n_a$ and $n_b$ respectively}

		\Statex
		\Function{KuiperUB}{$\hat{S}_a, \hat{S}_b, n_a, n_b$}
			\Let{$D^+$}{$\text{max}(\hat{S}_a - \hat{S}_b)$} \Comment{$\hat{S}_a, \hat{S}_b$ are vectors of length $\tmax{+}1$}
			\Let{$D^-$}{$\text{max}(\hat{S}_b - \hat{S}_a)$}
			\Let{$V$}{$D^+ + D^-$}				\Comment{Kuiper statistic}
			
			\Let{$M$}{$\frac{n_a n_b}{n_a + n_b}$}		\Comment{Effective sample size}
			
			\Let{$\lambda$}{$(\sqrt{M} + 0.155 + \frac{0.24}{\sqrt{M}}) V$}
			
			\Let{${r}^{(\text{lo})}$}{$\lfloor \frac{1}{\sqrt{2}{\lambda}} \rfloor$} 
			\Let{${r}^{(\text{up})}$}{$\lceil \frac{1}{\sqrt{2}{\lambda}} \rceil$} 

			\If{$r^{(\text{lo})} \geq 1$}
				\Let{$\text{KD-UB}$}{$\Call{W}{r^{\text{(lo)}}, \lambda}  - \Call{W}{1, \lambda} + \Call{V}{r^{\text{(lo)}}, \lambda} + \Call{V}{r^{\text{(up)}}, \lambda} - \Call{W}{r^{\text{(up)}}, \lambda}$}
			\Else
				\Let{$\text{KD-UB}$}{$\Call{V}{r^{\text{(up)}}, \lambda} - \Call{W}{r^{\text{(up)}}, \lambda}$}
			\EndIf
				
		\State \Return{$\log{\text{KD-UB}}$}
		\EndFunction
		\Statex
		
		\Function{V}{$r, \lambda$}
			\State \Return $(4r^2\lambda^2 - 1)e^{-2r^2\lambda^2}$
		\EndFunction
		\Statex
		\Function{W}{$r, \lambda$}
		\State \Return $-re^{-2r^2\lambda^2}$
		\EndFunction

	\end{algorithmic}
\end{algorithm}

\subsection{Time/Space complexity of \kuiperub}
Here we discuss the time and space complexity of the proposed algorithm. We only discuss the complexity of computation of the proposed loss function for one iteration assuming a minibatch of size $B$, number of clusters $K$ and discrete times with $0, 1, \ldots \tmax$. First, we can compute the empirical lifetime distributions of all the $K$ clusters in $O(KB)$ time using $O(K \tmax)$ space. Computing upper bound of the Kuiper p-value for two lifetime distributions using \cref{alg:kuiperloss} requires $O(\tmax)$ time and $O(\tmax)$ space. Since we compute the upper bound for all $\binom{K}{2}$ pairs, the time complexity then becomes $O(K^2 \tmax)$ whereas the space complexity remains $O(\tmax)$. The total time and space complexity of the proposed algorithm are $O(KB + K^2 \tmax) $ and $O(K\tmax)$ respectively.

\subsubsection{Tractability through sampling}
Each iteration of optimization involves computing the Kuiper upper bound for all $\binom{K}{2}$ pairs of clusters, hence can be prohibitively expensive for large values of $K$. Rather, we propose to sample randomly $p = O(K)$ pairs without replacement from the $\binom{K}{2}$ possible pairs of clusters. The time and space complexity reduces to $O(KB + K \tmax) $ and $O(K\tmax)$ respectively.

\subsection{Additional details about datasets}
\paragraph{Synthetic dataset.}
	\begin{figure}[h]
		\centering
		\includegraphics[scale=0.2]{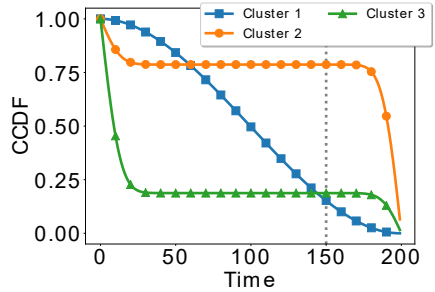}
		\caption{{\bf (Toy)} True lifetime distributions of simulated clusters (with $\tm = 150$).~\label{fig:simclusters}}
	\end{figure}
We also test our method with a synthetic dataset for which we have the true clustering as ground truth. We generate 3 clusters $C_1, C_2$ and $C_3$ with different lifetime distributions, as depicted in Figure~\ref{fig:simclusters}. Note that while the lifetime distributions $C_2$ and $C_3$ follow proportional hazards assumption, the lifetime distribution of $C_1$ violates this assumption by crossing the other survival curves.

We simulate $10^4$ \subjects for each cluster $k$, and generate 20 random features for each sampled \subject. The lifetime $T^{(u)}$ of a \subject $u$ in a cluster was randomly sampled from the corresponding lifetime distribution of the cluster. We also choose an arbitrary time of measurement, $t_m=150$, to imitate right censoring.

We generate covariates $X^{(u)}$ of each \subject $u$ as two sets of 10 attributes. Each attribute is simulated from a mixture of three Gaussians with means $\mu^{(1)}_{k, i}$, $\mu^{(2)}_{k, i}$, and $\mu^{(3)}_{k, i}$, for $i=1\ldots 20$ and $k=1,2,3$. For the first set of 10 features, these means are uniformly selected from the range $[0, 30]$ and a variance of $1$ is used. For the second set of features, the three means are uniformly selected from the range $[0, 30]$ and a variance of 10 is used. Now that the modes of each Gaussian of each feature are defined, we generate the features as follows: for each feature $i$ of user $u$ of cluster $k$, we uniformly choose one of the three modes $m \in \{1, 2, 3\}$ . Once the mode is selected, we generate a value from $\mathcal{N}(\mu_{k, i}^{(m)}, \sigma^2)$ (where $\sigma^2$ is 1 for $i \in [1, 10]$ and 10 for $i \in [11, 20]$). We decided to have these two sets of features as the first one (with lower variance) represents more relevant features which can help identify the clusters, while the second set (with higher variance) represents less meaningful features that may be similar for all clusters. 

We create three separate datasets $\mathcal{D}_{\{C_1, C_2\}}$, $\mathcal{D}_{\{C_1, C_3\}}$ and $\mathcal{D}_{\{C_1, C_2, C_3\}}$ such that $\mathcal{D}_{\mathcal{C}}$ is a union of all the clusters in the set $\mathcal{C}$. We report the C-index and Adjusted Rand index for the proposed method and the baselines on these 3 simulated datasets in Table \ref{tab:results-sim}.

\paragraph{Friendster dataset.}
Friendster dataset consists of around 15 million users with 335 million friendship links in the Friendster online social network.
Each user has profile information such as age, gender, marital status, occupation, and interests. 
Additionally, there are user comments on each other's profile pages with timestamps that indicate activity in the site.

In our experiments, we only use data from March 2002 to March 2008, as after March 2008 Friendster's monthly active users have been significantly affected with the introduction of ``new Facebook wall'' \citep{brunowsdm}. 
From this, we only consider a subset of 1.1 million users who had participated in at least one comment, and had specified their basic profile information like age and gender. We will make our processed data available to the public at \textit{location} (anonymized). We use each user's profile information (like age, gender, relationship status, occupation and location) as features. We convert the nominal attributes to one-hot encoded features. We compute the summary statistics of the user's activity over the initial $\tau{=}5$ months such as the number of comments sent and received, number of individuals interacted with, etc. In total, we construct 60 numeric features that are used for each of the models in our experiments. \\

\paragraph{MIMIC~III dataset.}
MIMIC~III dataset consists of patients compiled from two ICU databases: CareVue and Metavision. The dataset consist of 50,000 patients with a total of 330 million different measurements recorded. We mainly use three tables in the dataset: \texttt{PATIENTS}, \texttt{ADMISSIONS}, \texttt{CHART\_EVENTS}.  \texttt{PATIENTS} table consists of demographic information about every patient such as gender, date of birth, and ethnicity. \texttt{ADMISSIONS} table consists of the admission and discharge times of the patients. A patient can have multiple admissions; in our experiments, we only consider their longest stay. Finally, \texttt{CHART\_EVENTS} table records all the measurements of the patients taken during their stay such as daily weight, heart rate and respiratory rate. The measurements of the patients are shifted by an unknown offset (consistent across measurements for a single patient) for anonymity, hence all times are relative to the patient.

Each entry in the \texttt{CHART\_EVENTS} table has a column for item identifier that specifies the measured item (like heart rate). Since the dataset is compiled from different databases, same item can have multiple identifiers. 
We use the following time series for features: `Peak Insp. Pressure', `Plateau Pressure', `Respiratory Rate',
`Heart Rate', `Mean Airway Pressure', `Arterial Base Excess',
`BUN', `Creatinine', `Magnesium', `WBC' and `Hemoglobin'. We observe each of these time series for $\tau=24$ hours from the patient's admission to the ICU, and compute summary statistics such as number of observations, mean, variance, mean inter-arrival times, etc. Complete processing script is provided in the supplementary material.

\subsection{Metrics}
Here we describe the metrics used for evaluating the clusters obtained from the methods.
\paragraph{Concordance Index.}
Concordance index or C-index \citep{harrell1982evaluating} is a commonly used metric in survival applications \citep{alaadeep,luck2017deep} to quantify a model's ability to discriminate between \subjects with different lifetimes. It calculates the fraction of pairs of \subjects for which the model predicts the correct order of survival while also incorporating censoring. Concordance index can be seen as a generalization of AUROC (Area Under ROC curve) and can be interpreted in a similar fashion, i.e., a C-index score of 1 denotes perfect predictions as compared to a C-index score of 0.5 for random predictions.

\paragraph{Brier Score.}
Brier score \citep{brier1950verification,graf1999assessment} is a quadratic scoring rule, computed as follows for survival applications with random censoring,
\begin{align*}
B(u) = \frac{1}{\tmax+1}\sum_{t=0}^{\tmax} (\one[T^{(u)} > t] - S_{k_u^*}(t))^2 \:,
\end{align*}
where $T^{(u)}$ is the true lifetime of \subject $u$. $k_u^*$ is the cluster assigned to \subject $u$, and $S_k(t)$ is the empirical survival CCDF of cluster $k$. Brier score is 0 for perfect predictions and 0.25 for random predictions.

\paragraph{Logrank Test Score.}
Logrank test \cite{mantel1966evaluation} is a non-parametric hypothesis test that is used to compare survival distributions. The null hypothesis for the test is defined as $H_0: f_1(t) = \ldots = f_K(t)$, i.e., the $K$ groups have the same survival distribution. It is most appropriate when the observations are censored and the censoring is independent of the events. High values of the logrank statistic denote that it is unlikely that the $K$ groups have the same survival distribution. 

\paragraph{Adjusted Rand index.}
The Rand index \citep{rand1971objective}  is a measurement of cluster agreement, compared to the ground truth clustering (if available). The adjusted version \citep{hubert1985comparing} is a corrected-for-chance version of the Rand index, which assigns a score of 0 for the expected Rand index, obtained by random cluster assignment. The maximum value of the Adjusted Rand index is 1. We use this metric only for the toy dataset where we have the ground truth cluster assignments available.

\subsection{Neural Network Hyperparameters} \label{subsec:nnarch}

\begin{table}[H]
	\centering
	\small
	\begin{tabular}{lcH}
		\toprule
		Parameter & Values & Default \\
		\midrule
		nHiddenLayers & [1, 2, 3] & 1\\
		nHiddenUnits & [128, 256] & 128\\
		Minibatch Size & [128, 256, 1024] & 1028\\
		Learning Rate & [$10^{-3}$, $10^{-2}$] & $10^{-2}$\\
		Activation & [Tanh, ReLU] & ReLU\\
		Batch normalization & [True, False] & False \\
		L2 Regularization & [$10^{-2},\ 0$] & $10^{-2}$\\
		\bottomrule
	\end{tabular}
	\caption{Different neural network hyperparameters for the proposed approach used in our experiments.  The best set of hyperparameters was chosen based on validation performance. \label{tab:nnparams}}
\end{table}

\section{Related Work}
\noindent
{\bf Traditional survival analysis models.} The Cox regression model \citep{coxproportionalhazardsmodel} is a widely used method in survival analysis to estimate the hazard function 
$\lambda^{(u)}(t) = \frac{dF^{(u)}(t)}{S^{(u)}(t)}$,
where $dF^{(u)}$ is the probability density of $F^{(u)}$. 
The hazard function is then estimated using the covariates, $X^{(u)}$, of a \subject $u$. The hazard function has the form, 
$
\lambda(t | X^{(u)}) = \lambda_0(t) \cdot e^{\{\beta^T X^{(u)}\}},$
where $\lambda_0(t)$ is a base hazard function common for all \subjects, and $\beta$ are the regression coefficients. 
The model assumes that the ratio of hazard functions of any two \subjects is constant over time. This assumption is violated frequently in real-world datasets \citep{li2015statistical}. A near-extreme case when this assumption does not hold is shown in \cref{fig:examplecurves1}, where the survival curves of two groups of \subjects cross each other. 

Majority of the work in survival analysis has dealt with the task of predicting the survival outcome especially when the number of features is much higher than the number of subjects \citep{predictsurvival1,predictsurvival2,predictsurvival3,predictsurvival4}. A number of approaches have also been proposed to perform feature selection in survival data~\citep{featureselection1,featureselection2}. 
In the social network scenario,~\citet{whenwillithappen} predicts the relationship building time, that is, the time until a particular link is formed in the network. \citet{alaadeep} proposed a nonparametric Bayesian approach for survival analysis in the case of more than one competing events (multiple diseases). They not only assume the presence of termination signals but also the type of event that caused the termination.

\noindent
{\bf Survival (or lifetime) clustering.} There have been relatively fewer works that perform lifetime clustering.
Many unsupervised approaches have been proposed to identify cancer subtypes in gene expression data without considering the survival outcome \citep{uncluster1,uncluster2}. Traditional semi-supervised clustering methods \citep{supcluster1,supcluster2,supcluster3,supcluster4} do not perform well in this scenario since they do not provide a way to handle the issues with right censoring. 
\citet{bair} proposed a semi-supervised method for clustering survival data in which they assign Cox scores \citep{coxproportionalhazardsmodel} for each feature in their dataset and considered only the features with scores above a predetermined threshold. Then, an unsupervised clustering algorithm, like k-means, is used to group the individuals using only the selected features. 
Such an approach can miss out on clusters when the features are weakly associated with the survival outcome since such features are discarded immediately after the initial screening.
In order to overcome this issue, 
\citet{ssc} proposes {\em supervised sparse clustering} as a modification to the sparse clustering algorithm of \citet{sparseclustering}. The sparse clustering algorithm has a modified $k$-means score that uses distinct weights in the feature set;
it initializes these feature weights using Cox scores \citep{coxproportionalhazardsmodel} and optimizes the same objective.

\citet{bair} and \citet{ssc}  assume the presence of termination signals. Additionally, there is a disconnect between the use of survival outcomes and the clustering step in these algorithms. 
In this paper, we provide a loss function that quantifies the divergence between lifetime distributions of the clusters, and we minimize said loss function using a neural network in order to obtain the optimal clusters. 

\noindent
{\bf Divergence measures.} Since we need to optimize the divergence between two distributions using a finite sample of training data, this problem is related to two-sample tests in statistics.  
Recently, there is growing interest in loss functions for comparing two distributions, including the Wasserstein distance~\citep[p.\ 420]{Dudley2002} and the Maximum Mean Discrepancy (MMD)~\citep{Gretton2009b,Fukumizu2008}, specifically in the context of training generative adversarial networks (GANs)~\citep{Arjovsky2017,li2015generative}. 

Traditional two-sample tests, such as the Kolmogorov-Smirnov (KS) test, are generally disregarded in learning tasks due to their alternating infinite series that makes gradient computations computationally expensive.
The alternative state-of-the-art divergence approach that seems most suited to our problem is MMD~\cite{Gretton2009b}, unfortunately, as seeing in our results, they can be less resilient to small sample sizes in empirical distributions.

\noindent
{\bf Deep survival methods.} Many deep learning approaches \cite{luck2017deep,katzman2018deepsurv,lee2018deephit,ren2018deep} have been proposed for predicting the lifetime distribution of a \subject given her covariates, while effectively handling censored data that typically arise in survival tasks.
DeepHit \cite{lee2018deephit} introduced a novel architecture and a ranking loss function in addition to the log-likelihood loss for lifetime prediction in the presence of multiple competing risks. 
Using a log-likelihood loss similar to DeepHit, \citet{ren2018deep} propose a recurrent architecture to predict the survival distribution that captures sequential dependencies between neighbouring time points.  
To the best of our knowledge, there hasn't been a work on using neural networks for a lifetime clustering task. 

\noindent
{\bf Frailty analysis.} Extensive research has been done on what is known as frailty analysis, for predicting survival outcomes in the presence of clustered observations \citep{hougaard1995frailty,chuang2005frailty,huang2002frailty}. Although frailty models provide more flexibility in the presence of clustered observations, they do not provide a mechanism for obtaining the clusters themselves, which is our primary goal.



\end{document}